\documentclass[10pt]{article}

\usepackage[margin=1in]{geometry}
\usepackage{natbib}
\usepackage{parskip}

\usepackage{algorithm, algorithmicx, algpseudocode}
\usepackage{amssymb}
\usepackage{amsmath}
\usepackage{graphicx}
\usepackage{mathtools}
\usepackage{needspace} 
\usepackage{multicol}
\usepackage{listings}
\usepackage{amsthm}
\usepackage[shortlabels]{enumitem}
\usepackage[italicdiff]{physics}
\usepackage{cancel}
\usepackage[dvipsnames]{xcolor}
\usepackage{verbatim}
\usepackage{comment}
\usepackage{array}
\usepackage{tikz-cd}
\usepackage{import}
\usepackage{booktabs}
\usepackage{comment}
\usepackage{array}
\usepackage[colorlinks, linkcolor=blue, citecolor=ForestGreen]{hyperref}
\usepackage{forest}
\usepackage{float}
\usepackage{xspace}
\usepackage{aliascnt}
\usepackage{wrapfig}
\usepackage{thm-restate}
\usepackage{adjustbox}
\usepackage{bm}
\usepackage{bbm}
\usepackage{complexity}
\usepackage{makecell}
\usepackage{nicefrac}
\usetikzlibrary{calc, positioning, external, arrows.meta}

\allowdisplaybreaks

\DeclareMathOperator*{\argmax}{argmax}

\let\cite\citep

\newfloat{protocol}{tbp}{lop}
\floatname{protocol}{Protocol}

\newtheorem{theorem}{Theorem}
\numberwithin{theorem}{section}
\newtheorem*{theorem*}{Theorem}

\newtheorem{assumption}[theorem]{Assumption}

\newtheorem{remark}[theorem]{Remark}

\theoremstyle{definition}
\newtheorem{definition}[theorem]{Definition}

\def\vg{{\bm{g}}}

\def\vr{{\bm{r}}}

\def\vx{{\bm{x}}}

\usetikzlibrary{calc, positioning, external, arrows.meta}
\usepackage[most]{tcolorbox}
\colorlet{lightgreen}{green!20}
\tcbset{on line, 
        boxsep=1pt, left=0pt,right=0pt,top=0pt,bottom=0pt,
        colframe=white,colback=lightgreen,  
        highlight math style={enhanced}
        }

\usepackage[short, nocomma]{optidef} 

\usepackage{tcolorbox}

\allowdisplaybreaks

\definecolor{lightblue}{rgb}{0.8, 0.9, 1}
\definecolor{lightred}{rgb}{1, 0.8, 0.8}
\definecolor{lightgreen}{rgb}{0.8, 1, 0.8}

\let\cite\citep

\newfloat{protocol}{tbp}{lop}
\floatname{protocol}{Protocol}

\newcommand{\zetavec}{\boldsymbol{\zeta}}

\newcommand{\circled}[1]{\tikz[baseline=(char.base)]{
    \node[shape=circle,draw,inner sep=1pt] (char) {#1};}}

\newcommand{\ReprMean}{\texttt{RepMean}}
\newcommand{\XOptSafe}{\underline{\mathcal{X}}}
\newcommand{\XPessSafe}{\overline{\mathcal{X}}}

\newcommand{\lambdaMin}{\lambda_{\textnormal{min}}}

\newcommand{\algname}{\textnormal{\ttfamily\hyphenchar\font=`\-DEBORA}\xspace}
\newcommand{\algnamesoft}{\textnormal{\ttfamily\hyphenchar\font=`\-DEBORA-S}\xspace}
\newcommand{\algnamehard}{\textnormal{\ttfamily\hyphenchar\font=`\-DEBORA-H}\xspace}

\usepackage{caption}
\makeatletter
\renewcommand\thanks[1]{%
  \footnotemark%
  \protected@xdef\@thanks{\@thanks
    \protect\footnotetext[\the\c@footnote]{#1}}%
}
\makeatother

\title{Replicable Constrained Bandits}
\author{
Matteo Bollini\thanks{Equal contribution.} \\
    \texttt{matteo.bollini@polimi.it} \\
    Politecnico di Milano 
\and
Gianmarco Genalti\footnotemark[1] \\
    \texttt{gianmarco.genalti@polimi.it} \\
    Politecnico di Milano 
\and
    Francesco Emanuele Stradi\footnotemark[1] \\
    \texttt{francescoemanuele.stradi@polimi.it} \\
    Politecnico di Milano 
\and
    Matteo Castiglioni \\
    \texttt{matteo.castiglioni@polimi.it} \\
    Politecnico di Milano
\and
Alberto Marchesi \\
    \texttt{alberto.marchesi@polimi.it} \\
    Politecnico di Milano 
\and
}
\date{\today}

\begin{document}

\maketitle

\begin{abstract}
 Algorithmic \emph{replicability} has recently been introduced to address the need for reproducible experiments in machine learning.
 A \emph{replicable online learning} algorithm is one that takes the same sequence of decisions across different executions in the same environment, with high probability. We initiate the study of algorithmic replicability in \emph{constrained} MAB problems, where a learner interacts with an unknown stochastic environment for $T$ rounds, seeking not only to maximize reward but also to satisfy multiple constraints. Our main result is that replicability can be achieved in constrained MABs. 
 Specifically, we design replicable algorithms whose regret and constraint violation match those of non-replicable ones in terms of $T$.
 As a key step toward these guarantees, we develop the first replicable UCB-like algorithm for \emph{unconstrained} MABs, showing that algorithms that employ the optimism in-the-face-of-uncertainty principle can be replicable, a result that we believe is of independent interest.
\end{abstract}

\section{Introduction}
\label{sec:introduction}



To produce scientific findings that are valid and reliable, the reproducibility of the experimental process is vital. However, in a 2016 survey involving 1{,}500 researchers, it was reported that more than 70\% of respondents had attempted but failed to replicate another researcher’s findings~\citep{baker20161}. A more recent article~\citep{ball2023ai} discussed how recent trends in AI research are creating a replicability crisis, which may have disruptive consequences in several scientific fields that are increasingly relying on AI tools. Indeed, major AI conferences have already taken some countermeasures to respond to this crisis (see, \emph{e.g.}, the NeurIPS 2019 Reproducibility Program~\citep{pineau2021improving}).

The concept of \emph{algorithmic replicability} in machine learning has recently been introduced to address reproducibility issues within a principled framework (see, \emph{e.g.},~\citep{impagliazzo2022reproducibility,ghazi2021user,ahn2022reproducibility}). Reproducibility in \emph{online learning}, which is the focus of this paper, has been addressed by only a few works~\citep{esfandiari2022replicable,ahmadi2024replicable,komiyama2024replicability}. 
A \emph{replicable online learning algorithm} is one that takes the \emph{same} sequence of decisions over the rounds across different executions in the same environment, with high probability.
Previous works have only considered replicability in the classic \emph{multi-armed bandit} (MAB) problem, where the learner faces a sequence of decisions over $T$ rounds. At each round $t$, the learner chooses an action $a_t$ from a finite set of $K$ available actions and then receives a reward $r_t(a_t)$ for the selected action. The learner's goal is to maximize the cumulative reward or, equivalently, to minimize the \emph{regret} with respect to always playing an optimal action in hindsight.
Perhaps surprisingly, previous results~\citep{esfandiari2022replicable,ahmadi2024replicable} showed that, in stochastic MABs, not only do replicable algorithms exist, but they also achieve regret that is almost the same as that of non-replicable algorithms in terms of $T$.

A recent and growing line of research in online learning is concerned with \emph{constraints} (see, \emph{e.g.},~\citep{Mannor, castiglioni2022online,pacchiano,bounded}). In a \emph{constrained} MAB, the learner is subject to $m$ cost constraints and, at each round $t$, also observes a cost $g_{t,i}(a_t)$ associated with each constraint $i$. The learner's goal thus becomes twofold: minimizing the regret while simultaneously ensuring that the constraints are satisfied.
Constrained MABs find application in several real-world scenarios of interest, where it is often the case that additional requirements need to be satisfied during learning. For instance, this is the case in online auctions~\citep{badanidiyuru2018bandits}---where a bidder must \emph{not} deplete the budget---wireless communication~\citep{Mannor}, and safety-critical systems~\citep{amodei2016concrete}.
Despite this, replicability in constrained MAB problems is still unexplored, to the best of our knowledge.

In this paper, we initiate the study of \emph{algorithmic replicability in stochastic constrained} MABs addressing the following question,
    \emph{``are there replicable algorithms that (almost) match the guarantees achieved by non-replicable ones?''}
%
We answer the question affirmatively, as described next.

\subsection{Original Contributions}
The main challenge in designing replicable algorithms for stochastic constrained MABs is that the state-of-the-art replicable algorithms for unconstrained settings~\citep{esfandiari2022replicable,komiyama2024replicability} cannot be easily adapted to handle constraints. This is because such algorithms implement complex action-elimination procedures. 
On the one hand, action-elimination approaches do not readily generalize to settings in which the learner must optimize over arbitrary polytopes contained in the simplex, and hence must select randomized strategies (rather than deterministic actions) throughout the learning process. This is the case of constrained MABs, where the safety constraints restrict the decision space to a generic polytope and the optimal solution is generally a non-uniform \emph{distribution} over actions.
On the other hand, those procedures generally fail to cope with the tension among multiple objectives that is inherently present in constrained settings. 
Intuitively, since constraint estimation can potentially classify high-reward action-selection strategies as infeasible, reward-based action elimination could prematurely discard optimal strategies. Even if such strategies were reintroduced later, they would still be underexplored, thus invalidating the standard analysis of arm-elimination algorithms, which relies on the uniform exploration of the active decision space.

As a first step toward answering our research question, we design a novel replicable algorithm for learning in stochastic (unconstrained) MABs. This is the first of its kind to be based on a UCB-like approach, which is crucial for extending the algorithm to the constrained case. The algorithm, called \algname{}, is based on a \emph{dynamic epoch-based} approach. At the beginning of each epoch $h$, the algorithm selects an action using a UCB-style rule and plays it for all the rounds of the epoch. The epoch ends when the selected action has doubled the number of rounds in which it has been chosen. Crucially, the UCB-style action selection is performed using suitable \emph{replicable estimators} and their confidence bounds~\citep{impagliazzo2022reproducibility}. The dynamic nature of \algname{} allows it to go beyond classic epoch-based approaches in which epochs are \emph{fixed a priori}, thereby avoiding the use of an arm-elimination approach.
Remarkably, the regret guarantees of \algname{} match, up to constant factors, those of state-of-the-art replicable algorithms~\citep{esfandiari2022replicable,komiyama2024replicability}, without resorting to action elimination. This may be of interest not only for constrained problems, but also for all settings in which a UCB-like approach is desirable over action elimination (such as, \emph{e.g.}, non-stationary settings).

%

Our results for constrained MABs are concerned with two classic settings, called the \emph{soft} and \emph{hard} constraints settings (see, \emph{e.g.},~\cite{gangrade2024safe,pacchiano}).

In the \emph{soft} constraints setting, the goal is to minimize the regret while controlling the \emph{cumulative positive violation} of the constraints. For this setting, we propose an extension of \algname{}, called \algnamesoft{}, which works similarly but, at each epoch, employs an action-selection (possibly randomized) strategy chosen via a UCB-style approach from a subset of strategies that \emph{optimistically} satisfy the constraints. Such a subset is computed using replicable estimators of the constraint costs. We prove that \algnamesoft{} attains both regret and constraint violation that grow as $\widetilde{\mathcal{O}}(\sqrt{T})$ in $T$.\footnote{ The $\widetilde{\mathcal{O}}(\cdot)$ notation hides poly-logarithmic factors.}

In the \emph{hard} constraints setting, the goal is to minimize the regret while guaranteeing that the constraints are satisfied at every round. The algorithm for this setting, called \algnamehard{}, builds on top of \algnamesoft{} by only assuming access to a suitable \emph{strictly} constraint-satisfying strategy $\vx^\diamond$, as is standard in the literature (see, \emph{e.g.},~\citep{pacchiano}). The approach of \algnamesoft{} may fail in the hard constraints setting, since the strategy selected by the algorithm during an epoch may violate some constraints throughout all the rounds of the epoch. \algnamehard{} resolves this issue by mixing such a strategy with $\vx^\diamond$, in order to balance constraint satisfaction and optimism. The regret of \algnamehard{} grows as $\widetilde{\mathcal{O}}(\sqrt{T})$ in $T$, while satisfying the constraints at every round with high probability.

In conclusion, in both the \emph{soft} and \emph{hard} constraints settings, the regret guarantees of our algorithms match, in terms of $T$, those of state-of-the-art non-replicable algorithms~\citep{gangrade2024safe,pacchiano}. This shows that replicability comes at almost no cost in constrained MAB problems, as is the case for unconstrained ones.

\subsection{Related Works}
Recent works have addressed the problem of developing replicable regret minimization strategies in \emph{unconstrained} settings, such as, in the standard MAB problems. We summarize them in the following.

%
\citep{esfandiari2022replicable} is the first work to formally introduce the notion of \textit{replicable bandit algorithm}. There, the authors focus on three stochastic MAB problems: the standard one, linear bandits, and bandits with an infinite action set. For the stochastic MAB problem, they introduce two algorithms which rely on arm elimination strategies. 
While the first algorithm is simpler and more intuitive, the second attains better theoretical guarantees. Their best algorithm is guaranteed to suffer a cumulative regret of at most $\mathcal{O}(\frac{K^2}{\rho^2}\sum_{a \neq a^*} \frac{\ln T}{\Delta(a)})$,  where $\Delta(a)$ is the sub-optimality of action $a$ w.r.t. to the optimal one $a^*$. This bound can be easily converted to a worst-case form of $\widetilde{\mathcal{O}}(\rho^{-1}K^\frac{3}{2}\sqrt{T})$.
Subsequently, \citet{komiyama2024replicability} provide two new algorithms to improve the instance-dependent guarantees in both the multi-armed and the linear bandit problem. In particular, the authors provide an algorithm based on arm elimination called \texttt{RSE}, which is guaranteed to suffer a cumulative regret of at most $\mathcal{O}(\sum_{a \neq a^*} \frac{\ln T}{\Delta(a)} + \frac{K^2\ln \ln T}{\rho^2 \Delta(a)}  )$, improving the previous result from \cite{esfandiari2022replicable} by decoupling the standard MAB regret from an additional contribution due to the replicability, which is of lower order in $T$.
\cite{ahmadi2024replicable} generalizes the replicable bandit problem by allowing for arbitrary sequences of reward distributions. In fact, this setting expands the concept of replicability from stochastic bandits to their adversarial counterparts. They establish a regret lower bound of $\Omega(\rho^{-1}\sqrt{T})$ for both settings, but only provide algorithmic contributions for the \textit{experts} setting.

Two additional, closely related lines of work are the literature on constrained MABs~\citep{pacchiano,Unifying_Framework} and on constrained MDPs~\citep{Altman1999ConstrainedMD,Exploration_Exploitation}. In both these frameworks, two kinds of results are generally presented. On the one hand, some works focus on guaranteeing per-round satisfaction of the constraints, by assuming the knowldege of a strictly safe strategy $\vx^\diamond$~\citep{pacchiano,bounded}. At the same time, those works attains sublinear regret of order $\widetilde{\mathcal{O}}(\nicefrac{1}{\lambda}\sqrt{T})$, where $\lambda$ measure strictly safety of $\vx^\diamond$, which is tight~\citep{stradi2024learning}. On the other hand, many works focus on guaranteeing $\widetilde{\mathcal{O}}(\sqrt{T})$ regret allowing the algorithm to violate the constraints sublinearly, that is, attaining $\widetilde{O}(\sqrt{T})$ violation of the constraints~\citep{Exploration_Exploitation, Unifying_Framework}. Notice that, in this setting, the $1/\lambda$ dependence in the regret bound can be removed. 
To the best of our knowledge, this is the first work focusing on both constraints and replicability.

Due to space constraints, we refer to Appendix~\ref{App:related} for the complete discussion on constrained online learning.
\section{Preliminaries}\label{sec:preliminaries}
\label{sec:preliminaries}
In this section, we review the key definitions and fundamental technical tools for constrained MAB problems, as well as the notion of replicability in learning.

\subsection{Constrained MABs}

In a \emph{constrained} MAB, a learner faces a decision problem involving $K \in \mathbb{N}_{>0}$ actions over $T \in \mathbb{N}_{>0}$ rounds.
At each round $t \in [T]$,\footnote{We let $[n] \coloneqq \{1,2,\dots,n\}$ be the first $n$ natural numbers.} the learner selects an action $a_t \in [K]$ (possibly at random) and observes a reward $r_t(a_t) \in [0,1]$.  
Furthermore, the learner also observes a cost $g_{t,i}(a_t) \in [0,1]$ for each constraint $i \in [m]$.
We consider a stochastic setting in which, for each action $a \in [K]$ and each constraint $i \in [m]$, rewards and costs are independently drawn from fixed distributions $\mathcal{R}_a$ and $\mathcal{G}_{a,i}$, respectively.
The means of these distributions are given by the unknown vectors
$\vr$ and $\vg_i$, for all $i \in [m]$.  
We will refer to their $a$-th element as $\vr(a)$ and $\vg_i(a)$, respectively.

At each $t \in [T]$, the learner selects an action $a_t \in [K]$ at random according to a probability distribution $\vx_t \in \Delta_K$,\footnote{We denote by $\Delta_n$ the $(n-1)$-dimensional simplex, namely, we let $\Delta_n \coloneqq \{ \vx \in \mathbb{R}^n_{\geq 0} \mid \mathbf{1}^\top \vx = 1 \}$.} referred to as the learner's \emph{strategy}.
The learner chooses $\vx_t$ by employing a learning algorithm, based on the history of their observations up to round $t-1$.
We say that a constraint $i \in [m]$ is \emph{satisfied} by a learner's strategy $\vx \in \Delta_K$ if its expected cost does \emph{not} exceed a known threshold $\alpha_i \in [0,1]$, \emph{i.e.}, if $\vg_i^\top \vx \le \alpha_i$.
When a strategy satisfies all the $m$ constraints, we say that it is \emph{safe} for the learner.

As is customary in the literature (see, \emph{e.g.},~\citep{Unifying_Framework}), we measure the performance of a learning algorithm over $T$ rounds in terms of the regret with respect to an optimal \emph{safe} strategy $\vx^\star \in \Delta_K$, which is formally defined as follows:
\begin{equation*}
	\arraycolsep=1.4pt
\begin{array}{*3{>{\displaystyle}l}} 
	\vx^\star \in &\argmax_{\vx \in \Delta_K} & \, \vr^\top \vx \quad \text{s.t.} \\
	& &\vg_i^\top \vx \le \alpha_i \quad \forall i \in [m].
\end{array}
\end{equation*}
The \textit{expected cumulative regret} (hereafter simply called \emph{regret} for short) is then formally defined as follows:
\[
R_T \coloneqq \sum_{t\in[T]} \vr^\top \vx^\star - \sum_{t \in [T]} \vr^\top \vx_t.
\]
To evaluate the performance of a learning algorithm in terms of constraint satisfaction, two different settings are typically considered (see, \emph{e.g.},~\citep{gangrade2024safe,pacchiano}), which we describe below.

\paragraph{Soft Constraints} In the \emph{soft constraints} setting, the constraint satisfaction of a learning algorithm is measured by the cumulative \emph{positive} constraint violation over the $T$ rounds (called \emph{violation} for short), which is defined as:
\[
V_T \coloneqq  \max_{i \in [m]} \sum_{t \in [T]} \max\left\{0,\vg_i^\top \vx_t -\alpha_i\right\}.
\]
Notice that a negative constraint violation $\vg_i^\top \vx_t - \alpha_i < 0$ at round $t$ (\emph{i.e.}, a satisfaction of the constraint) is capped at $0$, so that it does \emph{not} cancel out a positive violation incurred in another round.
%
%
In this setting, the goal is to design learning algorithms that achieve both sublinear regret and sublinear violation, \emph{i.e.}, $R_T = o(T)$ and $V_T = o(T)$.

\paragraph{Hard Constraints} In the \emph{hard constraints} setting, the goal is to design learning algorithms 
that satisfy the constraints at every round $t \in [T]$ with high probability, while simultaneously attaining sublinear regret. Clearly, the initial rounds of learning are the most critical, since no information about the environment is available and, accordingly, it is \emph{not} possible to select a safe strategy. For this reason, hard constraints settings can be addressed only under the following 
two (necessary) assumptions, which are common in the literature 
(see, \emph{e.g.},~\citep{pacchiano,bounded}). Intuitively, these assumptions are needed to guarantee a minimum amount of exploration in the first rounds.
\begin{assumption}[Slater's condition]\label{ass:Slater}
    There exists a strictly feasible strategy $ \vx^{\diamond} \in \Delta_K:\vg_i^\top \vx^\diamond<  \alpha_i$ for all $i\in[m]$.
\end{assumption}
%
%
\begin{assumption}\label{ass:knowldege}
    The learner knows the strictly feasible strategy $\vx^\diamond \coloneqq\argmax_{\vx\in\Delta_K}\min_{i \in [m]} \{\alpha_i-\vg_i^\top\vx\}$ and its costs $\lambda_i\coloneqq\vg_{i}^\top\vx^\diamond$.\footnote{Previous works (see, \emph{e.g.},~\citep{pacchiano,bounded,safetree}) usually assume knowledge of a generic strictly feasible strategy. For ease of presentation, in this work we assume knowledge of a strategy that satisfies the constraints as much as possible. Nonetheless, our results can be generalized to the case commonly considered in the literature.}
    We also let $\lambdaMin\coloneqq \min_{i\in[m]} \left\{\alpha_i-\lambda_i\right\}$ be the margin by which $\vx^\diamond$ satisfies the constraints.
\end{assumption}

\subsection{Replicability in Learning}

Our main goal in this paper is to develop \emph{replicable} learning algorithms for constrained MABs. Intuitively, a replicable algorithm should suggest the same sequence of decisions across different realizations of the stochastic events when interacting with the same environment. In particular, a replicable learning algorithm takes as input a source of internal randomness $\xi$ and, at each $t \in [T]$, outputs a strategy $\vx_t \in \Delta_K$ based on the history of the learner's observations up to round $t-1$---including selected actions and collected rewards and costs. Moreover, the selected action $a_t \sim \vx_t$ is sampled using the same internal randomness $\xi$. We adopt the definition of replicability of~\citet{esfandiari2022replicable}.
%
%
\begin{definition}[$\rho$-Replicable algorithm]
	Fix $\rho \in (0,1)$. We say that a learning algorithm is \emph{$\rho$-replicable} if, for any two runs of the algorithm on the same constrained MAB instance, when given as input the same source of internal randomness $\xi$, the following condition holds:
	\begin{equation*}
		\mathbb{P}_\xi\left( (\vx^1_1,\vx^1_2,\dots, \vx^1_T) = (\vx^2_1,\vx^2_2,\dots, \vx^2_T)  \right) \ge 1-\rho,
	\end{equation*}
	where $\vx^i_t \in \Delta_K$ denotes the strategy selected by the algorithm at round $t \in [T]$ during the $i$-th execution.
\end{definition}

\begin{algorithm}[!htp]\caption{Replicable Mean Estimator (\texttt{RepMean})}
    \begin{algorithmic}[1]
    \Require $\mathcal{S} \in [0,1]^n$, $0 < 2\delta < \rho < 1$, $\xi$
    \State $\alpha \gets \frac{2\delta}{1+\rho-2\delta}$ \label{line:alpha}
    \State $\alpha_{\text{off}} \sim \mathcal{U}_y\left([0,\alpha]\right)$ \label{line:alpha_off}
    \State $R \gets \{[0, \alpha_{\text{off}} ), [ \alpha_{\text{off}}, \alpha_{\text{off}}+\alpha),\ldots,[\alpha_{\text{off}}+k \alpha , 1) \}$ \label{line:R}
    \State $\bar{r} \gets \frac{1}{n}\sum_{i = 1}^n \mathcal{D}_i$ \label{line:bar_r}
    \State $z \gets \lfloor \max\{\frac{\bar{r}-\alpha_{\text{off}}}{\alpha},0\}\rfloor$ \label{line:z}
    \State $\widehat{r} \gets \min\{\alpha_{\text{off}} +\left(z+\frac{1}{2}\right)\alpha, 1\}$ \label{line:hat_r}
    \State \label{line:return} \textbf{Return} $\widehat{r}$ 
    \end{algorithmic}
    \label{alg:rep_mean}
\end{algorithm}

\begin{remark} 
By fixing the algorithm's internal randomness, we ensure that also the actions selected across two independent runs coincide with high probability.  In particular, the two actions $a_t^1 = a_t^2$ when $a_t^1 \sim \vx_t^1$ and $a_t^2 \sim \vx_t^2$.
Let us also remark that fixing the internal randomness of the algorithm is standard in the literature on replicable learning~\citep{impagliazzo2022reproducibility}.
\end{remark}

\subsubsection{Replicable Mean Estimation}

In Algorithm~\ref{alg:rep_mean}, we provide the pseudocode of \texttt{RepMean}, namely, the algorithm originally proposed by \citet{impagliazzo2022reproducibility} to compute a $\rho$-replicable mean estimator. \texttt{RepMean} requires as input a set $\mathcal{S}$ of $n$ samples drawn from a random variable with support in $[0,1]$, a tolerance $\delta \in (0, 1)$, a replicability parameter $\rho \in (2\delta,1)$, and a source of randomness $\xi$. The algorithm works by discretizing the interval $[0,1]$ into a finite set of possible outputs. To do so, it computes the size $\alpha$ of every interval (Line~\ref{line:alpha}), a randomized offset $\alpha_{\text{off}}$ (Line~\ref{line:alpha_off}), and the resulting grid $R$ (Line~\ref{line:R}). Then, \texttt{RepMean} computes the sample average $\bar{r}$ given $\mathcal{S}$ (Line~\ref{line:bar_r}) and identifies the index of the cell of the grid where it belongs (Line~\ref{line:z}). Finally, the algorithm returns the middle point of the same cell as output (Line~\ref{line:hat_r}).

We remark that $\alpha_{\text{off}}$ is chosen (at random) according to the internal randomness $\xi$, which ensures replicability.
Formally, the following holds.

\begin{restatable}{lemma}{restReplEst}[\citep[Theorem 2.3]{impagliazzo2022reproducibility}]
    \label{lem:replicable_estiamtor}
	Let $\mathcal{D}$ be a distribution with support in $[0,1]$ and mean $r$, and let $\epsilon, \rho', \delta' \in [0,1]$ with $\rho' > 2\delta'$.
	Then, there exists a $\rho'$-replicable algorithm that, given a set $\mathcal{S}$ of $\frac{2\log(\nicefrac{2}{\delta'})}{\epsilon^2(\rho'-2\delta')^2}$ i.i.d. samples drawn from $\mathcal{D}$, outputs an estimator $\widehat{r}$ such that~$\mathbb{P}(|\widehat{r} -r| \le \epsilon) \ge 1 -\delta'$.
\end{restatable}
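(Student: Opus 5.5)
We prove the statement by analysing \texttt{RepMean} (Algorithm~\ref{alg:rep_mean}) with grid width $\alpha$ chosen of order $\epsilon(\rho'-2\delta')$ and Hoeffding precision $\epsilon_0 \coloneqq \epsilon(\rho'-2\delta')/2$. We establish accuracy and replicability separately.

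\textbf{Accuracy.} By Hoeffding's inequality, with $n=\frac{2\log(2/\delta')}{\epsilon^2(\rho'-2\delta')^2}$ samples the empirical mean satisfies $|\bar r-r|\le \epsilon_0$ with probability at least $1-\delta'$. Indeed, $2\exp(-2n\epsilon_0^2)=2\exp(-\log(2/\delta'))=\delta'$. On this event, $\widehat r$ is the midpoint of the grid cell containing $\bar r$, so $|\widehat r-\bar r|\le \alpha/2$. The boundary cells $[0,\alpha_{\text{off}})$ and the truncated last cell, together with the $\min\{\cdot,1\}$ clipping, need a separate check. In each case $\widehat r$ stays within $\alpha$ of $\bar r$ because $r\in[0,1]$. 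Hence $|\widehat r-r|\le \epsilon_0+\alpha\le\epsilon$ once $\alpha\le \epsilon/2$. This holds since $\rho'-2\delta'\le 1$.

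\textbf{Replicability.} Take two independent runs with samples $\mathcal S^1,\mathcal S^2$ and a shared offset $\alpha_{\text{off}}$. By a union bound, with probability at least $1-2\delta'$ both empirical means lie in $[r-\epsilon_0,r+\epsilon_0]$, so $|\bar r^1-\bar r^2|\le 2\epsilon_0$. Conditioned on this event, the two outputs differ only if a grid boundary $\alpha_{\text{off}}+k\alpha$ falls between $\bar r^1$ and $\bar r^2$. Since $\alpha_{\text{off}}$ is uniform on $[0,\alpha]$ and independent of the samples, the boundary positions are uniform modulo $\alpha$. So this happens with probability at most $2\epsilon_0/\alpha$. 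Choosing $\alpha=\epsilon/2$ gives $2\epsilon_0/\alpha=2(\rho'-2\delta')$. To get total failure probability at most $\rho'$, I will instead set $\epsilon_0$ so that $2\epsilon_0/\alpha\le\rho'-2\delta'$. This means rebalancing constants, for example by taking $\alpha=\epsilon$ and making the accuracy bound $\epsilon_0+\alpha/2\le\epsilon$ hold through a slightly finer analysis of the midpoint rounding. The total probability of disagreement is then at most $2\delta'+(\rho'-2\delta')=\rho'$.

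\textbf{Main obstacle.} The hardest step is the constant bookkeeping: a single pair $(\alpha,\epsilon_0)$ must make both the accuracy inequality and the replicability inequality hold with the exact sample size in the statement. The boundary cells also need care. If the constants do not close, I would fall back on the observation that the lemma is cited from \citep[Theorem 2.3]{impagliazzo2022reproducibility}. In that case the argument only needs to match their parametrization, which uses a random threshold rather than the specific line~\ref{line:alpha} choice, and we can simply invoke their result.
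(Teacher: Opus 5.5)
\textbf{The approach is right, but the accuracy step has a gap.} The paper does not prove this lemma; it only cites \citet[Theorem~2.3]{impagliazzo2022reproducibility}. Your randomized-rounding argument is the argument behind that theorem: Hoeffding for each run, a union bound over the two runs, and the bound $|\bar r^1-\bar r^2|/\alpha$ on the probability that a uniformly shifted grid separates the two empirical means. Your replicability half is correct.

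\textbf{Where accuracy fails.} Your final choice $\alpha=\epsilon$, $\epsilon_0=\epsilon(\rho'-2\delta')/2$ needs the deterministic bound $|\widehat r-\bar r|\le\alpha/2$ in \emph{every} cell. You defer this to ``a slightly finer analysis''. Your earlier claim that boundary cells stay within $\alpha$ of $\bar r$ ``because $r\in[0,1]$'' does not give it, and it is false for the algorithm you say you analyse. In Algorithm~\ref{alg:rep_mean} as written, the $\max\{\cdot,0\}$ in Line~\ref{line:z} sends every $\bar r\in[0,\alpha_{\text{off}})$ to $\alpha_{\text{off}}+\alpha/2$, which can be up to $3\alpha/2$ away. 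For instance, let $\mathcal{D}$ be a point mass at $0$ and $\alpha=\epsilon<1$. Then $\widehat r>\epsilon$ whenever $\alpha_{\text{off}}>\epsilon/2$, which happens with probability $1/2>\delta'$. Separately, your opening phrase ``$\alpha$ of order $\epsilon(\rho'-2\delta')$'' must be a slip. With that width, $2\epsilon_0/\alpha$ is a constant and replicability fails; $\alpha$ has to be of order $\epsilon$.

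\textbf{The repair.} Round as in the cited source: return the midpoint of the region of $R$ that actually contains $\bar r$. Treat $[0,\alpha_{\text{off}})$ and the truncated last region as cells in their own right. Every region then has length at most $\alpha$, so $|\widehat r-\bar r|\le\alpha/2$ surely. Your constants then close exactly, and the fallback to citation is unnecessary:
\begin{itemize}
\item Hoeffding with $n=\log(2/\delta')/(2\epsilon_0^2)=\frac{2\log(2/\delta')}{\epsilon^2(\rho'-2\delta')^2}$ gives $|\bar r-r|\le\epsilon_0$ with probability at least $1-\delta'$.
\item Accuracy is then $\epsilon_0+\alpha/2=\frac{\epsilon}{2}(1+\rho'-2\delta')\le\epsilon$.
\item The region boundaries $\{\alpha_{\text{off}}+k\alpha\}$ still form a uniformly shifted lattice independent of the data, so the disagreement probability is at most $2\delta'+2\epsilon_0/\alpha=\rho'$.
\end{itemize}

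\textbf{An alternative parametrization.} Take $\alpha=\frac{2\epsilon}{1+\rho'-2\delta'}$ (the form of Line~\ref{line:alpha}, with the accuracy in the numerator) and $\epsilon_0=\alpha(\rho'-2\delta')/2$. This makes the accuracy inequality tight. It needs only $\frac{(1+\rho'-2\delta')^2\log(2/\delta')}{2\epsilon^2(\rho'-2\delta')^2}$ samples, which is at most the stated count because $1+\rho'-2\delta'\le2$.
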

Lemma~\ref{lem:replicable_estiamtor} implies that, given $n \in \mathbb{N}_{>0}$ i.i.d.~samples from a distribution $\mathcal{D}$ with mean $r$, it is possible to design a $\rho$-replicable estimator $\widehat{r}$ of $r$ with precision $\epsilon = \sqrt{\frac{2\log(\nicefrac{2}{\delta'})}{n(\rho'-2\delta')^2}}$, where $\delta'<\rho'/2$ is the error probability.
\section{Warm-Up: Unconstrained MABs}
\label{sec:unconstrained}

In this section, we introduce \emph{Dynamic Epoch-Based Optimism with ReplicAbility} (\algname),
%
%
a $\rho$-replicable algorithm for unconstrained MABs. The goal of this section is twofold. First, it serves as an introduction to the general algorithmic template that allows us 
to deal with constrained MABs while being $\rho$-replicable. Second, it shows how \algname itself attains regret of the same order as state-of-the-art approaches for unconstrained MABs, while simultaneously avoiding arm-elimination. 

\subsection{Beyond Arm Elimination}
Our algorithm (\algname) leverages the well-known \textit{optimism} principle~\cite{auer2002finite}. Optimistic algorithms are the gold standard for most stochastic MAB problems. Indeed, as discussed in~\cite{esfandiari2022replicable}, a straightforward approach to obtain a $\rho$-replicable no-regret algorithm for stochastic MABs is to plug the replicable estimator from~\cite{impagliazzo2022reproducibility} into the \texttt{UCB1} algorithm of~\cite{auer2002finite} and adjust the confidence intervals accordingly. However, requiring all confidence intervals to be valid at every round, together with the estimators being $\rho$-replicable, leads to a statistical complexity of $K^2 T^2$, which is excessive for any learning algorithm.

The issue described above has been addressed by reducing the number of statistical estimations and arm comparisons to a controlled amount~\cite{esfandiari2022replicable,komiyama2024replicability}. For instance, this can be achieved by partitioning the $T$ rounds into epochs and computing statistics only between consecutive epochs, or by performing an initial exploration phase and then committing to a single action. Since the number of estimations is substantially smaller than the total number of rounds (\emph{e.g.}, logarithmic in $T$), the statistical complexity is therefore controlled.

Existing algorithms all share another common feature: they rely on \textit{eliminating} actions, \emph{i.e.}, comparisons between actions are used to permanently discard certain arms from the sampling pool. This is a direct consequence of adopting the aforementioned epoch-based protocols, which split the learning process into \emph{fixed} phases to guarantee replicability, thus making action elimination approaches the most natural choice. 
Nonetheless, \emph{fixed} phases presents various drawbacks. For instance, they do not work when distributions over actions are selected by the algorithm (as in the case of constrained MABs), since they do not take into account the probability associated to each action. Moreover, since phase lengths increase over time but are \emph{fixed a priori}, a naive UCB-style strategy---which keeps exploring throughout the entire learning process---may fail to guarantee sublinear regret if a suboptimal action is selected at the start of the last (and longest) phases.

Unfortunately, even though action elimination algorithms can attain nearly optimal regret in standard MABs---and can be easily extended to the same \emph{dynamic} epoch-based protocol we describe later---, they become nonviable in more general settings, such as constrained MABs 
and nonstationary MABs, where an action discarded early can later turn out to be optimal. For instance, in the constrained MABs case, action elimination procedures generally fail to cope with the tension among multiple objectives. Indeed, constraints estimation may label high-reward strategies as \emph{non-safe}, whereas reward based elimination may prematurely discard strategies which are truly optimal compared to the other \emph{safe} strategies. Reintroducing such strategies at a later stage does not resolve the issue, since they remain insufficiently explored, thereby breaking the standard action elimination analysis that hinges on (approximately) uniform exploration among active actions. Additionally, action elimination approaches do not easily extend to settings where \emph{strategies} are selected in place of actions, as in the case of constrained MABs where the per-round optimization must performed by the algorithm arbitrary polytopes contained in the simplex.

Crucially, \algname is the first replicable algorithm to achieve nearly optimal regret guarantees while avoiding any arm elimination procedure. Indeed, we show how a \emph{dynamic} epoch-based protocol allows one to employ a UCB-style approach while still guaranteeing replicability.

\subsection{Algorithm}

\begin{algorithm}[!htp]\caption{\algname}
    \begin{algorithmic}[1]
    \Require $T \in  \mathbb{N}_{>0}$, $K > 1$, $0 < 2\delta < \rho < 1, \xi$.
    \State $\delta' \gets \frac{\delta}{2K \lceil\log(T)\rceil}$,
    $\rho' \gets \frac{\rho}{2K \lceil\log(T)\rceil}$ \label{line:unconstrained_init_1}
    \State $h \gets 0$
    \State $N_1(a) \gets 0, \; \widetilde{N}_0(a) \gets 0, \; \widehat{\vr}_0(a) \gets \frac{1}{2} \quad \forall a \in [K]$ 
    \State $a^\circ_0 \gets \argmax_{a \in [K]} \widehat{\vr}_0(a) +\zetavec_0(a)$ \label{line:unconstrained_init_2}
    \For{$t=1,\dots,T$}
        \If{$N_t(a^\circ_h) \ge \max\{2 \widetilde{N}_h(a^\circ_h),1\}$} \label{line:unconstrained_confront}
            \State $h \gets h+1$ \label{line:unconstrained_epoch_end}
            \State $\widetilde{N}_h(a) \gets N_t(a) \quad \forall a \in [K]$ \label{line:unconstrained_ucb_1}
            \State $\mathcal{S}_{a^\circ_{h-1}}(t) \gets \{r_{\tau}(a)\mid \tau < t, a_{\tau}=a\}$
            \State $\widehat{\vr}_h(a) \gets 
            \begin{cases}
                \ReprMean( \mathcal{S}_{a}(t), \delta', \rho', \xi ) &a=a^\circ_{h-1} \\
                \widehat{\vr}_{h-1}(a) & a \neq a^\circ_{h-1}
            \end{cases}
            $
            
            \State $a^\circ_h \gets \argmax_{a \in [K]} \widehat{\vr}_h(a) +\zetavec_h(a)$ \label{line:unconstrained_ucb_2}
        \EndIf
        \State Select $a_t = a^\circ_h$ 
        \label{line:unconstrained_choice_1}
        \State Receive $r_t(a_t)$
        \State $N_{t+1}(a) \gets N_t(a) + \mathbbm{1}\{a=a_t\} \quad \forall a \in [K]$ 
        \label{line:unconstrained_choice_2}
    \EndFor
    \end{algorithmic}
    \label{alg:main_unconstrained}
\end{algorithm}

In Algorithm~\ref{alg:main_unconstrained}, we report the pseudocode of \algname{}. 
At a high level, the algorithm performs a UCB-based action selection at the beginning of each epoch and continues to choose the selected action throughout the entire epoch. Unlike existing approaches for replicable MABs, the epoch size is \emph{not} fixed \emph{a priori}. Instead,  it is dynamically chosen given the history of both observations and decisions. This approach allows \algname{} to operate without eliminating actions or committing to a single one.

Algorithm~\ref{alg:main_unconstrained} starts by initializing the necessary quantities for epoch-based and UCB-style algorithms (Lines~\ref{line:unconstrained_init_1}--\ref{line:unconstrained_init_2}). In particular, at round $t \in [T]$, it keeps track of both $N_t(a) \coloneqq \sum_{\tau = 1}^{t-1} \mathbbm{1}\{a_{\tau} = a\}$, \emph{i.e.}, the number of times action $a \in [K]$ has been selected up to (but excluding) the current round $t$, and $\widetilde{N}_h(a) \coloneqq N_{t_h}(a)$, \emph{i.e.}, the number of times action $a \in [K]$ has been chosen up to round $t_h$, which is the starting round of the current epoch $h$. At the beginning of each round, these two quantities are compared (Line~\ref{line:unconstrained_confront}), and if the currently selected action has doubled its counter $N_t(a)$ since the beginning of the current epoch, the epoch terminates (Line~\ref{line:unconstrained_epoch_end}) and a UCB-based selection of the action for the next epoch is performed (Lines~\ref{line:unconstrained_ucb_1}--\ref{line:unconstrained_ucb_2}). Note that \algname uses its own internal source of randomness $\xi$, provided in input, to compute the replicable mean estimator. Specifically, at the beginning of epoch $h$, \algname computes a $\rho'$-replicable mean estimator $\widehat{\boldsymbol{r}}_{h}(a_{h-1}^\circ)$ for the action $a_{h-1}^\circ$ selected in the previous epoch, together with the corresponding confidence interval width defined as:
\begin{equation}
    \label{eq:zetavec_def}
    \zetavec_{h}(a) \coloneqq \sqrt{\frac{2\ln(\nicefrac{2}{\delta'})}{\max\{\widetilde{N}_h(a),1\}(\rho'-2\delta')^2}}.
\end{equation}
For all $a \neq a_{h-1}^\circ$, the algorithm keeps the same estimator and confidence interval width as in the previous epochs, since no additional samples have arrived in the meantime.

Throughout an epoch $h$, the selected action is continuously chosen until the epoch-termination condition is met (Lines~\ref{line:unconstrained_choice_1}--\ref{line:unconstrained_choice_2}). Notice that the decision of the algorithm is deterministic when conditioned on the past history, as no randomization occurs; thus, $\boldsymbol{x}_t = \boldsymbol{e}_{a_h^\circ}$ during the epoch $h$, where $\boldsymbol{e}_{a_h^\circ}$ is the $a_h^\circ$-th element of the canonical basis.

The following lemma formally states the two key properties of \algname: an upper bound on the number of epochs, which we denote with $H$, and its $\rho$-replicability.
\begin{restatable}{lemma}{ReplicableUnconstrainedLemma}
    \label{lem:replicability_and_epochs_unconstrained}
    The number of epochs $H$ of \algname is at most $K\lceil\log_2(T)\rceil$.
    Furthermore, \algname is $\rho$-replicable. 
\end{restatable}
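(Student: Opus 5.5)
The plan is to prove the two claims separately. First I bound the number of epochs deterministically, for every realization. Then I derive replicability by a union bound over the replicable estimator calls, using the fact that the whole trajectory is determined by the estimator outputs.

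\textbf{Epoch bound.} Each epoch $h\ge 1$ ends at the first round $t$ with $N_t(a^\circ_h)\ge \max\{2\widetilde N_h(a^\circ_h),1\}$. Since only $a^\circ_h$ is played during the epoch, the counter of $a^\circ_h$ at the start of the next epoch satisfies $\widetilde N_{h+1}(a^\circ_h)\ge \max\{2\widetilde N_h(a^\circ_h),1\}$. The counter of every other action is unchanged.

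Charge each completed epoch to the action it played. For a fixed action $a$, consider the epochs charged to $a$. The first such epoch raises its counter from $0$ to at least $1$. Each later one at least doubles it, and the counter never decreases in between. Hence after $j$ completed epochs charged to $a$ we have $N(a)\ge 2^{j-1}$. Because $N(a)\le T$, this gives $j\le \log_2 T+1$.

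The count is off by an additive constant from the target, so I will tighten it. An epoch charged to $a$ that finishes must end by round $T$. Also, the final (possibly incomplete) epoch is counted once overall, not once per action. The off-by-one bookkeeping should then give at most $\lceil\log_2 T\rceil$ epochs per action, including the trivial epoch $0$. Summing over $K$ actions gives $H\le K\lceil\log_2 T\rceil$.

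\textbf{Replicability.} Fix the internal randomness $\xi$ and consider two runs. I argue by induction over epochs. If the estimator outputs $\widehat{\vr}_1,\dots,\widehat{\vr}_h$ coincide in both runs, then the following quantities coincide as well:
\begin{itemize}
\item the chosen actions $a^\circ_0,\dots,a^\circ_h$;
\item the counters $N_t$;
\item the epoch boundaries.
\end{itemize}
This holds because $\zetavec_h$ depends only on $\widetilde N_h$, and the epoch-termination rule depends only on counters, which are determined by past actions. In particular, the $(h+1)$-th call to \ReprMean\ is made on the same action with the same sample size $n$ in both runs, on fresh i.i.d.\ samples.

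By Lemma~\ref{lem:replicable_estiamtor}, each such call is $\rho'$-replicable, meaning the two outputs coincide with probability at least $1-\rho'$. This holds conditionally on the common past, because $\alpha_{\text{off}}$ is drawn from $\xi$ with the same index of call in both runs. I need care here: each call must use a fresh portion of $\xi$, indexed by epoch number, so that the shared offset is uniform and independent of the samples.

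Using the epoch bound, there are at most $H\le K\lceil\log_2 T\rceil$ estimator calls. A union bound over them gives disagreement probability at most $K\lceil\log_2 T\rceil\rho' = \rho/2\le\rho$. On the complementary event all strategies $\vx_t=\boldsymbol e_{a^\circ_h}$ coincide.

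\textbf{Main obstacle.} The delicate step is making the union bound rigorous. Both the number of calls and their sample sizes are random, but they are identical across runs as long as no earlier disagreement has occurred. I would formalize this with a sequential coupling: define $E_h$ as the event that the first disagreement happens at call $h$. Then $\mathbb P(E_h)\le\rho'$ by conditioning on the shared history, and $h$ ranges over at most the deterministic cap $K\lceil\log_2 T\rceil$.
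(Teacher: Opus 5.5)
Your outline has the same structure as the paper's proof: charge each epoch transition to the action that just doubled its counter, then union-bound the $\rho'$-replicable \ReprMean{} calls over at most $K\lceil\log_2 T\rceil$ epochs. However, one step is left unfinished and one is justified incorrectly.

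\textbf{The off-by-one (minor).} You name the right observation but do not carry it out. At a transition round $t\le T$ charged to $a$, the counter is $N_t(a)\le t-1\le T-1$. In \algname{} the counters at successive transitions charged to $a$ are \emph{exactly} $1,2,4,\dots$, because only $a^\circ_h$ is played in epoch $h$, so the trigger fires exactly at $\max\{2\widetilde N_h(a^\circ_h),1\}$. Hence $2^{j-1}\le T-1$, i.e.\ $j\le\lfloor\log_2(T-1)\rfloor+1=\lceil\log_2 T\rceil$ for $T\ge 2$. So the number of transitions, which is the last epoch index $H$ and also the number of \ReprMean{} calls, is at most $K\lceil\log_2 T\rceil$. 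Using only $N\le T$, as the paper does, yields $\lfloor\log_2 T\rfloor+1$, which exceeds $\lceil\log_2 T\rceil$ when $T$ is a power of two. So your suspicion about the bookkeeping was warranted.

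\textbf{The replicability step (a real flaw in your proposed formalization).} The call at the start of epoch $h+1$ is \emph{not} made on fresh samples. $\mathcal S_a(t)$ contains every past reward of $a$, including the prefix already fed to earlier calls on $a$, and that prefix shaped the very history you condition on. Conditioning on the shared history up to the call fixes those samples and leaves only the offset random. Lemma~\ref{lem:replicable_estiamtor} averages over the samples, so it does not give $\mathbb P(E_h)\le\rho'$ under that conditioning. The paper's induction makes the same leap without comment.

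The fix is to avoid conditioning altogether:
\begin{enumerate}
\item Realize each run through a reward table, where the $s$-th pull of $a$ in run $i$ returns $X^i_{a,s}$.
\item By the first point, calls on $a$ only ever use $n\in\{1,2,\dots,2^{\lceil\log_2 T\rceil-1}\}$ samples.
\item For each such pair $(a,n)$, let $\widehat r^{\,i}(a,n)$ be \ReprMean{} applied to $X^i_{a,1},\dots,X^i_{a,n}$, with an offset that depends only on $(a,n)$ (or one offset shared by all calls).
\item For fixed $(a,n)$ these are fixed-size i.i.d.\ samples independent of the offset, so the lemma gives $\mathbb P(\widehat r^{\,1}(a,n)\neq\widehat r^{\,2}(a,n))\le\rho'$ unconditionally.
\item A union bound over the $K\lceil\log_2 T\rceil$ pairs needs no independence across calls. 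On the event that all pairs agree, your induction shows the two runs coincide.
\end{enumerate}
If you keep offsets indexed by epoch, you must also check that the offset reaching $(a,n)$, through the random epoch at which it is called, is still uniform and independent of the samples. This holds by a stopping-time argument, but it is not automatic.
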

The two properties are linked, as replicability is established by applying a union bound over the number of epochs. The complete proof can be found in Appendix~\ref{apx:unconstrained}.

Now, we bound the expected regret of \algname.
\begin{restatable}{theorem}{RegretUnconstrained}
    Let $\delta \in (0,1)$ and $\rho \in (2\delta, 1)$. Consider an instance of the stochastic $K$-armed MAB problem defined by the reward gaps $\Delta(a)\coloneqq \arg\max_{\bar a\in[K]}\vr(\bar a)-\vr(a)$ for all $a\in[K]$. Then, with probability at least $1-\delta$, \algname suffers an expected cumulative regret bounded as:
    \begin{equation*}
        R_T \le \widetilde{\mathcal{O}}\left(\frac{1}{(\rho-\delta)^2} \log\left(\frac{1}{\delta}\right)  K^2\log^2(T)\sum_{a \neq a^\star}\frac{1}{\Delta(a)} \right).
    \end{equation*}
    Moreover, for every instance, with probability at least $1-\delta$, the regret can be bounded as:
    \begin{equation*}
        R_T \le \widetilde{\mathcal{O}}\left(\frac{1}{(\rho-\delta)} K\sqrt{KT \log\left(\frac{1}{\delta}\right)} \right).
    \end{equation*}
\end{restatable}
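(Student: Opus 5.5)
We condition throughout on a good event $\mathcal{E}$ and then do a standard UCB-style gap analysis, adapted to the epoch structure. By Lemma~\ref{lem:replicability_and_epochs_unconstrained}, at most $H\le K\lceil\log_2 T\rceil$ replicable estimates are computed. Each estimate is built from $\widetilde{N}_h(a)$ samples, which is exactly the sample size at which Lemma~\ref{lem:replicable_estiamtor} with parameters $(\delta',\rho')$ gives precision $\zetavec_h(a)$.

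\textbf{Good event.} A union bound over the $H$ estimations, using $\delta'=\delta/(2K\lceil\log T\rceil)$, gives $\mathbb{P}(\mathcal{E})\ge 1-\delta$, where
\[
\mathcal{E}=\bigl\{|\widehat{\vr}_h(a)-\vr(a)|\le \zetavec_h(a)\ \text{for all } h,a\bigr\}.
\]
The initial estimate $\widehat{\vr}_0=1/2$ with width $\zetavec_0(a)\ge 1/2$ is trivially valid, since $\zetavec_0\ge 1$ once $\rho'<1$. One subtlety is that the samples are collected adaptively. We handle it in the usual way: fix in advance, for each arm, the stack of rewards of its $n$-th pull, so that each estimate is a function of the first $\widetilde{N}_h(a)$ i.i.d.\ entries. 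The sample sizes take at most $\lceil\log_2 T\rceil$ possible values per arm (they double), so the union bound is over $O(K\log T)$ events.

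\textbf{Per-epoch regret.} On $\mathcal{E}$, optimism gives, for the arm $a=a^\circ_h$ chosen at epoch $h$,
\[
\vr(a^\star)\le \widehat{\vr}_h(a^\star)+\zetavec_h(a^\star)\le \widehat{\vr}_h(a)+\zetavec_h(a)\le \vr(a)+2\zetavec_h(a).
\]
Hence $\Delta(a)\le 2\zetavec_h(a)$, which forces
\[
\widetilde{N}_h(a)\le \frac{8\ln(2/\delta')}{(\rho'-2\delta')^2\Delta(a)^2}=:M(a).
\]
Epoch $h$ ends once $a$ has doubled its count, so the epoch lasts at most $\max\{\widetilde{N}_h(a),1\}$ rounds. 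It therefore adds at most $\max\{\widetilde{N}_h(a),1\}$ pulls of $a$, and every pull of $a$ happens in some epoch in which $a$ was selected with $\widetilde{N}_h(a)\le M(a)$. The last such epoch brings $N_T(a)$ to at most $2M(a)+1$. The regret is then bounded by
\[
\sum_{a\neq a^\star}\Delta(a)\,N_T(a)\lesssim \sum_{a\neq a^\star}\frac{\ln(1/\delta')}{(\rho'-2\delta')^2\Delta(a)}.
\]
We then substitute $\rho'-2\delta'=(\rho-2\delta)/(2K\lceil\log T\rceil)$. This produces the factor $K^2\log^2 T/(\rho-2\delta)^2$ and yields the first bound; the dependence on $\rho-\delta$ versus $\rho-2\delta$ is absorbed into constants in the statement's notation.

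\textbf{Worst-case bound.} We split arms by whether $\Delta(a)\le\epsilon$ or not. Arms with small gap contribute at most $\epsilon T$. For the others we use the per-arm bound $\Delta(a)N_T(a)\le \sqrt{N_T(a)\,C}$, where
\[
C=\frac{O(K^2\log^2 T\,\ln(1/\delta'))}{(\rho-2\delta)^2},
\]
which follows from $N_T(a)\lesssim C/\Delta(a)^2$. Cauchy--Schwarz with $\sum_a N_T(a)\le T$ then gives
\[
\sum_a\sqrt{C\,N_T(a)}\le\sqrt{CKT}=\widetilde{\mathcal O}\!\left(\frac{K\sqrt{KT\log(1/\delta)}}{\rho-\delta}\right).
\]

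\textbf{Main obstacle.} The delicate step is the doubling-epoch bookkeeping: a single epoch can double the count, so the overshoot must be charged to the pre-epoch count, which optimism controls. The adaptive-sample union bound also needs care to keep the count at $O(K\log T)$ rather than $T$.
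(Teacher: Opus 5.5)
Your proposal is correct and follows the paper's proof essentially step by step. It uses the same clean event, the same optimism chain giving $\Delta(a)\le 2\zetavec_h(a)$ at the last epoch that selects $a$, and the same doubling bound $N_{T+1}(a)\le\max\{2\widetilde N_h(a),1\}$. The only variations are that your reward-tape argument makes the adaptive-sampling union bound rigorous (the paper simply inducts over epochs), and that for the worst case you use $\Delta(a)N_{T+1}(a)\le\sqrt{C\,N_{T+1}(a)}$ plus Cauchy--Schwarz (which makes your $\epsilon$-split unnecessary) instead of optimizing $\epsilon$ in $\epsilon T+KC/\epsilon$.

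One correction: replacing $\rho-2\delta$ by $\rho-\delta$ is not a constant-factor change, since the ratio is unbounded as $\rho\downarrow 2\delta$. With $\delta'=\delta/(2K\lceil\log T\rceil)$, your argument therefore yields the bounds with $\rho-2\delta$, and so does the paper's, which makes the same silent substitution. Getting $\rho-\delta$ would require, e.g., halving $\delta'$.
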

The instance-dependent guarantee of \algname{} matches, up to constant factors, that of Algorithm~1 in~\cite{esfandiari2022replicable}. To the best of our knowledge, the only work providing worst-case guarantees for replicable MABs is~\citep{komiyama2024replicability}. \algname{} provides worst-case guarantees matching, up to universal constants, those of the main algorithm presented in their Theorem~11. We remark that, among these results, ours are the only guarantees that hold with high probability, rather than in expectation. Finally, we underline that the $\nicefrac{1}{\rho}$ dependence in the regret bound is necessary as shown in~\citep{ahmadi2024replicable}.

\section{Replicability in MABs With Soft Constraints}\label{sec:main}

In this section, we address the problem of designing a $\rho$-replicable learning algorithm for stochastic MABs with \emph{soft constraints}. Our approach, called \algnamesoft{}, builds on top of the previously introduced \algname{}, integrating additional tools to control the violation incurred by the algorithm by restricting the decision space.

\subsection{Algorithm}
\NewEnviron{highlightblue}{
\noindent
  \tikz[]{
    \node[
      fill=lightblue!30,
      rounded corners,
      inner sep=0pt,
      text width=\linewidth,
      align=left
    ] (X) {\BODY};
  }
}
\NewEnviron{highlightgreen}{
\noindent
  \tikz[]{
    \node[
      fill=green!30,
      rounded corners,
      inner sep=0pt,
      text width=\linewidth,
      align=left
    ] (X) {\BODY};
  }
}
\begin{algorithm}[!htp]\caption{\algnamesoft}\label{alg:main}
\begin{algorithmic}[1]
    \Require $T \in \mathbb{N}_{>0}$, $K > 1$, $0 < 2\delta < \rho < 1, \xi$
    \State $\delta' \gets \frac{\delta}{2(m+1)K^2 \lceil \log_2(T) \rceil}$,
    $\rho' \gets \frac{\rho}{2(m+1)K^2 \lceil \log_2(T) \rceil}$ \label{line:unconstrained_init_1}
    \State $h \gets 0$
    \State $N_1(a) \gets 0, \; \widetilde{N}_0(a) \gets 0, \; \widehat{\vr}_0(a) \gets \frac 1 2 \quad \forall a \in [K]$ 
    \State \hspace{-0.1cm}\tcbox[colback=lightblue!30]{$\vx^\circ_0 \gets \argmax_{\vx \in \Delta_K} (\widehat{\vr} +\zetavec_0)^\top \vx$}
    \For{$t=1,\dots,T$}
        \If{\tcbox{$\exists a\in [K] : N_t(a) \ge \max\{2 \widetilde{N}_h(a),1\}$}}
        \label{line:soft_constrained_confront}
            \State $h \gets h+1$ \label{line:unconstrained_epoch_end}
            \State $\widetilde{N}_h(a) \gets N_t(a) \quad \forall a \in [K]$ 
            \State $\mathcal{S}^{\text{r}}_{a}(t) \gets \{r_{\tau}(a)\mid \tau < t, a_{\tau}=a\}\;\;\;\;\;\; \tcbhighmath{\forall a \in [K]}$
            \State $\widehat{\vr}_h(a) \gets \ReprMean( \mathcal{S}^{\text{r}}_{a}(t), \delta', \rho', \xi )\;\;\;\;\;\; \tcbhighmath{\forall a \in [K]}
            $\label{line:soft_constrained_ucb}
            \Statex \begin{highlightblue}
            \State $\mathcal{S}^{\text{g}}_{a,i}(t) \gets \{g_{\tau,i}(a)\mid \tau < t, a_{\tau}=a\} \quad \forall a \in [K]$
            \State $\widehat{\vg}_{h,i}(a) \gets  \ReprMean( \mathcal{S}^{\text{g}}_{a,i}(t), \delta', \rho' , \xi) ~~ \forall a \in [K]$\label{line:soft_constrained_ucb_1}
            \State $\underline{\mathcal{X}}_h \gets \{\vx \in \Delta_K \mid (\widehat{\vg}_{h,i} - \zetavec_h)^\top \vx \le \alpha_i \; \forall i \in [m]\}$\label{line:soft_constrained_ucb_2}
            \State $\vx^\circ_h \gets \argmax_{\vx \in \underline{\mathcal{X}}_h} (\widehat{\vr}_h +\zetavec_h)^\top \vx$\label{line:soft_constrained_ucb_3}
            \end{highlightblue}
        \EndIf
        \State \hspace{-0.1cm}\tcbox{Select $a_t \sim \vx^\circ_h$}
        \label{line:unconstrained_choice_1}
        \State Receive $r_t(a_t), g_{t,i}(a_t) \quad \forall i \in [m]$
        \State $N_{t+1}(a) \gets N_t(a) + \mathbbm{1}\{a=a_t\} \quad \forall a \in [K]$ 
        \label{line:unconstrained_choice_2}
    \EndFor
    \end{algorithmic}
\end{algorithm}

Algorithm~\ref{alg:main} provides the pseudocode of \algnamesoft{}. This algorithm builds on \algname{}, sharing the epoch-based UCB approach. The presence of constraints precludes the algorithm from choosing the action that maximizes the UCB index, and instead restricts the decision space from the full simplex to a convex polytope. In Algorithm~\ref{alg:main}, we highlight in \textcolor{cyan}{light-blue} the block containing the set of instructions necessary to control the violation. At the beginning of every epoch $h$, for every constraint $i \in [m]$ and action $a \in [K]$, \algnamesoft{} estimates the average cost via a $\rho'$-replicable estimator $\widehat{\vg}_{h,i}(a)$ (Line~\ref{line:soft_constrained_ucb_1}). Then, these estimates are used together with their confidence interval widths $\boldsymbol{\zeta}_h$ to compute an optimistic \textit{safe} decision space:
\begin{equation}
\label{eq:optimistic_safe_set_def}
\underline{\mathcal{X}}_h = \{\vx \in \Delta_K \mid (\widehat{\vg}_{h,i} - \zetavec_h)^\top \vx \le \alpha_i \; \forall i \in [m]\},
\end{equation}
where the estimation uncertainty controls the expected violation. Let us remark that the optimistic safe decision space computed by \algnamesoft{} is guaranteed to contain the true safe decision space with high probability, as stated in the following lemma.
\begin{restatable}{lemma}{SafeSetLemma}
    \label{lem:safe_set}
    Let $\mathcal{X}^\star \coloneqq \{\vx \in \Delta_K \mid \vg_{i}^\top \vx \le \alpha_i \; \forall i \in [m]\}$ be the set of safe strategies.
    With probability at least $1-\delta$, for every epoch $h$ it holds that $\mathcal{X}^\star \subseteq\XOptSafe_h  $, where $\XOptSafe_h$ is defined in Equation~\eqref{eq:optimistic_safe_set_def}.
\end{restatable}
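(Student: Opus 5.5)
The plan is to define a single good event on which every replicable cost estimate is accurate within its confidence width, and then to show that containment of the safe set follows deterministically on that event.

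\textbf{Step 1 (the good event).} Let $\mathcal{E}$ be the event that, for every epoch $h$, every action $a\in[K]$ and every constraint $i\in[m]$,
\[
|\widehat{\vg}_{h,i}(a)-\vg_i(a)|\le \zetavec_h(a).
\]
The main task is to show $\mathbb{P}(\mathcal{E})\ge 1-\delta$.

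\textbf{Step 2 (one estimate).} Fix $h$, $a$, $i$. The estimate $\widehat{\vg}_{h,i}(a)$ is produced by \texttt{RepMean} from the $\widetilde N_h(a)$ cost samples of action $a$. These samples are i.i.d. from $\mathcal{G}_{a,i}$, with support in $[0,1]$ and mean $\vg_i(a)$. Lemma~\ref{lem:replicable_estiamtor}, in the form stated right after it, gives precision
\[
\epsilon=\sqrt{\frac{2\log(2/\delta')}{n(\rho'-2\delta')^2}}
\]
with probability at least $1-\delta'$. With $n=\widetilde N_h(a)$ this is exactly $\zetavec_h(a)$ from \eqref{eq:zetavec_def}.

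\textbf{Step 3 (edge cases and the subtle point).} If $\widetilde N_h(a)=0$, then the width satisfies $\zetavec_h(a)\ge 1$, since $\rho'-2\delta'<1$ and $\log(2/\delta')>1$. The bound then holds trivially because both the estimate and the mean lie in $[0,1]$. The subtle point is that $\widetilde N_h(a)$ is random and depends on the history, so the samples are not a fixed-size i.i.d. batch. I would handle this with the standard reward-table argument: pre-draw an infinite i.i.d. sequence of costs for each pair $(a,i)$, and let the $n$-th pull of $a$ reveal the $n$-th entry. Epochs are triggered by doublings, so the sample counts at which estimates are computed come from a deterministic set; equivalently, the estimate depends only on the first $n$ entries for some $n$ in a known range. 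A union bound over the possible values of $n$ then gives validity for whichever $n$ is realized.

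\textbf{Step 4 (union bound).} By Lemma~\ref{lem:replicability_and_epochs_unconstrained}, adapted to the multi-action trigger of \algnamesoft, each epoch start doubles some action's counter. Hence there are at most $K\lceil\log_2 T\rceil$ epochs, and each action sees at most $\lceil\log_2 T\rceil+1$ distinct sample sizes. This is the step where the bookkeeping is most delicate. Union bounding over at most $K\lceil\log_2T\rceil$ epochs, $K$ actions and $m$ constraints gives failure probability at most
\[
mK^2\lceil\log_2T\rceil\,\delta'\le \frac{\delta}{2}<\delta,
\]
by the choice of $\delta'$. The same count can also absorb the reward estimates, which the later analysis will use.

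\textbf{Step 5 (containment).} On $\mathcal{E}$, take any $\vx\in\mathcal{X}^\star$. Since $\vx\ge 0$, the componentwise bound $\widehat{\vg}_{h,i}(a)-\zetavec_h(a)\le \vg_i(a)$ yields
\[
(\widehat{\vg}_{h,i}-\zetavec_h)^\top\vx\le \vg_i^\top\vx\le\alpha_i
\]
for all $i\in[m]$. Therefore $\vx\in\XOptSafe_h$ for every $h$, and so $\mathcal{X}^\star\subseteq\XOptSafe_h$ on $\mathcal{E}$.

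I expect the main obstacle to be Step 3: justifying the use of Lemma~\ref{lem:replicable_estiamtor} when the sample size is a history-dependent stopping time.
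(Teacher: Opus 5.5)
Your skeleton coincides with the paper's proof. You obtain a clean event from Lemma~\ref{lem:replicable_estiamtor} and union-bound over at most $K\lceil\log_2T\rceil$ epochs, $K$ actions and $m$ constraints. Your count $mK^2\lceil\log_2T\rceil\delta'\le\delta/2$ is more careful than the paper's, which writes $\rho'$ for $\delta'$ and drops a factor $K$. You then conclude with the containment $(\widehat{\vg}_{h,i}-\zetavec_h)^\top\vx\le\vg_i^\top\vx\le\alpha_i$ via $\vx\ge 0$. Steps 1, 2 and 5 are fine. The paper applies the estimator lemma epoch by epoch and never addresses the history-dependent sample size. You are right to single out Step 3, but the patch you propose there is false for \algnamesoft and cannot carry the probability bound.

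\textbf{Why the patch fails.} You claim the estimates are computed at sample counts from a deterministic doubling set, with at most $\lceil\log_2T\rceil+1$ distinct values per action. That holds for \algname, where only the action just played is re-estimated, always at count $1,2,4,\dots$.
\begin{itemize}
\item In \algnamesoft every epoch start recomputes $\widehat{\vg}_{h,i}(a)$ for \emph{all} $a\in[K]$ at $\widetilde N_h(a)=N_{t_h}(a)$. For an action that did not trigger the epoch, this count can be essentially any integer between $\widetilde N_{h-1}(a)$ and $2\widetilde N_{h-1}(a)$. For example, from counts $(16,16)$ under $\vx^\circ_h=(0.99,0.01)$ the epoch can end at $(32,17)$.
\item Moreover, $\widetilde N_h(a)$ depends on $\xi$ and on the first $\widetilde N_{h-1}(a)$ costs of action $a$, which are part of the sample the estimate then uses. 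So the lemma cannot be invoked at the realized $n$.
\item A union bound over all possible $n\in\{1,\dots,T\}$ costs $mKT\delta'$, which the algorithm's $\delta'$ does not cover.
\end{itemize}

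\textbf{A valid repair} is a time-uniform bound by peeling on your reward-table space. For each pair $(a,i)$ and each block $n\in[2^k,2^{k+1})$, Hoeffding's maximal inequality shows that the running mean of the first $n$ pre-drawn costs is within $\sqrt{\ln(2/\delta')/n}$ of $\vg_i(a)$ for all $n$ in the block, except with probability $\delta'$. This needs only $\lceil\log_2T\rceil+1$ failure events per pair, whatever $n$ the algorithm realizes. It requires opening up \texttt{RepMean}, whose accuracy can fail only through the empirical mean since its rounding error is deterministic. It may also cost constants in $\zetavec_h$. Other options are to inflate the width to a $\log(T/\delta)$-type term, or to let the algorithm estimate only from prefixes of deterministic length such as $2^{\lfloor\log_2 N_t(a)\rfloor}$.
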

Notice that, as the number of epochs increases and more data about costs are collected, the set $\XOptSafe_h$ \textit{shrinks} toward the set $\mathcal{X}^\star$ of truly safe strategies. In this way, we avoid excluding the optimal safe strategy $\vx^\star$, while the incurred per-round violation becomes increasingly close to that of safe strategies. Finally, the proposed strategy $\vx_h^\circ$ is the one that maximizes the optimistic reward within $\underline{\mathcal{X}}_h$ (Line~\ref{line:soft_constrained_ucb_3}).

The aforementioned reasoning is one the main motivation behind the use of UCB-style algorithms for constrained MABs. Unlike unconstrained settings, where the learner deals with a single objective and can discard arms based on a unique reward signal, the constrained setting presents a more complex trade-off. Indeed, standard arm elimination techniques fail to cope with the tension between multiple objectives. For instance, since constraints estimation can potentially classify high-reward regions of the simplex as infeasible, reward-based elimination could prematurely discard optimal safe solutions that appear suboptimal in terms of rewards. Additionally, even if such solutions were reintroduced at a later stage of the learning process, the corresponding region of the simplex would still not have been explored sufficiently. This invalidates the standard analysis of arm elimination algorithms, which typically assumes that all active actions are sampled roughly uniformly.

It is worth noticing some additional, but crucial, differences from \algname{}, which are highlighted in \textcolor{green!50!black}{green} in Algorithm~\ref{alg:main}. Differently from the unconstrained setting, here the algorithm chooses a strategy, and the selected action is sampled accordingly (Line~\ref{line:unconstrained_choice_1}). 
This aspect further motivates the use of UCB-style approaches, which---unlike action elimination methods---can be easily extended to operate over an arbitrary space of strategies. Moreover, the strategy selection implies that, during an epoch, every action with positive probability can be selected. As a consequence, the epoch-termination condition (Line~\ref{line:soft_constrained_confront}) accounts for every action, and the average reward estimator has to be computed for every action in every epoch (Line~\ref{line:soft_constrained_ucb}), or at least for every action that has been played at least once. 

As for \algname{}, the number and the duration of the phases are not decided beforehand, but it is possible to show that there are at most $H \le \mathcal{O}(\log T)$ phases. In every epoch, \algnamesoft{} computes $(m+1)K$ estimators, and the algorithm is $\rho$-replicable as long as each estimator is $\rho'$-replicable, with $\rho' \le \rho / \big((m+1)K \log T\big)$. Analogously to \algname{}, we summarize the bound on the number of epochs and the $\rho$-replicability of \algnamesoft{} in a lemma.
\begin{restatable}{lemma}{ReplicableSoftLemma}
    \label{lem:replicability_and_epochs}
    \algnamesoft executes $H \le K\lceil\log_2(T)\rceil$ epochs and is $\rho$-replicable.
\end{restatable}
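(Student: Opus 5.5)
The plan has two parts: first bound the number of epochs with a potential argument, then get replicability by a union bound over all estimator calls, inducting over epochs.

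\textbf{Bounding $H$.} Each time an epoch ends at round $t$, the termination condition at Line~\ref{line:soft_constrained_confront} gives some $a\in[K]$ with $N_t(a)\ge\max\{2\widetilde N_h(a),1\}$. Then $\widetilde N_{h+1}(a)=N_t(a)$ is either at least $1$ when $\widetilde N_h(a)=0$, or at least $2\widetilde N_h(a)$. We charge the epoch transition to such an action $a$.

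For a fixed action $a$, the sequence $\widetilde N_h(a)$ is nondecreasing and bounded by $T$. Its first charged transition takes it from $0$ to at least $1$, and every later charged transition at least doubles it. So $a$ can be charged at most $1+\lfloor\log_2 T\rfloor\le\lceil\log_2 T\rceil+1$ times. To obtain exactly $\lceil\log_2(T)\rceil$, I would count more carefully, for instance noting that the value never exceeds $T-1$ or treating the initial epoch separately. Summing over the $K$ actions gives $H\le K\lceil\log_2 T\rceil$, up to this boundary-constant bookkeeping.

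\textbf{Replicability.} Fix the internal randomness $\xi$ and consider two runs. The only places where data enter the strategy choice are the \texttt{RepMean} calls. In each epoch there are $K$ reward estimators and $mK$ cost estimators, so $(m+1)K$ calls per epoch, and at most $H(m+1)K\le (m+1)K^2\lceil\log_2T\rceil$ calls in total.

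I would argue by induction over epochs that the histories coincide as long as all estimator outputs coincide. Suppose the two runs have produced identical estimates in every epoch so far. Then they compute the same $\widetilde N_h$, hence the same $\zetavec_h$, the same $\underline{\mathcal X}_h$, and the same $\vx^\circ_h$, using a fixed tie-breaking rule in the LP argmax. The sampled actions $a_t\sim\vx^\circ_h$ use the shared $\xi$, so they coincide too. Therefore the counts $N_t$, the epoch boundaries, and the sample-set sizes coincide, and the next estimators are again called on same-size i.i.d.\ samples with the same offsets.

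\textbf{Main obstacle.} The delicate step is that the sample sets are adaptively sized, so the guarantee of Lemma~\ref{lem:replicable_estiamtor} must hold conditionally on the common past. I would handle this by the standard device of pre-drawing an infinite i.i.d.\ reward and cost tape per action. The $n$-th sample of action $a$ in each run is then the $n$-th tape entry, and each estimator call is a $\rho'$-replicable function of a fixed-length prefix. On the event that all earlier outputs agree, the call sizes agree, so each call disagrees with probability at most $\rho'$.

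A union bound over at most $(m+1)K^2\lceil\log_2T\rceil$ calls gives a total failure probability of at most
\[
(m+1)K^2\lceil\log_2T\rceil\,\rho'=\rho/2\le\rho.
\]
This establishes that \algnamesoft is $\rho$-replicable.
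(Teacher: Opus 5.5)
Your proposal follows the paper's proof: you charge each epoch change to an action whose count has just doubled, so each action accounts for at most $\lceil\log_2 T\rceil$ changes, and you then induct over epochs with a union bound over the $(m+1)K$ \texttt{RepMean} calls per epoch, giving disagreement probability at most $(m+1)K^2\lceil\log_2 T\rceil\rho'=\rho/2$. Your observation that $\widetilde N_h(a)\le T-1$ is needed to get exactly $\lceil\log_2 T\rceil$, since the paper's step $\lfloor\log_2 T+1\rfloor\le\lceil\log_2 T\rceil$ fails when $T$ is a power of two; your tape-based treatment of adaptively sized samples is also more careful than the paper, which simply applies Lemma~\ref{lem:replicable_estiamtor} per call, though the common prefix length is itself data-dependent, so a fully rigorous version of that step (yours and the paper's) still needs the empirical means to concentrate uniformly over prefix lengths.
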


\NewEnviron{highlightpurple}{
\noindent
  \tikz[]{
    \node[
      fill=purple!10,
      rounded corners,
      inner sep=0pt,
      text width=\linewidth,
      align=left
    ] (X) {\BODY};
  }
  }
\tcbset{on line, 
        boxsep=1pt, left=0pt,right=0pt,top=0pt,bottom=0pt,
        colframe=white,colback=purple!10,  
        highlight math style={enhanced}
        }
\begin{algorithm}[H]
\caption{\algnamehard}\label{alg:main_hard_constr}
\begin{algorithmic}[1]
    \Require $T \in \mathbb{N}_{>0}$, $K > 1$, \tcbox{strictly safe strategy $\vx^\diamond$}, $\delta>0$, $\rho > 2\delta$, $\xi$\label{line:req_hard}
    \State $\delta' \gets \frac{\delta}{2(m+1)K \lceil \log_2(T) \rceil}$,
    $\rho' \gets \frac{\rho}{2(m+1)K \lceil \log_2(T) \rceil}$ 
    \State $h \gets 0$
    \State $N_1(a) \gets 0, \; \widetilde{N}_0(a) \gets 0, \; \widehat{\vr}_0(a) \gets \frac 1 2 \quad \forall a \in [K]$ 
    \State $\sigma_0 \gets \max_{i \in [m]} \frac{1-\alpha_i}{1 -\alpha_i +\lambda_i}$ \label{line:init_hard_a}
    \State $\vx^\circ_0 \gets \sigma_h \vx^\diamond + (1-\sigma_h)\widetilde{\vx}_h$.$\phantom{\big(}$ \label{line:init_hard_b}
    \For{$t=1,\dots,T$}
        \If{$\exists a\in [K] : N_t(a) \ge \max\{2 \widetilde{N}_h(a),1\}$} 
            \State $h \gets h+1$ 
            \State $\widetilde{N}_h(a) \gets N_t(a) \quad \forall a \in [K]$ 
            \State $\mathcal{S}^{\text{r}}_{a}(t) \gets \{r_{\tau}(a)\mid \tau < t, a_{\tau}=a\}\;\;\;\;\;\; \forall a \in [K]$
            \State $\widehat{\vr}_h(a) \gets \ReprMean( \mathcal{S}^{\text{r}}_{a}(t), \delta', \rho', \xi )\;\;\;\;\;\; \forall a \in [K]$
            \Statex \begin{highlightblue}
            \State $\mathcal{S}^{\text{g}}_{a,i}(t) \gets \{g_{\tau,i}(a)\mid \tau < t, a_{\tau}=a\} \quad \forall a \in [K]$
            \State $\widehat{\vg}_{h,i}(a) \gets  \ReprMean( \mathcal{S}^{\text{g}}_{a,i}(t), \delta', \rho',\xi) ~~\forall a \in [K]$\label{line:g_hard_safe}
            \State $\underline{\mathcal{X}}_h \gets \{\vx \in \Delta_K \mid (\widehat{\vg}_{h,i} - \zetavec_h)^\top \vx \le \alpha_i \; \forall i \in [m]\}$
            \State $\widetilde{\vx}_h \gets \argmax_{\vx \in \underline{\mathcal{X}}_h} (\widehat{\vr}_h +\zetavec_h)^\top \vx$ \label{line:soft_hard} 
            \end{highlightblue}
            \Statex \begin{highlightpurple}
            \State $\mathcal{V}_h \gets \{i \in [m] \mid (\widehat{\vg}_{h,i} +\zetavec_h)^\top \widetilde{\vx}_h > \alpha_i\}$ \label{line:hard_constraints_1}
            \If{$\mathcal{V}_h \neq \varnothing$}\label{line:hard_constraints_2a}
            \State $\sigma_h \gets \max_{i \in \mathcal{V}_h} \frac{\min\{ (\widehat{\vg}_{h,i} +\zetavec_h)^\top \widetilde{\vx}_h,1\}-\alpha_i}{\min\{(\widehat{\vg}_{h,i} +\zetavec_h)^\top \widetilde{\vx}_h,1\} -\alpha_i +\lambda_i}
            \label{line:hard_constraints_2b}
            $
            \Else \label{line:hard_constraints_2c}
            \State $\sigma_h \gets 0$ \label{line:hard_constraints_2d}
            \EndIf
            \State $\vx^\circ_h \gets \sigma_h \vx^\diamond + (1-\sigma_h)\widetilde{\vx}_h$ \label{line:hard_constraints_3}
        \end{highlightpurple}
        \EndIf
        \State Select $a_t \sim \vx^\circ_h$
        \State Receive $r_t(a_t), g_{t,i}(a_t) \quad \forall i \in [m]$
        \State $N_{t+1}(a) \gets N_t(a) + \mathbbm{1}\{a=a_t\} \quad \forall a \in [K]$ 
    \EndFor
    \end{algorithmic}
\end{algorithm}

We are now ready to state the main results of this section, which are an upper bound on the regret and an upper bound on the cumulative violation of \algnamesoft.
\begin{restatable}{theorem}{MainTheorem}
    \label{th:main_theorem_soft}
    Let $\delta \in (0,1)$ and $\rho \in (2\delta, 1)$. Then, with probability at least $1-\delta$, \algnamesoft suffers an expected cumulative regret bounded as:
    \begin{equation*}
        R_T \le \widetilde{\mathcal{O}}\left(\frac{1}{\rho-2\delta} \log\left(\frac{1}{\delta}\right) mK^2\sqrt{KT}  \right),
    \end{equation*}
    and a cumulative positive constraint violation bounded as:
    \begin{equation*}
        V_T \le \widetilde{\mathcal{O}}\left(\frac{1}{\rho-2\delta} \log\left(\frac{1}{\delta}\right) mK^2\sqrt{KT}  \right).
    \end{equation*}
\end{restatable}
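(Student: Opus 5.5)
We work on the clean event $\mathcal{E}$ on which every replicable estimate computed by \algnamesoft is accurate: $|\widehat{\vr}_h(a)-\vr(a)|\le\zetavec_h(a)$ and $|\widehat{\vg}_{h,i}(a)-\vg_i(a)|\le\zetavec_h(a)$ for all epochs $h$, actions $a$ and constraints $i$. The number of epochs is at most $K\lceil\log_2 T\rceil$ by Lemma~\ref{lem:replicability_and_epochs}. Hence Lemma~\ref{lem:replicable_estiamtor}, with precision equal to the width in \eqref{eq:zetavec_def}, and a union bound over at most $(m+1)K\cdot K\lceil\log_2T\rceil$ estimators, each failing with probability $\delta'$, give $\mathbb{P}(\mathcal{E})\ge 1-\delta$. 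We have $\rho'-2\delta'=(\rho-2\delta)/(2(m+1)K^2\lceil\log_2T\rceil)$, so
\[
\zetavec_h(a)\le \widetilde{\mathcal{O}}\!\left(\frac{mK^2\sqrt{\log(1/\delta)}}{(\rho-2\delta)\sqrt{\max\{\widetilde N_h(a),1\}}}\right).
\]
On $\mathcal{E}$, Lemma~\ref{lem:safe_set} gives $\vx^\star\in\XOptSafe_h$ for every $h$.

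For the regret, fix an epoch $h$ and a round $t$ in it, where $\vx_t=\vx^\circ_h$. Optimism and feasibility of $\vx^\star$ give $\vr^\top\vx^\star\le(\widehat{\vr}_h+\zetavec_h)^\top\vx^\star\le(\widehat{\vr}_h+\zetavec_h)^\top\vx^\circ_h\le \vr^\top\vx^\circ_h+2\zetavec_h^\top\vx^\circ_h$. For the violation, $\vx^\circ_h\in\XOptSafe_h$ gives $\vg_i^\top\vx^\circ_h\le(\widehat{\vg}_{h,i}-\zetavec_h)^\top\vx^\circ_h+2\zetavec_h^\top\vx^\circ_h\le\alpha_i+2\zetavec_h^\top\vx^\circ_h$. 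Thus both $R_T$ and $V_T$ are at most
\[
2\sum_t\zetavec_{h(t)}^\top\vx_t=2\sum_t\sum_a\vx_t(a)\,\zetavec_{h(t)}(a),
\]
and it only remains to bound this single sum.

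\textbf{Main obstacle.} The widths are frozen at the start of the epoch, while within it the number of times $a$ is drawn is random with mean $\sum_{t\in h}\vx_t(a)$, not deterministic as in \algname. The epoch rule helps: during epoch $h$ we have $N_t(a)<2\max\{\widetilde N_h(a),1\}$, so the frozen width is within a factor $\sqrt2$ of $1/\sqrt{\max\{N_t(a),1\}}$ (with a small fix for the first pull). This gives
\[
\sum_t\vx_t(a)\,\zetavec_{h(t)}(a)\lesssim c\sum_t\frac{\vx_t(a)}{\sqrt{\max\{N_t(a),1\}}},
\]
where $c$ is the constant above.

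To replace $\vx_t(a)$ by the realized indicator $\mathbbm{1}\{a_t=a\}$, note that the difference $\sum_t(\vx_t(a)-\mathbbm{1}\{a_t=a\})/\sqrt{\max\{N_t(a),1\}}$ is a martingale with bounded increments, because $N_t(a)$ and $\vx_t$ are determined before round $t$. By Azuma--Hoeffding it is $\widetilde{\mathcal{O}}(\sqrt{T\log(K/\delta)})$ with high probability; this costs a constant fraction of $\delta$, which we absorb by rescaling. The realized term is handled by the standard count: $\sum_t\mathbbm{1}\{a_t=a\}/\sqrt{\max\{N_t(a),1\}}\le 1+2\sqrt{N_{T+1}(a)}$. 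Summing over $a$ and applying Cauchy--Schwarz with $\sum_aN_{T+1}(a)=T$ gives $\mathcal{O}(\sqrt{KT})$.

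Multiplying by $c$ yields $R_T,V_T\le\widetilde{\mathcal{O}}\big(\frac{1}{\rho-2\delta}\log(1/\delta)\,mK^2\sqrt{KT}\big)$, matching the statement of Theorem~\ref{th:main_theorem_soft}. Here $\sqrt{\log(1/\delta)}$ is bounded by $\log(1/\delta)$ up to logarithmic factors, and the remaining logarithmic factors are hidden in $\widetilde{\mathcal{O}}$.
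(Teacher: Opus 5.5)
Your proof follows the paper's route step by step, and it matches Lemmas~\ref{lem:regret_clean_event}, \ref{lem:violations_clean_event} and~\ref{lem:concentration}. You use the clean event $\mathcal{E}$, optimism with $\vx^\star\in\XOptSafe_h$ for the regret, and $\vx^\circ_h\in\XOptSafe_h$ for the violation. Both reduce to the common bound $2\sum_t\zetavec_{h(t)}^\top\vx_t$. The doubling rule then trades $\widetilde N_{h(t)}(a)$ for $N_t(a)$ at a $\sqrt2$ cost, followed by a martingale-plus-counting argument. One step, however, does not give the claimed dependence on $K$ as you wrote it.

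You apply Azuma--Hoeffding separately to each per-action martingale $\sum_t(\vx_t(a)-\mathbbm{1}\{a_t=a\})/\sqrt{\max\{N_t(a),1\}}$ and union bound over $a$, which is where your $\log(K/\delta)$ comes from. Summing these $K$ deviations gives $K\sqrt{2T\log(2K/\delta)}$. This dominates the $\mathcal{O}(\sqrt{KT})$ realized term. Multiplied by your $c=\widetilde{\mathcal{O}}\big(mK^2\sqrt{\log(1/\delta)}/(\rho-2\delta)\big)$, it yields $\widetilde{\mathcal{O}}\big(\frac{1}{\rho-2\delta}\log(1/\delta)\,mK^3\sqrt{T}\big)$. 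That is a factor $\sqrt K$ above the statement, and $\widetilde{\mathcal{O}}$ cannot absorb it.

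The repair, which is what the paper does, is to apply Azuma once to the aggregated increment $Z_t=\sum_{a\in[K]}(\vx_t(a)-\mathbbm{1}\{a_t=a\})/\sqrt{\max\{N_t(a),1\}}$. This is a $\vx_t$-weighted average of numbers in $(0,1]$ minus a single number in $(0,1]$. Hence $|Z_t|\le 1$, and the deviation is $\sqrt{2T\log(2/\delta)}$ with no $K$. Combined with your counting bound $K+2\sqrt{KT}$, this gives $\mathcal{O}(\sqrt{KT}+\sqrt{T\log(1/\delta)})$ and hence the stated $mK^2\sqrt{KT}$ rate.

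Separately, no special fix for the first pull is needed. The termination rule directly gives $\max\{N_t(a),1\}\le 2\max\{\widetilde N_{h(t)}(a),1\}$ for every round of the epoch.
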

The regret of \algnamesoft{} can be upper bounded in the same way as the regret of \algname{}, matching the rate of the state-of-the-art result~\cite{esfandiari2022replicable}, up to a factor of $m$ due to the presence of constraints. The cumulative violation is upper bounded of the same order as the cumulative regret and are thus sublinear.
Compared to the state-of-the-art results for constrained MABs (and constrained MDPs) with \emph{soft} constraints provided in~\citep{Exploration_Exploitation}, our algorithm attains the optimal rate in $T$ for both regret and violation. 
Finally, we underline that, since constrained MABs are a generalization of unconstrained ones, the dependence in $\nicefrac{1}{\rho}$ is unavoidable for constrained settings, too.

\section{Replicability in MABs with Hard Constraints}
\label{sec:hard_constraints}

In this section, we design a $\rho$-replicable learning algorithm for stochastic MABs with \emph{hard constraints}. We propose \algnamehard, an algorithm that puts an additional layer over \algnamesoft, by interpolating the proposed strategy with a provided strictly safe strategy.

\subsection{Algorithm}

Algorithm~\ref{alg:main_hard_constr} reports the pseudocode of \algnamehard{}, a $\rho$-replicable algorithm for MABs with hard constraints. \algnamehard{} builds directly on top of \algnamesoft{}, by adding a block of instructions (highlighted in \textcolor{purple!50}{pink}) that uses Slater's condition and the strictly safe strategy $\vx^\diamond$, which is required as an input, in order to enforce safety over the proposed strategy $\vx_h^\circ$ for every epoch $h$.

Let us remark that optimizing inside the safe set employed by \algnamesoft{} would result in a strategy $\widetilde{\vx}_h$ (Line~\ref{line:soft_hard}) that may be unsafe during each of the rounds of epoch $h$. For instance, in the first rounds, the optimistic safe set employed in \algnamesoft{} is equivalent to the entire simplex. In such a case, the first strategy played by \algnamesoft{} can be trivially unsafe. Furthermore, even in later rounds, \algnamesoft{} tends to play at the frontier of the decision space, due to the linearity of the objective function, which is included thank to the optimism. Accordingly, since the frontier is the unsafest part of the estimated decision space (this is particularly true when the optimal solution is mixed), \algnamesoft{} can play unsafe strategies even in the last rounds of the learning dynamics.
To tackle this problem, we define the set of constraints:
\begin{equation}
    \label{eq:pessimistic_violations_def}
    \mathcal{V}_h \coloneqq \{i \in [m] \mid (\widehat{\vg}_{h,i} +\zetavec_h)^\top \widetilde{\vx}_h > \alpha_i\},
\end{equation}
which the strategy $\widetilde{\vx}_h$ \emph{may} violate during epoch $h$ (Line~\ref{line:hard_constraints_1}). This is done to adopt a \textit{pessimistic} perspective by checking which constraints might be violated with probability at least $1-\delta$ during epoch $h$, and by \textit{how much} they are violated in the worst-case.
If there exists one or more constraints that are at risk of violation, \algnamehard computes a combination factor: 
\begin{equation}
    \label{eq:sigma_h_def}
    \sigma_h = \max_{i \in \mathcal{V}_h} \frac{\min\{ (\widehat{\vg}_{h,i} +\zetavec_h)^\top \widetilde{\vx}_h,1\}-\alpha_i}{\min\{(\widehat{\vg}_{h,i} +\zetavec_h)^\top \widetilde{\vx}_h,1\}-\alpha_i +\lambda_i},
\end{equation}
by weighting the worst-case violation of a constraint over itself plus the margin of the strictly safe strategy $\vx^\diamond$ for that specific constraint, and taking the maximum of this weight over the constraints in $\mathcal{V}_h$ (Lines~\ref{line:hard_constraints_2a} and~\ref{line:hard_constraints_2b}). If no constraint is considered at risk of violation (\emph{i.e.}, $\mathcal{V}_h = \varnothing$), then \algnamehard sets $\sigma_h = 0$ (Lines~\ref{line:hard_constraints_2c} and~\ref{line:hard_constraints_2d}).
The actual strategy $\vx^\star_h$ employed by \algnamehard during epoch $h$ is computed as a linear combination of $\vx^\diamond$ and $\widetilde{\vx}_h$, weighted by $\sigma_h$, formally,
    $\vx^\star_h = \sigma_h \vx^\diamond + (1-\sigma_h)\widetilde{\vx}_h$ is selected.
Notice that this procedure is also applied to the very first strategy played by \algnamehard (Lines~\ref{line:init_hard_a} and~\ref{line:init_hard_b}), ensuring safety since the beginning of the learning process.
Thanks to this procedure, \algnamehard is a safe algorithm for MABs with hard constraints.
\begin{restatable}{lemma}{HardSafety}
    \label{lem:hard_safety}
    With a probability of at least $1-\delta$, \algnamehard plays a safe strategy $\vx_t$ at every round $t \in [T]$.
\end{restatable}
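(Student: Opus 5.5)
Within a single epoch $h$ the strategy $\vx_t$ equals $\vx^\circ_h$, so it suffices to show that every $\vx^\circ_h$ is safe. We work on a good event $\mathcal{E}$ on which all confidence intervals hold, and then check each epoch's strategy deterministically on $\mathcal{E}$.

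\textbf{Step 1 (good event).} Define $\mathcal{E}$ as the event that $|\widehat{\vg}_{h,i}(a)-\vg_i(a)|\le\zetavec_h(a)$ for every epoch $h$, action $a$ and constraint $i$. This covers $m K$ cost estimators per epoch, and by Lemma~\ref{lem:replicability_and_epochs} there are at most $K\lceil\log_2 T\rceil$ epochs. Each estimator fails with probability at most $\delta'$ by Lemma~\ref{lem:replicable_estiamtor}, with $\epsilon=\zetavec_h(a)$ computed from $\widetilde N_h(a)$ samples. A union bound therefore gives $\mathbb{P}(\mathcal{E})\ge1-\delta$, possibly after matching the $\delta'$ normalisation; this is the same event used for Lemma~\ref{lem:safe_set}. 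For actions with $\widetilde N_h(a)=0$, $\zetavec_h(a)$ is at least $1$, so the bound holds trivially. On $\mathcal{E}$ we have, for every $\vx\in\Delta_K$,
\[
\vg_i^\top\vx\le\min\{(\widehat{\vg}_{h,i}+\zetavec_h)^\top\vx,1\},
\]
where the cap at $1$ uses $\vg_i\in[0,1]^K$.

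\textbf{Step 2 (safety of the mixture).} Fix $h\ge1$ and a constraint $i$, and let $u_i:=\min\{(\widehat{\vg}_{h,i}+\zetavec_h)^\top\widetilde{\vx}_h,1\}$. By linearity,
\[
\vg_i^\top\vx^\circ_h=\sigma_h\lambda_i+(1-\sigma_h)\vg_i^\top\widetilde{\vx}_h\le\sigma_h\lambda_i+(1-\sigma_h)u_i .
\]
We split into two cases.
\begin{itemize}
\item If $i\notin\mathcal{V}_h$, then $u_i\le\alpha_i$. The right-hand side is then a convex combination of $\lambda_i<\alpha_i$ and $u_i\le\alpha_i$, so it is at most $\alpha_i$. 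This holds for any $\sigma_h\in[0,1]$, including $\sigma_h=0$.
\item If $i\in\mathcal{V}_h$, then $u_i>\alpha_i$. Here it is important to read $\lambda_i$ in the denominator of $\sigma_h$ as the margin $\alpha_i-\lambda_i>0$, which is clearly the intended meaning; with that reading we check the algebra as stated. The map $s\mapsto s\lambda_i+(1-s)u_i$ is decreasing, and it equals $\alpha_i$ exactly at $s_i=\frac{u_i-\alpha_i}{u_i-\lambda_i}=\frac{u_i-\alpha_i}{(u_i-\alpha_i)+(\alpha_i-\lambda_i)}\in(0,1)$. Since $\sigma_h=\max_{j\in\mathcal{V}_h}s_j\ge s_i$ and $\sigma_h<1$, we get $\vg_i^\top\vx^\circ_h\le\alpha_i$.
\end{itemize}
The main subtlety is this reading of the denominator, together with verifying $\sigma_h\in[0,1)$ so that $\vx^\circ_h\in\Delta_K$.

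\textbf{Step 3 (epoch $0$).} Here no data is available, so we use the trivial bound $\vg_i^\top\widetilde{\vx}_0\le1$ and take $u_i=1$. If $\alpha_i<1$, the formula for $\sigma_0$ in Line~\ref{line:init_hard_a} is exactly $\max_i s_i$ with $u_i=1$, so the computation of Step 2 applies. If $\alpha_i=1$, the constraint is satisfied automatically. This step is deterministic and does not need $\mathcal{E}$.

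Combining Steps 1–3, on $\mathcal{E}$ every strategy played is safe, which gives the claim with probability at least $1-\delta$.
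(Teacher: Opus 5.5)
Your proof is correct and follows the same route as the paper's. You condition on the clean event, handle constraints outside $\mathcal{V}_h$ by convexity with $\vx^\diamond$, and for $i\in\mathcal{V}_h$ use that $\sigma_h$ dominates that constraint's own ratio. Your reading of $\lambda_i$ as the margin in the denominator of $\sigma_h$ matches how the paper's proof uses it when it writes $\vg_i^\top\vx^\diamond\le\alpha_i-\lambda_i$, and your explicit treatment of epoch $0$ covers a case the paper's proof leaves implicit.

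One tiny slip: for $i\in\mathcal{V}_h$ with $\alpha_i=1$, the cap gives $u_i=\alpha_i$ rather than $u_i>\alpha_i$. In that case $s_i=0$ and the constraint holds trivially, so the argument is unaffected.
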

As for \algname{} and \algnamesoft{}, in the following lemma we state a bound on the number of epochs for \algnamehard{}, together with its $\rho$-replicability.
\begin{restatable}{lemma}{ReplicableHardLemma}
    \label{lem:replicability_and_epochs_hard}
     \algnamehard executes $H \le K\lceil\log_2(T)\rceil$ epochs and is $\rho$-replicable.
\end{restatable}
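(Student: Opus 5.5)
The statement has two parts: a bound on the number of epochs, and $\rho$-replicability. The epoch bound is a deterministic counting argument. Replicability then follows from a union bound over the estimator calls, whose number is controlled by the epoch bound. The argument mirrors the ones for Lemma~\ref{lem:replicability_and_epochs_unconstrained} and Lemma~\ref{lem:replicability_and_epochs}, since \algnamehard{} uses exactly the same epoch-termination rule and estimator calls as \algnamesoft{}. The only additions are the pink block and $\sigma_h$, and both are deterministic functions of the estimators $\widehat{\vr}_h,\widehat{\vg}_{h,i}$ and of the counters $\widetilde{N}_h$.

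\textbf{Epoch bound.} I would charge each epoch end to the action $a$ that triggers the condition $N_t(a)\ge \max\{2\widetilde N_h(a),1\}$.
For a fixed action $a$, the first trigger happens when $N_t(a)$ goes from $0$ to $1$. Each later trigger requires $N_t(a)$ to at least double relative to its value at the start of the current epoch.
The counter $\widetilde N_h(a)$ is reset at every epoch boundary, including those triggered by other actions. Let $h_1<h_2<\dots$ be the epochs ending with a trigger by $a$. Since $N(a)$ is nondecreasing, at the end of $h_{j+1}$ we have $N(a)\ge 2\widetilde N_{h_{j+1}}(a)\ge 2N(a)|_{\text{end of }h_j}$.
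Hence the triggers by $a$ occur at values $1,\ge2,\ge4,\dots$, all at most $T$. So action $a$ triggers at most $1+\lfloor\log_2 T\rfloor\le\lceil\log_2 T\rceil$ epoch ends, up to the usual rounding convention also used in the earlier lemmas.
Summing over the $K$ actions gives $H\le K\lceil\log_2(T)\rceil$. This step needs some care, because at a single round several actions may satisfy the condition simultaneously, and the charging must account for that. Charging each boundary to one triggering action handles it.

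\textbf{Replicability.} Fix $\xi$ and consider two runs. I would argue by induction over epochs. Suppose the two runs agree on all strategies through epoch $h-1$. The next boundary time is a function of the action counts, and the sampled actions $a_t\sim\vx^\circ_h$ use the shared internal randomness $\xi$, so both runs reach the same boundary at the same round $t$ with the same $\widetilde N_h$. Consequently $\zetavec_h$ coincides across the runs.

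At that boundary, $(m+1)K$ calls to \ReprMean{} are made, each on i.i.d. samples of the same size from the same distributions. By Lemma~\ref{lem:replicable_estiamtor}, each call outputs the same value in both runs except with probability $\rho'$.
On the event that all calls agree, the following objects are identical across the runs: $\XOptSafe_h$, the chosen $\widetilde\vx_h$ (breaking argmax ties deterministically), $\mathcal V_h$, $\sigma_h$, and hence $\vx^\circ_h$. The initial strategy $\vx^\circ_0$ is fixed, since $\sigma_0$ depends only on the known $\alpha_i,\lambda_i$.

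The total failure probability, by a union bound over at most $H\cdot(m+1)K\le (m+1)K^2\lceil\log_2T\rceil$ estimator calls, is at most $(m+1)K^2\lceil\log_2 T\rceil\,\rho'$.

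\textbf{Main obstacle.} The main difficulty is reconciling this count with the value $\rho'=\rho/(2(m+1)K\lceil\log_2T\rceil)$ set in \algnamehard{}. That value is smaller than \algnamesoft{}'s by only one factor of $K$, so the naive count above overshoots by a factor of $K/2$.
I would address this with a finer count of the estimator calls that actually matter. An estimator for $a$ changes only when $a$ received new samples, and the number of distinct sample-size levels at which a call for action $a$ can return a new value is at most $2\lceil\log_2T\rceil$ by the doubling argument. This gives at most $2(m+1)K\lceil\log_2T\rceil$ relevant calls, and hence a total failure probability of at most $\rho$.
If this refinement fails, I would fall back to matching \algnamesoft{}'s choice of $\rho'$, which gives the claimed $\rho$-replicability directly via the naive union bound.
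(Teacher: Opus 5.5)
Your epoch count (charging each boundary to one triggering action and using that $N(a)$ is nondecreasing) and your epoch-by-epoch union bound are the argument the paper intends. The paper's proof only defers to Lemma~\ref{lem:replicability_and_epochs_unconstrained}. The honest count is the one from Lemma~\ref{lem:replicability_and_epochs}, because \algnamehard{} refreshes the estimators of \emph{all} $K$ actions at every epoch, not only the played action's as in \algname{}. You are right that this gives a failure probability of $(m+1)K^2\lceil\log_2T\rceil\rho'$. With the $\rho'$ written in the pseudocode of \algnamehard{}, that is only $K\rho/2$. The paper does not address this. However, its own later proofs for \algnamehard{} (Lemma~\ref{lem:hard_safety} and the regret theorem) take $\delta',\rho'$ with $K^2$ in the denominator, so the single $K$ in the pseudocode is evidently a typo.

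The refinement you propose to recover the single $K$ does not work.
\begin{itemize}
\item The doubling argument bounds the number of epoch ends \emph{triggered} by $a$, not the number of epochs in which $a$'s sample set changes. An epoch ended by another action $b$ still adds samples of $a$ whenever $\vx^\circ_h(a)>0$, without $a$ doubling. So nothing prevents $\widetilde N_h(a)$ from taking a new value at essentially every one of the $H\le K\lceil\log_2 T\rceil$ epochs, for every $a$. The number of calls on new data can then again be of order $(m+1)K^2\log_2T$, not $2(m+1)K\lceil\log_2T\rceil$.
\item The premise that an estimate changes only when $a$ receives new samples assumes that \ReprMean{} on an unchanged sample set reuses the previous offset. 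If each call draws its offset afresh from $\xi$, then every call, even on old data, is a new replicability trial costing up to $\rho'$.
\end{itemize}
A $K$-free count would require changing the algorithm, e.g.\ refreshing an action's estimators only when its own count has doubled since its last refresh.

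So the step to keep is your fallback. With $\rho'=\rho/(2(m+1)K^2\lceil\log_2T\rceil)$, as in \algnamesoft{}, the union bound gives failure probability at most $\rho/2$, and the proof is complete and coincides with the paper's intended one.

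A minor tightening of the epoch bound: a trigger by $a$ occurs at a count $N_t(a)\le T-1$. Hence $a$ triggers at most $1+\lfloor\log_2(T-1)\rfloor=\lceil\log_2 T\rceil$ epoch ends for $T\ge 2$, and no rounding convention is needed. (The paper's step $\lfloor\log_2(T)+1\rfloor\le\lceil\log_2T\rceil$ fails when $T$ is a power of two.)
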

Finally, we state the regret upper bound of \algnamehard.
\begin{restatable}{theorem}{RegretHardHighProb}
    Let $\delta \in (0,1)$ and $\rho \in (2\delta, 1)$. Then, with probability at least $1-\delta$, \algnamehard suffers an expected cumulative regret bounded as:
    \begin{equation*}
        R_T \le \widetilde{\mathcal{O}} \left(\frac{1}{\lambdaMin(\rho-2\delta)} \log\left(\frac{1}{\delta}\right) mK^2\sqrt{KT} +\gamma \right)
    \end{equation*}
    where $\gamma$ does not depend on $T$.
\end{restatable}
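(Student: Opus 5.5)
We work on the good event $\mathcal{E}$ of probability at least $1-\delta$ on which every \ReprMean{} output is accurate, i.e. $|\widehat{\vr}_h(a)-\vr(a)|\le\zetavec_h(a)$ and $|\widehat{\vg}_{h,i}(a)-\vg_i(a)|\le\zetavec_h(a)$ for all $h\le H$, $a$, $i$. This is the same event used for Lemma~\ref{lem:safe_set} and Lemma~\ref{lem:hard_safety}. Since $H\le K\lceil\log_2 T\rceil$ by Lemma~\ref{lem:replicability_and_epochs_hard}, the union bound with $\delta'$ gives this event. On $\mathcal{E}$, Lemma~\ref{lem:safe_set} gives $\vx^\star\in\XOptSafe_h$, so optimism yields $\vr^\top\vx^\star\le(\widehat{\vr}_h+\zetavec_h)^\top\widetilde{\vx}_h$.

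\textbf{Per-epoch regret decomposition.} Within epoch $h$ we write
\[
\vr^\top\vx^\star-\vr^\top\vx^\circ_h=\big(\vr^\top\vx^\star-\vr^\top\widetilde{\vx}_h\big)+\sigma_h\,\vr^\top(\widetilde{\vx}_h-\vx^\diamond)\le 2\zetavec_h^\top\widetilde{\vx}_h+\sigma_h .
\]
The first term is controlled exactly as in Theorem~\ref{th:main_theorem_soft}. The remaining work is to bound $\sigma_h$ in terms of the confidence widths.

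\textbf{Bounding $\sigma_h$.} For $i\in\mathcal{V}_h$, the numerator of \eqref{eq:sigma_h_def} is at most $(\widehat{\vg}_{h,i}+\zetavec_h)^\top\widetilde{\vx}_h-\alpha_i$. Because $\widetilde{\vx}_h\in\XOptSafe_h$, we have $(\widehat{\vg}_{h,i}-\zetavec_h)^\top\widetilde{\vx}_h\le\alpha_i$, so the numerator is at most $2\zetavec_h^\top\widetilde{\vx}_h$. The denominator is at least $\lambda_i$ plus a nonnegative term.

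Here there is a point to check. By Assumption~\ref{ass:knowldege}, $\lambda_i$ denotes the cost of $\vx^\diamond$, whereas the paper's bound involves $\lambdaMin$. I would read the denominator as $\ge\alpha_i-\lambda_i\ge\lambdaMin$, which is the margin interpretation; this is also what makes Lemma~\ref{lem:hard_safety} work. Under that reading,
\[
\sigma_h\le \frac{2\zetavec_h^\top\widetilde{\vx}_h}{\lambdaMin}.
\]
Hence the per-round regret in epoch $h$ is $\mathcal{O}\big(\zetavec_h^\top\widetilde{\vx}_h/\lambdaMin\big)$.

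\textbf{Summing over epochs (the main obstacle).} The algorithm plays $\vx^\circ_h$, not $\widetilde{\vx}_h$. The sum $\sum_t\zetavec_h^\top\widetilde{\vx}_h$ must therefore be related to the counts of actions actually sampled. Coordinatewise $\vx^\circ_h\ge(1-\sigma_h)\widetilde{\vx}_h$, so while $\sigma_h\le 1/2$ we get $\zetavec_h^\top\widetilde{\vx}_h\le 2\zetavec_h^\top\vx^\circ_h$. We can then reuse the soft-constraint argument. In that argument, $\sum_t\zetavec_h^\top\vx^\circ_h$ is bounded through a martingale (Azuma/Freedman) concentration step converting it into $\sum_t\zetavec_h(a_t)$. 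Since counts at most double within an epoch, $\zetavec_h(a_t)\le\sqrt2\,\zetavec$ evaluated at the current count $N_t(a_t)$. Summing $1/\sqrt{N}$ per arm gives $\widetilde{\mathcal O}\big(\sqrt{KT}\,\log(1/\delta)/(\rho'-2\delta')\big)$. Substituting $\rho'$ yields the $mK^2\sqrt{KT}/(\rho-2\delta)$ factor, which we multiply by $1/\lambdaMin$.

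The epochs with $\sigma_h>1/2$ need separate treatment. Any such epoch requires $\zetavec_h^\top\widetilde{\vx}_h\ge\lambdaMin/4$, which means some action $a$ with non-negligible weight in $\widetilde{\vx}_h$ has a count $N(a)$ below a threshold $\mathcal{O}\big(\log(1/\delta')/(\lambdaMin^2(\rho'-2\delta')^2)\big)$. The plan is to show that, even with $\sigma_h$ close to $1$, such an action is played with probability at least on the order of $(1-\sigma_h)\,\widetilde{\vx}_h(a)$. This mixing is what keeps $\sigma_h$ from staying near $1$, so the total number of rounds spent in these epochs is bounded by a quantity independent of $T$. This is the $\gamma$ term.

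I expect this last step to be the delicate part. If it fails, the fallback is to bound these rounds directly using the doubling epoch structure: there are at most $K\lceil\log_2T\rceil$ epochs, each arm's count doubles at its epoch end, and the regret per round is at most $1$. That would put a $\log T$ factor into $\gamma$, which is absorbed by the $\widetilde{\mathcal{O}}$.
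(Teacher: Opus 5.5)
Your per-round decomposition, the bound $\sigma_h\le 2\zetavec_h^\top\widetilde{\vx}_h/\lambdaMin$, and the treatment of epochs with $\sigma_h\le 1/2$ match the paper. You are also right that $\lambda_i$ has to be read as the margin of $\vx^\diamond$.

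The gap is the step you flag yourself: proving that only a $T$-independent number of rounds have $\sigma_{h(t)}>1/2$. Your justification is circular. You say the under-explored arm is played with probability of order $(1-\sigma_h)\widetilde{\vx}_h(a)$ ``even with $\sigma_h$ close to $1$'', and that ``this mixing keeps $\sigma_h$ from staying near $1$''. But if $\sigma_h$ could approach $1$, that probability would have no floor, the arm need not be sampled, and its width need not shrink.

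The missing idea is that $\sigma_h$ is \emph{deterministically} bounded away from $1$. Because of the truncation $\min\{\cdot,1\}$ in \eqref{eq:sigma_h_def}, the numerator is at most $1-\alpha_j$. Hence $\sigma_h\le 1/(1+\lambdaMin)$ (Lemma~\ref{lem:sigma_bounds}), and so $\vx^\circ_h\ge\frac{\lambdaMin}{1+\lambdaMin}\widetilde{\vx}_h$ coordinatewise. Combined with $\zetavec_h^\top\widetilde{\vx}_h>\lambdaMin/4$, this gives $\mathbb{E}[\zetavec_{h(t)}(a_t)\mid\mathcal{F}_t]\ge\lambdaMin^2/8$ on every such round (cf.\ Lemma~\ref{lem:regret_hard_sigma_concentration_auxiliary_2}).

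Two further pieces are then needed. First, your ``non-negligible weight'' claim must be quantified. Widths can be as large as $\sqrt{2\ln(2/\delta')}/(\rho'-2\delta')\gg 1$, which is why the paper first removes the rounds where some width exceeds $1$ (Lemma~\ref{lem:initial_rounds_hard}). Second, you need a martingale argument on the indicators of pulling an arm whose width exceeds a $\lambdaMin$-dependent threshold. It uses that each arm can be pulled only $\widetilde{\mathcal{O}}(1/(\lambdaMin^{4}(\rho'-2\delta')^2))$ times while above it (Lemma~\ref{lem:regret_hard_sigma_concentration_auxiliary_1}). The resulting count of rounds is the $\gamma$ term.

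Your fallback does not close the gap by itself, because $H\le K\lceil\log_2 T\rceil$ bounds the number of epochs, not their length. An epoch ends only when some arm doubles its count. If the small-count arms in the support of $\vx^\circ_h$ carry vanishing probability, one epoch with $\sigma_h>1/2$ can last order $T$ rounds. ``Regret at most $1$ per round'' then gives no $T$-independent bound.

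Once you have the floor $\vx^\circ_h\ge\frac{\lambdaMin}{1+\lambdaMin}\widetilde{\vx}_h$, however, the doubling idea can be completed as an alternative to the paper's counting. In a bad epoch, the set $U$ of arms with width at least $\lambdaMin/8$ is fixed for the whole epoch. It receives probability mass at least $\frac{\lambdaMin}{2}\cdot\frac{\lambdaMin}{8\max_a\zetavec_h(a)}$ in every round. One of its arms doubles after at most $\sum_{a\in U}\max\{\widetilde{N}_h(a),1\}=\widetilde{\mathcal{O}}(K/(\lambdaMin^2(\rho'-2\delta')^2))$ pulls from $U$. So each of the at most $K\lceil\log_2 T\rceil$ bad epochs is short with high probability.
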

Comparing the performance of Algorithm~\ref{alg:main_hard_constr} with the state-of-the-art, our algorithm matches the regret bound of~\citep{pacchiano} for the MAB case, in terms of $T$. To conclude, we underline that the $\nicefrac{1}{\lambda_{\min}}$ dependence in the regret bound is provably necessary~\citep{stradi2024learning}. 


\bibliographystyle{plainnat}
\bibliography{example_paper}

\newpage
\appendix
\tableofcontents

\section{Additional Related Works}
\label{App:related}

In this section, we provide additional discussion on the constrained online learning literature.

Online learning with \emph{unknown} constraints has been extensively studied (see, \emph{e.g.},~\citep{Mannor, liakopoulos2019cautious, Unifying_Framework, stradi2025noregret}). Two main settings are typically considered. In \emph{soft-constraint} regimes (see, \emph{e.g.},~\citep{chen2022strategies}), the goal is to ensure that the algorithm’s cumulative constraint violations grow sub-linearly. In \emph{hard-constraint} regimes, instead, algorithms are required to satisfy the constraints at every round, usually under the assumption that a strictly feasible decision is known in advance (see, \emph{e.g.},~\citep{pacchiano}). Both soft and hard constraints have also been extended to settings that are more challenging than MABs, such as linear bandits (see, \emph{e.g.},~\citep{gangrade2024safe}) and games (see, \emph{e.g.},~\citep{safetree}). Finally, stronger notions of regret bound (such as data-dependent ones) have been provided for both \emph{soft} and \emph{hard} constraints in~\citep{genalti2025datadependent}.
To the best of our knowledge, none of these contributions in the constrained online learning literature focus on replicability.

A strongly related body of work is the one on online learning in constrained Markov decision processes (see, \emph{e.g.},~\citep{Online_Learning_in_Weakly_Coupled, Constrained_Upper_Confidence, bai2020provably, Exploration_Exploitation, Upper_Confidence_Primal_Dual, stradi2024, stradi2024best, muller2024truly, stradioptimal,stradi2025taming}). The most related work is~\citep{stradi2024learning}, where both \emph{soft} and \emph{hard} constraints are studied in CMDPs with adversarial losses and stochastic constraints. Nonetheless, none of these works focus on developing replicable algorithms.

\section{Additional Notation}
We introduce some additional notation that we will use throughout the following sections.
We let $h(t)$ be the epoch associated with round $t \in [T]$.
Specifically, when we analyze Algorithm~\ref{alg:main_unconstrained} we let $h(t) \in \mathbb{N}$ be the value such that the algorithm plays action $a_t = a^\circ_h$ at round $t$.
Similarly, analyzing Algorithm~\ref{alg:main} and Algorithm~\ref{alg:main_hard_constr} we let $h(t)$ be such that the algorithm plays according to strategy $\vx_t = \vx^\circ_h$ at round $t$.
With this notation, we can formally define the first round of epoch $h$ as $t_h \coloneqq \min \{t \in[T] \mid h(t)=h\}$ for every epoch $h \in \{0,\,\dots,H\}$.
Let us also recall that for every action $a \in [K]$, round $t \in [T]$ and epoch $h \in \{0,\,\dots,H\}$ the quantities $N_t(a)$ and $\widetilde{N}_t(a)$ are defined as:
\begin{align*}
    N_t(a) &\coloneqq \sum_{j=1}^{t-1} \mathbb{I}\{a_j = a\}, \\
    \widetilde{N}_t(a) &\coloneqq N_{t_h}(a) = \sum_{j=1}^{t_h-1} \mathbb{I}\{a_j = a\},
\end{align*}
where $N_t(a)$ is the number of times action $a$ has been selected up to round $t$ \emph{excluded}, 
Notice that all the algorithms presented in the paper correctly keep track of the quantities defined above.

For the ease of exposition, we also define the event under which the estimators computed by Algorithm~\ref{alg:main} and Algorithm~\ref{alg:main_hard_constr} belong to the appropriate confidence interval.
Formally, we define the \emph{clean event} $\mathcal{E}$ as follows:
\begin{definition}[Clean Event]
    \label{def:clean_event}
    We let $\mathcal{E}$ be the event under which, for every epoch $h \ge 1$ and arm $a \in [K]$, it holds that:
    \begin{equation}
        \label{eq:clean_r}
        \widehat{\vr}_h(a) -\zetavec_h(a) \le \vr(a) \le \widehat{\vr}_h(a) +\zetavec_h(a)
    \end{equation}
    and 
    \begin{equation}
        \label{eq:clean_g}
        \widehat{\vg}_{h,i}(a) -\zetavec_h(a) \le \vg(a) \le \widehat{\vg}_{h,i}(a) +\zetavec_h(a)
    \end{equation}
    for every constraint $i \in [m]$.
\end{definition}

For the analysis of Algorithm~\ref{alg:main_hard_constr}, we also let $\XPessSafe_h$ be the pessimistic set of safe strategies defined as:
\begin{equation}
    \label{eq:pessimistic_safe_set_def}
    \XPessSafe_h \coloneqq \{\vx \in \Delta_K \mid (\widehat{\vg}_{h,i} +\zetavec_h)^\top \vx \le \alpha_i \quad \forall i \in [m] \},
\end{equation}
which contains only safe strategies under the event $\mathcal{E}$.
\section{Proofs omitted from Section~\ref{sec:unconstrained}}
\label{apx:unconstrained}

\ReplicableUnconstrainedLemma*
\begin{proof}
    When $T=1$, there is just a single epoch $h=0 \le K\lceil \log_2(T) \rceil$ and Algorithm~\ref{alg:main_unconstrained} deterministically chooses the strategy to play during the single round. 
    We will therefore consider the non-trivial case where $T \ge 2$.
    Furthermore, we will denote with $H$ be the last epoch.

    For every $a \in[K]$, let $\mathcal{H}_a$ be the set of epochs $h \in [2,H]$ such that $a^\circ_{h-1} = a$.
    Denoting with $h_a(i)$ the  $i$-th element of $\mathcal{H}_a$ in ascending order, we show that $\widetilde{N}_{h_a(i)}(a) = 2^{i-1}$.
    When $i=1$, we have $\widetilde{N}_{h_a(1)}(a) = 1$ as an epoch terminates when an action is played for the first time.
    Instead, when $i>1$, we observe that:
    \begin{equation*}
        \widetilde{N}_{h_a(i)}(a) = 2\widetilde{N}_{h_a(i)-1}(a) = \widetilde{N}_{h_a(i-1)}(a).
    \end{equation*}
    As a result, it holds that $\widetilde{N}_{h_a(i)}(a) = 2^{i-1}$.
    Since $T \ge \widetilde{N}_{h_a(i)}(a)$, we also get $i \le \log_2(T) + 1$, which in turn implies that $\mathcal{H}_a \le \lfloor\log_2(T) + 1\rfloor \le \lceil \log_2(T)\rceil$.
    Finally, we can upper bound the number of epochs of Algorithm~\ref{alg:main_unconstrained} as:
    \begin{equation*}
        H \le \sum_{a \in [K]} |\mathcal{H}_a| \le \sum_{a \in [K]} \lceil \log_2(T)\rceil \le K\lceil \log_2(T) \rceil.
    \end{equation*}
    

    Now we show that Algorithm~\ref{alg:main_unconstrained} is $\rho$-replicable.
    Consider two different trials of Algorithm~\ref{alg:main_unconstrained}.
    Let $H_i$ be the last epoch of trial $i$, and $a^i_0, a^i_1, \dots a^i_{H_i}$ and $\widehat{\vr}^i_0,\widehat{\vr}^i_1,\dots,\widehat{\vr}^i_{H_i}$ be the sequence of actions and estimators computed at each epoch of trial $i \in \{1,2\}$, respectively.
    By construction, Algorithm~\ref{alg:main_unconstrained} plays the same sequence of actions in both trials if $a^1_h = a^2_h$ for every $h \in [H_1]$ -- notice that in this case $H_1 = H_2$.
    
    Initially (epoch $h=0$), Algorithm~\ref{alg:main_unconstrained} plays always the same action $a^\circ_0 = a^1_0 = a^2_0$ computed without any sample from the environment. 
    Hence the algorithm plays the same actions during the first epoch in both trials, which also always lasts for a single round only.
    Furthermore, the estimators $\widehat{\vr}^1_0$ and $\widehat{\vr}^2_0$ coincide.

    Now suppose, by induction, that with  probability at least $1-(h-1)\rho'$ a phase $h \ge 1$ begins at rounds $\bar t \in [T]$ in both trials, and that $a^1_{h'} = a^2_{h'} \eqqcolon a_{h'}$ and $\widehat{\vr}^i_{h'} = \widehat{\vr}^i_{h} \eqqcolon \vr_{h'}$ for every previous epoch $0 \le h' < h$.
    Observe that this inductive hypothesis holds for epoch $h = 1$.
    For every action $a \neq a_{h-1}$ Algorithm~\ref{alg:main_unconstrained} computes the same estimator $\widehat{\vr}^1_h(a) = \widehat{\vr}^2_h(a) = \widehat{\vr}_{h-1}(a)$.
    For action $a = a_{h-1}$, it employs a $\rho'$-replicable estimator (Lemma~\ref{lem:replicable_estiamtor}).
    Therefore, the estimators $\widehat{\vr}^1$ and $\widehat{\vr}^2$ coincide with probability at least $1-h\rho'$.
    When this happens, the Algorithm selects the same action $a^1_h = a^2_h$ to be played at epoch $h$.
    As a result, the epoch lasts the same number of rounds in both trials with probability at least $1-h\rho'$ and the inductive hypothesis holds at epoch $h+1$.

    Since there are $K\lceil\log_2(T)\rceil$ epochs, the algorithm  plays the same sequence of actions in both trials with probability at least $1-K\lceil\log_2(T)\rceil \rho'$.
    Taking $\rho' \le \frac{\rho}{K\lceil\log_2(T)\rceil}$, the algorithm is $\rho$-replicable.
\end{proof}
\begin{restatable}{lemma}{UnconsrainedNumberPullsLemma}
    \label{lem:number_pulls_unconstrained}
    During the execution of Algorithm~\ref{alg:main_unconstrained}, each suboptimal action $a \in [K]$ such that $\Delta(a) > 0$ is played at most:
    \begin{equation*}
        N_{T+1}(a) \le \frac{15 \ln\left( \frac{1}{\delta'} \right)}{\Delta(a)^2 (\rho'-2\delta')^2}
    \end{equation*}
    times with probability at least $1-\delta$.
\end{restatable}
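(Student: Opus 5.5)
The plan is to work under a clean event for the unconstrained algorithm and then argue through the standard UCB selection inequality, using the doubling structure of the epochs to relate $N_{T+1}(a)$ to $\widetilde N_h(a)$.

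\textbf{Step 1 (clean event).} Let $\mathcal{E}_0$ be the event that, for every epoch $h\ge1$ and every action $a$ whose estimator is refreshed at $h$, we have $|\widehat{\vr}_h(a)-\vr(a)|\le\zetavec_h(a)$. Since $\widetilde N_h(a)\ge1$, each such estimator is an output of \ReprMean{} on $\widetilde N_h(a)$ i.i.d.\ samples. By Lemma~\ref{lem:replicable_estiamtor}, with the precision $\epsilon=\zetavec_h(a)$ from Equation~\eqref{eq:zetavec_def}, each fails with probability at most $\delta'$. By Lemma~\ref{lem:replicability_and_epochs_unconstrained} there are at most $K\lceil\log_2 T\rceil$ refreshes, so a union bound gives $\mathbb{P}(\mathcal{E}_0)\ge 1-K\lceil\log_2T\rceil\delta'\ge1-\delta$. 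Estimators that are not refreshed keep their old value and width, so they stay valid. The exception is epoch $0$ and actions never sampled. For these, $\widetilde N=0$, so $\zetavec\ge 1\cdot\sqrt{2\ln(2/\delta')}/(\rho'-2\delta')>1$, and the upper confidence bound still dominates $\vr(a)$. The optimal arm's index is therefore always at least $\vr(a^\star)$.

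\textbf{Step 2 (selection inequality).} Fix a suboptimal $a$ and let $h$ be the last epoch in which $a^\circ_h=a$ (if there is none, the bound is trivial). Optimism gives
\[
\widehat{\vr}_h(a)+\zetavec_h(a)\ge\widehat{\vr}_h(a^\star)+\zetavec_h(a^\star)\ge\vr(a^\star)=\vr(a)+\Delta(a).
\]
Combining this with $\widehat{\vr}_h(a)\le \vr(a)+\zetavec_h(a)$ yields $2\zetavec_h(a)\ge\Delta(a)$. When $\widetilde N_h(a)\ge1$ this means $\widetilde N_h(a)\le \frac{8\ln(2/\delta')}{\Delta(a)^2(\rho'-2\delta')^2}$. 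The case $\widetilde N_h(a)=0$ also satisfies this bound.

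\textbf{Step 3 (doubling).} Epoch $h$ ends once $N_t(a)\ge\max\{2\widetilde N_h(a),1\}$, and after $h$ the action $a$ is never played again. Hence $N_{T+1}(a)\le \max\{2\widetilde N_h(a),1\}\le \frac{16\ln(2/\delta')}{\Delta(a)^2(\rho'-2\delta')^2}$, where the last inequality uses $\Delta(a)\le1$ to absorb the $1$.

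\textbf{Main obstacle.} Matching the stated constant $15\ln(1/\delta')$ exactly. Since $\delta'$ is tiny, $16\ln(2/\delta')$ versus $15\ln(1/\delta')$ needs $\delta'\le 2^{-16}$ or so. I would first try to tighten the argument by using $\widetilde N_h(a)$ rounded to powers of two. Failing that, I would accept a constant of the stated order and invoke the smallness of $\delta'$, which holds since $\delta'=\delta/(2K\lceil\log T\rceil)$ with $K\ge2$.
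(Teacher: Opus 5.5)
Your Steps 1--3 follow the paper's proof essentially verbatim: a clean event via a union bound over at most $K\lceil\log_2 T\rceil$ refreshes, then the selection inequality at the last epoch in which $a$ is chosen, giving $\zetavec_h(a)\ge\Delta(a)/2$ and hence $\widetilde N_h(a)\le 8\ln(2/\delta')/(\Delta(a)^2(\rho'-2\delta')^2)$. The doubling rule then gives $N_{T+1}(a)\le\max\{2\widetilde N_h(a),1\}\le 16\ln(2/\delta')/(\Delta(a)^2(\rho'-2\delta')^2)$. The paper's proof ends at exactly this bound and never reconciles it with the $15\ln(1/\delta')$ in the statement. Up to that point your argument is sound and agrees with the paper.

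The gap is your proposed reconciliation, which runs the wrong way. For every $\delta'\in(0,1)$ we have $16\ln(2/\delta')-15\ln(1/\delta')=16\ln 2+\ln(1/\delta')>0$, and this gap grows as $\delta'\to 0$, so small $\delta'$ works against you. What $\delta'\le 2^{-16}$ actually gives is $16\ln(2/\delta')\le 17\ln(1/\delta')$; with $\delta'\le 1/2$ you get $\le 32\ln(1/\delta')$.

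Rounding to powers of two does not rescue the constant either. The last refresh count $\widetilde N_h(a)=2^j$ can sit just below the threshold $8\ln(2/\delta')/(\Delta(a)^2(\rho'-2\delta')^2)$. On the clean event, the index of $a$ can be as large as $\vr(a)+2\zetavec_h(a)$ while the index of $a^\star$ can be as small as $\vr(a^\star)$. So the guarantee of Lemma~\ref{lem:replicable_estiamtor} alone does not rule out $N_{T+1}(a)=2^{j+1}$, which is essentially $16\ln(2/\delta')/(\Delta(a)^2(\rho'-2\delta')^2)$.

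Hence the literal constant $15\ln(1/\delta')$ is not reachable by your route or by the paper's. State the $16\ln(2/\delta')$ bound you already have. That is all the subsequent regret theorem needs, since it uses this lemma only inside $\mathcal{O}(\cdot)$.
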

\begin{proof}
    We first prove the upper bound on the number of pulls conditioned under the event that $|\vr(a) -\widehat{\vr}_h(a)| \le \zetavec_h(a)$ for every $a \in [K]$.
    We will later show that this event has probability at least $1-\delta$.
    
    Consider a suboptimal action $a \in [K] : \Delta(a)>0$ pulled at least once during the execution of Algorithm~\ref{alg:main_unconstrained} (the upper bound trivially holds otherwise).
    Let $h$ be the last epoch where action $a$ is selected, \emph{i.e.} $a^\circ_h = a$ and $a^\circ_{h'} \neq a$ for every subsequent epoch $h' > h$.
    For such an action to be selected at an epoch $h$, it must be the case that:
    \begin{equation*}
        \widehat{\vr}_h(a) +\zetavec_h(a) \ge \widehat{\vr}_h(a^\star) +\zetavec_h(a^\star).
    \end{equation*}
    We further observe that, under the event that each estimator lies within the appropriate confidence interval, we have:
    \begin{align*}
        \vr(a) +2\zetavec_h(a) \ge \widehat{\vr}_h(a) +\zetavec_h(a) 
        \ge \widehat{\vr}_h(a^\star) +\zetavec_h(a^\star) 
        \ge \vr(a^\star).
    \end{align*}
    Therefore:
    \begin{equation*}
        \zetavec_h(a) \ge \frac{\Delta(a)}{2}.
    \end{equation*}
    By the definition of $\zetavec_h(a)$ (Equation~\eqref{eq:zetavec_def}) and simple computations we get:
    \begin{equation*}
        \widetilde{N}_h(a) \le \max\{\widetilde{N}_h(a),1\} \le \frac{8 \ln\left( \frac{2}{\delta'} \right)}{\Delta(a)^2(\rho'-2\delta')^2}.
    \end{equation*}
    We now observe that since $h$ is the last epoch where action $a$ is selected, we have $N_{T+1}(a) \le \max\{2\widetilde{N}_h(a),1\}$.
    Indeed, the epoch terminates when action $a$ has been played $\max\{2\widetilde{N}_h(a),1\}$ times, and such an action is never played afterward.
    As a result:
    \begin{equation*}
        N_{T+1}(a) \le \max\{2\widetilde{N}_h(a),1\} \le \frac{16 \ln\left( \frac{2}{\delta'} \right)}{\Delta(a)^2(\rho'-2\delta')^2},
    \end{equation*}
    where we observe that:
    \begin{equation*}
        \frac{16 \ln\left( \frac{2}{\delta'} \right)}{\Delta(a)^2(\rho'-2\delta')^2} \ge \frac{16 \ln\left( \frac{2}{\delta'} \right)}{(1-2\delta')^2} \ge 1
    \end{equation*}
    for every $\delta' \in (0,1)$.

    In order to conclude the proof, we show that that $|\vr(a) -\widehat{\vr}_h(a)| \le \zetavec_h(a)$ for every $a \in [K]$ with probability at least $1-\delta$.
    Suppose, by induction, that at some epoch $1 \le h < H$ it holds $|\vr(a) -\widehat{\vr}_h(a)| \le \zetavec_h(a)$ for every $a \in [K]$  with probability at least $1-(h-1)\delta'$ --- $H$ denotes the last epoch.
    Algorithm~\ref{alg:main_unconstrained} plays always the same action $a^\circ_h$ for the entire epoch $h$.
    Subsequently, it computes a new estimator $\widehat{\vr}_{h+1}(a^\circ_h)$ which according to Lemma~\ref{lem:replicable_estiamtor} lies at distance at most $\zetavec_{h+1}(a^\circ_h)$ from $\vr(a)$ with probability at least $1-\delta'$.
    For every other action $a \neq a^\circ_h$, it computes the estimator $\widehat{\vr}_{h+1}(a)=\widehat{\vr}_h(a)$, which lies at distance $\zetavec_h(a)$ from $\vr(a)$ with probability at least $1-(h-1)\delta'$ by the inductive hypothesis.
    We observe that $\widetilde{N}_{h+1}(a)=\widetilde{N}_h(a)$, as the action has not been pulled during the epoch $h$.
    Therefore, $\zetavec_h(a) = \zetavec_{h+1}(a)$ by Equation~\eqref{eq:zetavec_def}, and the estimator lies within the correct interval with probability at least $1-(h-1)\delta'$.
    By an union bound, every estimator lies within the correct interval with probability at least $1-h\delta'$, and the inductive hypothesis holds at epoch $h+1$.
    Furthermore, the inductive hypothesis holds at the epoch $h=1$, as the algorithm plays deterministically at epoch $h=0$ and $\zetavec_1(a) \ge 1$ for every action $a \in [K]$.
    As a result, since there are at most $K\lceil\log_2(T)\rceil$ epochs (Lemma~\ref{lem:replicability_and_epochs_unconstrained}), with probability at least $1-K\lceil\log_2(T)\rceil\delta'$ every estimator computed by Algorithm~\ref{alg:main_unconstrained} lies within the appropriate confidence interval.
    The proof is concluded by observing that $\delta'\le\delta/(K\lceil\log_2(T)\rceil)$.
\end{proof}

\RegretUnconstrained*
\begin{proof}
    Thanks to Lemma~\ref{lem:number_pulls_unconstrained}, the number of times suboptimal action $a \in [K]$ is played is bounded by:
    \begin{equation*}
        N_{T+1}(a) \le \frac{15 \ln\left( \frac{1}{\delta'} \right)}{\Delta(a)^2 (\rho'-2\delta')^2}
    \end{equation*}
    with probability at least $1-\delta$.
    Let us continue the analysis conditioned on such an event.
    
    Let $\mathcal{S} \subseteq [K]$ be a set of actions such that $\Delta(a) > 0$ for every $a \in \mathcal{S}$.
    We have that:
    \begin{align}
        \sum_{a \in \mathcal{S}} \Delta(a)N_{T+1}(a)
        &\le \mathcal{O} \left( \frac{1}{(\rho'-2\delta')^2}\log\left( \frac{1}{\delta'} \right) \sum_{a \in \mathcal{S}} \frac{1}{\Delta(a)} \right)  \nonumber\\
        &=\mathcal{O} \left( \frac{K^2 \log^2(T)}{(\rho-\delta)^2} \log\left( \frac{K\log(T)}{\delta} \right) \sum_{a \in \mathcal{S}} \frac{1}{\Delta(a)}\right). \label{eq:proof_unconstrained_regret_set}
    \end{align}
    In particular, by taking $\mathcal{S} = \{a \in [K] \mid \Delta(a) > 0\}$, we have:
    \begin{equation*}
        R_T = \sum_{a : \Delta(a)>0} \Delta(a)N_{T+1}(a) \le \mathcal{O} \left( \frac{K^2 \log^2(T)}{(\rho-\delta)^2} \log\left( \frac{K\log(T)}{\delta} \right) \sum_{a : \Delta(a)>0} \frac{1}{\Delta(a)}\right),
    \end{equation*}
    proving the instance-dependent regret bound.
    Furthermore, for every $\epsilon \in (0,1)$ we have:
    \begin{align*}
        R_T &= \sum_{a : \Delta(a) \le \epsilon} \Delta(a) N_{T+1}(a) + \sum_{a : \Delta(a) > \epsilon} \Delta(a) N_{T+1} (a) \\
        &\le \mathcal{O} \left(\epsilon T +\frac{K^2 \log^2(T)}{(\rho-\delta)^2} \log\left( \frac{K\log(T)}{\delta} \right) \sum_{a : \Delta(a)>\epsilon} \frac{1}{\Delta(a)} \right) \\
        &\le \mathcal{O} \left(\epsilon T +\frac{K^3 \log^2(T)}{(\rho-\delta)^2} \log\left( \frac{K\log(T)}{\delta} \right) \frac{1}{\epsilon} \right),
    \end{align*}
    where we employed Equation~\eqref{eq:proof_unconstrained_regret_set} with $\mathcal{S}=\{a \in [K] \mid \Delta(a) > \epsilon\}$.
    Taking:
    \begin{equation*}
        \epsilon \coloneqq \frac{K^{3/2}\log_2(T)}{(\rho-\delta)\sqrt{T}}\sqrt{\ln\left( \frac{K\log(T)}{\delta} \right)},
    \end{equation*}
    we obtain the instance-independent bound:
    \begin{equation*}
        R_T \le \mathcal{O}\left(\frac{1}{(\rho-\delta)} K\sqrt{KT \log\left( \frac{K\log(T)}{\delta} \right)} \right),
    \end{equation*}
    concluding the proof.
\end{proof}

\section{Proofs Omitted from Section~\ref{sec:main}}
In this Section we provide the proofs of the propositions and theorems of Section~\ref{sec:main}.
Specifically, we first prove Lemma~\ref{lem:replicability_and_epochs} and Lemma~\ref{lem:safe_set}.
Then, for the ease of exposition, we provide two additional lemmas, showing that conditioned on the clean event (Definition~\ref{def:clean_event}) the regret and the cumulative violation of Algorithm~\ref{alg:main} are upper bounded by the same quantity.
After another technical lemma (Lemma~\ref{lem:concentration}, which will be employs also in Appendix~\ref{App:hard}), we prove Theorem~\ref{th:main_theorem_soft}.

\ReplicableSoftLemma*
\begin{proof}
    Let $H$ be the number of epochs during the execution of Algorithm~\ref{alg:main}.
    First, we show that $H \le K\lceil\log(T)\rceil$.
    Let us define the set of rounds $\mathcal{T}_a$ as:
    \begin{equation*}
        \mathcal{T}_a \coloneqq \{t \in [T] \mid N_t(a) = \widetilde{N}_{h(t)}(a) = \max\{2\widetilde{N}_{h(t)-1}(a),1\}\}.
    \end{equation*}
    By construction, $h(t) = h(t-1)+1$ if and only if $t \in \mathcal{T}_a$ for some $a \in [K]$ (see Line~\ref{line:soft_constrained_confront}).
    Let $t_a(i)$ denote the $i$-th element of $\mathcal{T}_a$ in ascending order.
    We prove by induction that $\widetilde{N}_{h(t_a(i))}(a) \ge 2^{i-1}$.
    First, we observe that $\widetilde{N}_{h(t_a(1))}(a) = 1$, as an epoch terminates after an action is played for the first time.
    Reasoning by induction, for $i>1$ we assume that $\widetilde{N}_{h(t_a(i-1))}(a) \ge 2^{i-2}$.
    Therefore:
    \begin{equation*}
        \widetilde{N}_{t_a(i)}(a) = 2\widetilde{N}_{h(t_a(i))-1}(a) = 2\widetilde{N}_{h(t_a(i)-1)}(a) \ge 2^{i-1},
    \end{equation*}
    where the first equality is due to the definition of $\mathcal{T}_a$, the second as $\widetilde{N}_{h}(a)$ is monotone non-decreasing in $h$, and the inequality holds by induction.
    As a result, $\widetilde{N}_{h(t_a(i))}(a) \ge 2^{i-1}$.
    Since $\widetilde{N}_{h(t_a(i))}(a) \le T$, we get $i \le \log_2(T)+1$.
    Consequently, action $a$ produces at most $|\mathcal{T}_a| \le \lfloor \log_2(T)+1 \rfloor \le \lceil \log_2(T) \rceil$ changes of epoch.
    Finally, we observe that:
    \begin{equation*}
        H = \sum_{a \in [K]} |\mathcal{T}_a| \le K\lceil \log_2(T) \rceil,
    \end{equation*}
    proving the upper bound on the number fo epochs.

    Now we show that Algorithm~\ref{alg:main} is $\rho$-replicable.
    Consider two different trials of Algorithm~\ref{alg:main} and and let $\vx^1_1,\dots,\vx^1_{H^1}$ and $\vx^2_1,\dots,\vx^2_T{H^2}$ be the strategies $x^\circ_h$ selected by the algorithm in the first and second trial, respectively.
    Observe that if $\vx^i_h = \vx^1_h$ for every epoch up until $h$, then the Algorithm plays the same sequence of actions at epoch $h$ in both trials. 
    
    The algorithm plays always the same strategy during the first epoch, which lasts a single round.
    Now we reason by induction over the epochs $1 \le h \le H^1$.
    Suppose that $\vx^1_{h'} = \vx^2_{h'}$ for every $0 \le h < h'$.
    Then $\vx^1_h = \vx^2_h$ if the estimators computed by Algorithm~\ref{alg:main} at the beginning of epoch $h$ are the same in the two trials.
    According to Lemma~\ref{lem:replicable_estiamtor}, each single estimator is $\rho'$ replicable.
    Since there are $(m+1)K$ estimators, with probability at least $1-(m+1)K\rho'$ we have $\vx^1_h = \vx^2_h$.
    By a union bound over the $H \le K\lceil\log_2(T)\rceil$ epochs, we have that the algorithm selects the same sequence of strategies in both trial -- which therefore last for the same number of rounds -- with probability at least $1-(m+1)K^2\lceil\log_2(T)\rceil\rho'$.
    Hence Algorithm~\ref{alg:main} is $(m+1)K^2\lceil\log_2(T)\rceil\rho'$-replicable.
    Taking $\rho' \le \frac{\rho}{(m+1)K^2\lceil\log_2(T)\rceil}$, the algorithm is $\rho$-replicable.
\end{proof}

\SafeSetLemma*
\begin{proof}
    By Lemma~\ref{lem:replicability_and_epochs}, the number of epochs of Algorithm~\ref{alg:main} is at most $K\log_2(T)$.
    Employing Lemma~\ref{lem:replicable_estiamtor} and an union bound over the epochs and the $m$ estimators, we have that $\widehat{\vg}_{h,i}(a) -\zetavec_h(a) \le \vg_i(a) \le \widehat{\vg}_{h,i}(a) +\zetavec_h(a)$ with probability at least $1-\rho'mK\log_2(T) \ge 1-\delta$.
    Conditioned on such an event, consider any safe strategy $\vx \in \mathcal{X}^\star$.
    We observe that for every constraint $i \in [m]$:
    \begin{equation*}
        (\widehat{\vg}_{h,i} - \zetavec_h)^\top \vx \le \vg_i^\top \vx \le \alpha_i,
    \end{equation*}
    where the first inequality holds under $\mathcal{E}$, and the second due to the definition of $\mathcal{X}^\star$.
    As a result, $\vx \in \XOptSafe_h$.
\end{proof}
\begin{restatable}{lemma}{RegretLemmaCleanEvent}
    \label{lem:regret_clean_event}
    Conditioned on the event $\mathcal{E}$ (Definition~\ref{def:clean_event}), the regret of Algorithm~\ref{alg:main} satisfies:
    \begin{equation*}
        R_T \le 2\sqrt{2}\sum_{t \in [K]} \sum_{a \in [K]} \sqrt{\frac{2\log(\nicefrac{2}{\delta'})}{\max\{1,N_t(a)\}(\rho'-2\delta')^2}} \vx_t(a).
    \end{equation*}
\end{restatable}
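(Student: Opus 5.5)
The plan is the standard optimism argument, run epoch by epoch and then converted from frozen counters $\widetilde{N}_{h(t)}(a)$ to current counters $N_t(a)$ using the doubling rule. (The outer sum in the statement should read $\sum_{t\in[T]}$.)

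\textbf{Step 1: per-round optimism.} Fix a round $t$ in epoch $h=h(t)\ge 1$, so that $\vx_t=\vx^\circ_h$. Under $\mathcal{E}$, Lemma~\ref{lem:safe_set} gives $\vx^\star\in\mathcal{X}^\star\subseteq\XOptSafe_h$, so $\vx^\star$ is feasible for the optimistic program in Line~\ref{line:soft_constrained_ucb_3}. Since $\vx^\circ_h$ maximizes $(\widehat{\vr}_h+\zetavec_h)^\top\vx$ over $\XOptSafe_h$, and~\eqref{eq:clean_r} holds,
\[
\vr^\top\vx^\star\le(\widehat{\vr}_h+\zetavec_h)^\top\vx^\star\le(\widehat{\vr}_h+\zetavec_h)^\top\vx^\circ_h\le(\vr+2\zetavec_h)^\top\vx^\circ_h .
\]
Hence the instantaneous regret is at most $2\zetavec_h^\top\vx_t=2\sum_a\zetavec_{h(t)}(a)\vx_t(a)$. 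In epoch $0$ the regret is at most $1$, and it lasts one round. This is dominated by the $t=1$ term, since $N_1(a)=0$ makes the bound $2\sqrt{2}\cdot\sqrt{2\log(2/\delta')}/(\rho'-2\delta')\ge 1$.

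\textbf{Step 2: from frozen to current counters.} By the termination rule in Line~\ref{line:soft_constrained_confront}, for every $t$ inside epoch $h$ and every $a$ we have $N_t(a)<\max\{2\widetilde{N}_h(a),1\}$. Otherwise the epoch would already have ended at or before $t$.
\begin{itemize}
\item If $\widetilde{N}_h(a)\ge1$, this gives $\max\{1,N_t(a)\}\le 2\widetilde{N}_h(a)$.
\item If $\widetilde{N}_h(a)=0$, it forces $N_t(a)=0$, so $\max\{1,N_t(a)\}=1=\max\{1,\widetilde{N}_h(a)\}$.
\end{itemize}
In either case $\max\{1,N_t(a)\}\le 2\max\{1,\widetilde{N}_h(a)\}$. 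Plugging this into the definition~\eqref{eq:zetavec_def} gives
\[
\zetavec_{h(t)}(a)\le\sqrt2\,\sqrt{\frac{2\log(2/\delta')}{\max\{1,N_t(a)\}(\rho'-2\delta')^2}} .
\]

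\textbf{Step 3: sum.} Summing the bound of Step~1 over $t\in[T]$ and substituting Step~2 produces exactly the factor $2\sqrt2$ in the statement.

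The main obstacle is the edge case in Step~2 for actions with $\widetilde{N}_h(a)=0$ and the off-by-one at epoch boundaries. This is where the "$\exists a$" termination condition matters: it guarantees the doubling invariant holds for all actions simultaneously, not just the one selected.
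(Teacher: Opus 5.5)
Your proposal is correct and follows the paper's proof. Under $\mathcal{E}$, optimism over $\XOptSafe_h$ (which contains $\vx^\star$) bounds the instantaneous regret by $2\zetavec_{h(t)}^\top\vx_t$, and the doubling invariant $N_t(a)\le 2\widetilde N_{h(t)}(a)$ then passes to current counters at the cost of a factor $\sqrt{2}$. Your separate handling of epoch $0$ and of actions with $\widetilde N_h(a)=0$ is slightly more careful than the paper's one-line appeal to that invariant, and you are right that the outer sum should run over $t\in[T]$.
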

\begin{proof}
    As a first step, we prove that, under the event $\mathcal{E}$, it holds that: $(\widehat{\vr}_{h} +\zetavec_{h})^\top \vx^\circ_h \ge \vr^\top \vx^\star$ for every epoch $h$, where $\vx^\star$ is any optimal safe strategy.
    We observe that $\vx^\star \in \XOptSafe_{h}$, as $(\widehat{\vg}_{h,i} -\zetavec_h)^\top \vx^\star \le \vg_i^\top \vx^\star \le \alpha_i$ due to Equation~\ref{eq:clean_g} and the fact that $\vx^\star$ is safe.
    By construction (see Algorithm~\ref{alg:main}), we have:
    \begin{equation*}
        \vx^\circ_h \in \argmax_{\vx \in \XOptSafe_{h}} (\widehat{\vr}_{h} +\zetavec_{h})^\top \vx.
    \end{equation*}
    Consequently:
    \begin{equation}
        \label{eq:proof_regret_ucb}
        (\widehat{\vr}_h +\zetavec_h)^\top \vx^\circ_h \ge (\widehat{\vr}_h +\zetavec_h)^\top \vx^\star \ge \vr^\top \vx^\star,
    \end{equation}
    where the first inequality holds due to the definition of $\vx^\circ_h$ and the fact that $\vx^\star \in \XOptSafe_{h}$, while the second one is true under the event $\mathcal{E}$.

    We now derive an upper bound of the regret as follows:
    \begin{subequations}
    \begin{align}
        R_T &= \sum_{t \in [T]} (\vr^\top \vx^\star - \vr^\top \vx_t) \nonumber \\
        &= \sum_{t \in [T]} (\vr^\top \vx^\star - \vr^\top \vx_{h(t)}) \label{seq:regret_soft_one} \\
        &\le \sum_{t \in [T]}((\widehat{\vr}_{h(t)} +\zetavec_{h(t)})^\top \vx_{h(t)} - \vr^\top \vx_{h(t)}) \label{seq:regret_soft_two} \\
        &\le \sum_{t \in [T]} ((\widehat{\vr}_{h(t)} +\zetavec_{h(t)})^\top \vx_{h(t)} -(\widehat{\vr}_{h(t)} -\zetavec_{h(t)})^\top \vx_{h(t)} ) \label{seq:regret_soft_three} \\
        &=2\sum_{t \in [T]} \zetavec_{h(t)}^\top \vx_{h(t)} \nonumber \\
        &=2\sum_{t \in [T]} \sum_{a \in [K]} \sqrt{\frac{2\log(\nicefrac{2}{\delta'})}{\max\{\widetilde{N}_{h(t)}(a),1\}(\rho'-2\delta')^2}} \vx_t(a) \label{seq:regret_soft_four} \\
        &\le 2\sqrt{2}\sum_{t \in [T]} \sum_{a \in [K]} \sqrt{\frac{2\log(\nicefrac{2}{\delta'})}{\max\{N_t(a),1\}(\rho'-2\delta')^2}} \vx_t(a), \label{seq:regret_soft_five} 
    \end{align}
    \end{subequations}
    where Equation~\eqref{seq:regret_soft_one} holds by construction ($\vx_t = \vx_{h(t)}$), Equation~\eqref{seq:regret_soft_two} by Equation~\eqref{eq:proof_regret_ucb}, Equation~\eqref{seq:regret_soft_three} is satisfied under $\mathcal{E}$, Equation~\eqref{seq:regret_soft_four} by the definition of $\zetavec_{h(t)}$ and the fact that $\vx_{h(t)}=\vx_t$, and Equation~\eqref{seq:regret_soft_five} as $N_t(a) \le 2\widetilde{N}_{h(t)}(a)$ for every round $t \in [T]$ and action $a \in [K]$.
\end{proof}
\begin{restatable}{lemma}{ViolationsCleanEvent}
    \label{lem:violations_clean_event}
    Under the event $\mathcal{E}$ (Definition~\ref{def:clean_event}), the violation of Algorithm~\ref{alg:main} is at most:
    \begin{equation*}
        V_T \le 2\sqrt{2}\sum_{t \in [T]} \sum_{a \in [K]} \sqrt{\frac{2\log(\nicefrac{2}{\delta'})}{\max\{1,N_t(a)\}(\rho'-2\delta')^2}} \vx_t(a).  
    \end{equation*}
\end{restatable}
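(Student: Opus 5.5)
The plan mirrors the proof of Lemma~\ref{lem:regret_clean_event}, using feasibility of $\vx^\circ_h$ in the optimistic set $\XOptSafe_h$ in place of optimism on the reward. Fix a round $t$, let $h=h(t)$, and recall that $\vx_t=\vx^\circ_h$. By construction (Line~\ref{line:soft_constrained_ucb_3}), $\vx^\circ_h\in\XOptSafe_h$, so for every constraint $i\in[m]$
\begin{equation*}
(\widehat{\vg}_{h,i}-\zetavec_h)^\top\vx_t\le\alpha_i .
\end{equation*}
Under $\mathcal{E}$, Equation~\eqref{eq:clean_g} gives $\vg_i(a)\le\widehat{\vg}_{h,i}(a)+\zetavec_h(a)$ componentwise. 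Since $\vx_t\ge 0$, this yields
\begin{equation*}
\vg_i^\top\vx_t-\alpha_i\le(\widehat{\vg}_{h,i}+\zetavec_h)^\top\vx_t-(\widehat{\vg}_{h,i}-\zetavec_h)^\top\vx_t=2\zetavec_h^\top\vx_t .
\end{equation*}
The right-hand side is nonnegative, so $\max\{0,\vg_i^\top\vx_t-\alpha_i\}\le 2\zetavec_{h(t)}^\top\vx_t$ for every $i$.

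Summing over $t\in[T]$ gives a bound that does not depend on $i$, so the same bound holds after taking the maximum over $i\in[m]$. This yields $V_T\le 2\sum_{t}\zetavec_{h(t)}^\top\vx_t$, which is exactly the quantity reached in the regret lemma.

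The last step converts $\widetilde N_{h(t)}(a)$ into $N_t(a)$, exactly as in Equation~\eqref{seq:regret_soft_five}. Within epoch $h(t)$ no action has yet doubled its count, so by the termination rule at Line~\ref{line:soft_constrained_confront} we have $N_t(a)<\max\{2\widetilde N_{h(t)}(a),1\}$. Hence
\begin{equation*}
\max\{N_t(a),1\}\le 2\max\{\widetilde N_{h(t)}(a),1\},
\end{equation*}
which gives $\zetavec_{h(t)}(a)\le\sqrt2\,\sqrt{\frac{2\log(2/\delta')}{\max\{1,N_t(a)\}(\rho'-2\delta')^2}}$ and therefore the claimed constant $2\sqrt2$.

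There is one subtlety. At epoch $0$ the clean event is not defined, and $\XOptSafe_0$ is not used; the algorithm plays the unconstrained optimistic strategy. This epoch lasts a single round, and there $N_t(a)=0$, so the right-hand side summand is at least $\sqrt{2\log(2/\delta')}/(\rho'-2\delta')\ge 1$, which dominates any per-round violation of at most $1$. I would handle this edge case separately. It, together with the counting inequality above, is the only step needing care.
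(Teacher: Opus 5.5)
Your proof is correct and follows the paper's argument. Feasibility of $\vx^\circ_{h(t)}$ in $\XOptSafe_{h(t)}$, combined with the upper confidence bound on $\vg_i$ under $\mathcal{E}$, bounds each per-round positive violation by $2\zetavec_{h(t)}^\top\vx_t$, and the doubling rule converts $\widetilde N_{h(t)}(a)$ into $N_t(a)$ at the cost of a factor $\sqrt2$. Your separate treatment of the single round of epoch $0$ is correct: there $\XOptSafe_0$ and the clean event are undefined, but the right-hand-side summand already exceeds $1$. This is a detail the paper's proof glosses over by implicitly applying both to $h=0$.
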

\begin{proof}
    Consider a single constraint $i \in [m]$ and let us define its positive violation:
    \begin{equation*}
        V_{T,i} \coloneqq \sum_{t \in [T]} \max\{0,\vg_i^\top \vx_t - \alpha_i\},
    \end{equation*}
    so that:
    \begin{equation*}
        V_T = \max_{i \in [m]} V_{T,i}
    \end{equation*}
    We proceed to bound $V_{T,i}$ for any given $i \in [m]$.

    Under the event $\mathcal{E}$,
    the estimators $\widehat{\vg}_{h,i}$ computed by Algorithm~\ref{alg:main} satisfy:
    \begin{equation*}
        \widehat{\vg}_{h,i}(a) - \zetavec_h(a) \le \vg_i(a) \le \widehat{\vg}_{h,i}(a) + \zetavec_h(a),
    \end{equation*}
    for any $ a \in [K]$ and epoch $h \ge 0$, where $\zetavec_h(a)$ is defined in Equation~\eqref{eq:zetavec_def}.
    Therefore, for any $\vx \in \Delta(K)$:
    \begin{equation*}
        (\widehat{\vg}_{h,i} - \zetavec_h)^\top \vx \le \vg_i^\top \vx \le (\widehat{\vg}_{h,i} + \zetavec_h)^\top \vx.
    \end{equation*}
    This allows us to bound the cumulative violation as follows:
    \begin{align}
        V_{T,i} &=\sum_{t \in [T]} \max\{0,\vg_i^\top \vx_{h(t)} - \alpha_i\} \nonumber \\
        &\le \sum_{t \in [T]} \max\{0,(\widehat{\vg}_{h(t),i} + \zetavec_{h(t)})^\top \vx_{h(t)} - \alpha_i\} \nonumber \\
        &=\sum_{t \in [T]} \max\{0,(\widehat{\vg}_{h(t),i} + 2\zetavec_{h(t)} -\zetavec_{h(t)})^\top \vx_{h(t)} - \alpha_i\} \nonumber \\
        &= \sum_{t \in [T]} \max\{0,(\widehat{\vg}_{h(t),i} -\zetavec_{h(t)})^\top \vx_{h(t)} - \alpha_i + 2\zetavec_{h(t)}^\top \vx_{h(t)}\} \nonumber \\
        &\le \sum_{t \in [T]} \max\{0, 2\zetavec_{h(t)}^\top \vx_{h(t)}\} \nonumber \\
        &\le 2\sum_{t \in [T]} \zetavec_{h(t)}^\top \vx_{h(t)}, \label{eq:proof_violations_to_zeta_x}
    \end{align}
    where the second inequality holds since $\vx_{h(t)} \in \XOptSafe_{h(t)}$ (see the proof of Lemma~\ref{lem:regret_clean_event}), and therefore $\widehat{\vg}_{h(t),i} -\zetavec_{h(t)})^\top \vx_{h(t)} \le \alpha_i$, while the last one holds as every component of $\zetavec_{h(t)}$ is non-negative (see Equation~\eqref{eq:zetavec_def}).
    We can now employ the same argument similar to the one provided in the proof of Lemma~\ref{lem:regret_clean_event}.
    In particular, by employing Equation~\eqref{eq:zetavec_def}, Equation~\eqref{eq:proof_violations_to_zeta_x} and the fact that $N_t(a) \le 2 \widetilde{N}_{h(t)}(a)$ for every round $t \in [T]$ and action $a \in [K]$, we get:
    \begin{align*}
        V_T &= \max_{i \in [m]}V_{T,i} \\
        &\le 2\sum_{t \in [T]} \zetavec_{h(t)}^\top \vx_{h(t)} \\
        &= 2\sum_{t \in [T]} \sum_{a \in [K]} \sqrt{\frac{2\log(\nicefrac{2}{\delta'})}{\max\{\widetilde{N}_{h(t)}(a),1\}(\rho'-2\delta')^2}} \vx_t(a) \label{seq:regret_soft_four} \\
        &\le 2\sqrt{2}\sum_{t \in [T]} \sum_{a \in [K]} \sqrt{\frac{2\log(\nicefrac{2}{\delta'})}{\max\{N_t(a),1\}(\rho'-2\delta')^2}} \vx_t(a),
    \end{align*}
    concluding the proof.
\end{proof}

\begin{restatable}{lemma}{ConcentrationLemma}
    \label{lem:concentration}
    Let $\vx_1,\dots,\vx_T$ be the strategies played by any learning algorithm.
    For every $\delta_{\textnormal{A}} \in (0,1)$ it holds that:
    \begin{equation*}
        \sum_{t \in [T]} \sum_{a \in [K]} \frac{1}{\sqrt{\max\{1,N_t(a)\}}} \vx_t(a) \le 2\sqrt{KT} +\sqrt{2T\log\left(\frac{2}{\delta_{\textnormal{A}}}\right)}
    \end{equation*}
    with probability at least $\delta_{\textnormal{A}}$.
\end{restatable}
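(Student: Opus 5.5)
The plan is to split the sum by replacing the strategy weights $\vx_t(a)$ with the realized indicators $\mathbbm{1}\{a_t=a\}$. Write
\[
\sum_{t\in[T]}\sum_{a\in[K]}\frac{\vx_t(a)}{\sqrt{\max\{1,N_t(a)\}}}
=\sum_{t\in[T]}\sum_{a\in[K]}\frac{\mathbbm{1}\{a_t=a\}}{\sqrt{\max\{1,N_t(a)\}}}
+\sum_{t\in[T]} Z_t,
\]
where
\[
Z_t\coloneqq\sum_{a\in[K]}\frac{\vx_t(a)-\mathbbm{1}\{a_t=a\}}{\sqrt{\max\{1,N_t(a)\}}}.
\]
The first (pathwise) term yields the $2\sqrt{KT}$ part, and the martingale term $\sum_t Z_t$ yields the $\sqrt{2T\log(2/\delta_{\textnormal{A}})}$ part. (The statement says ``with probability at least $\delta_{\textnormal{A}}$''; I will prove the intended $1-\delta_{\textnormal{A}}$.)

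\textbf{Pathwise term.} For a fixed action $a$, the rounds in which $a_t=a$ have counters $N_t(a)=0,1,2,\dots,N_{T+1}(a)-1$ in order. The contribution of $a$ is therefore
\[
1+\sum_{n=1}^{N_{T+1}(a)-1}\frac{1}{\sqrt n}\le 1+2\sqrt{N_{T+1}(a)-1}\le 2\sqrt{N_{T+1}(a)},
\]
using $\sum_{n=1}^{N}n^{-1/2}\le 2\sqrt N$ and $1+2\sqrt{N-1}\le 2\sqrt N$ for $N\ge1$; actions never played contribute $0$. Summing over $a$ and applying Cauchy--Schwarz with $\sum_a N_{T+1}(a)=T$ gives at most $2\sqrt{KT}$.

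\textbf{Martingale term.} Let $\mathcal F_{t-1}$ be the history up to round $t-1$, together with the internal randomness used to choose $\vx_t$. Then $\vx_t$ and $N_t(\cdot)$ are $\mathcal F_{t-1}$-measurable and $\mathbb E[\mathbbm{1}\{a_t=a\}\mid\mathcal F_{t-1}]=\vx_t(a)$, so $(Z_t)$ is a martingale difference sequence.

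For the range, $\sum_a \vx_t(a)/\sqrt{\max\{1,N_t(a)\}}\in[0,1]$ and the realized term $1/\sqrt{\max\{1,N_t(a_t)\}}\in(0,1]$. Hence $Z_t$ lies in an interval of length at most $2$ determined by $\mathcal F_{t-1}$; in fact it lies in an interval of length $1$ once the predictable part is fixed.

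\textbf{Concentration.} Azuma--Hoeffding then gives, with probability at least $1-\delta_{\textnormal{A}}$,
\[
\sum_t Z_t\le\sqrt{2T\log(2/\delta_{\textnormal{A}})}.
\]
Here I use the two-sided or one-sided version with bounded differences $|Z_t|\le1$: $Z_t$ is a difference of two numbers in $[0,1]$. Adding this to the pathwise bound closes the proof.

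\textbf{Main obstacle.} The only delicate point is that the learner's choice of $\vx_t$ may itself be random, through $\xi$. The filtration must include $\xi$, or at least $\vx_t$, so that $\vx_t$ is predictable. It also has to be checked that the sampling of $a_t$ is conditionally distributed as $\vx_t$ given $\mathcal F_{t-1}$, even though it uses the same internal randomness $\xi$. I would handle this by conditioning on the portion of $\xi$ that determines the strategies, treating the sampling randomness at round $t$ as fresh given the past.
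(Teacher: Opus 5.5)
Your proof is correct and follows the same route as the paper. Both use the martingale differences $Z_t$ with $|Z_t|\le 1$ handled by Azuma--Hoeffding, plus the pathwise bound via $\sum_n n^{-1/2}$ and Cauchy--Schwarz. Your indexing through $N_{T+1}(a)$ and your care over which part of $\xi$ enters the filtration are cleaner than the paper's write-up.

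There is one arithmetic slip to fix. The inequality $1+2\sqrt{N-1}\le 2\sqrt{N}$ is false for every $N\ge 2$ (for example, $N=2$ gives $3>2\sqrt2$). Use instead the sharper bound $\sum_{n=1}^{N}n^{-1/2}\le 2\sqrt{N}-1$, as the paper does. It gives $1+\sum_{n=1}^{N-1}n^{-1/2}\le 2\sqrt{N-1}\le 2\sqrt{N}$ for $N\ge 2$, and the case $N=1$ is trivial.
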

\begin{proof}    
    Let us consider the martingale difference sequence $\{Z_t\}_{t=1}^T$, where:
    \begin{align*}
        Z_t \coloneqq \sum_{a \in [K]}\frac{\vx_t(a) - \mathbb{I}\{a_t=a\}}{\sqrt{\max\{1,N_t(a)\}}} \quad \forall t \in [T].
    \end{align*}
    We observe that for every $t \in [T]$:
    \begin{align*}
        Z_t &= \sum_{a \in [K]}\frac{\vx_t(a)}{\sqrt{\max\{1,N_t(a)\}}} - \frac{1}{\sqrt{\max\{1,N_t(a_t)\}}} \\
        &\le \sum_{a \in [K]}\frac{\vx_t(a)}{\min_{a' \in [K]}\sqrt{\max\{1,N_t(a')\}}} \\
        &=\frac{1}{\min_{a' \in [K]}\sqrt{\max\{1,N_t(a')\}}} \le 1,
    \end{align*}
    and $Z_t \ge 1$. Therefore $|Z_t| \le 1$, and by applying the Azuma-Hoeffding inequality we have:
    \begin{equation}
        \label{eq:proof_concentration_azuma}
        \sum_{t \in [T]} Z_t \le \sqrt{2T\log\left(\frac{2}{\delta_{\text{A}}}\right)}
    \end{equation}
    with probability at least $\delta_{\text{A}}$.


    As a further step, we bound the quantity $\sum_{t \in [T]}\sum_{a \in [K]} \frac{\mathbb{I}\{a_t=a\}}{\sqrt{\max\{1,N_t(a)\}}}$ as follows:
    \begin{align*}
        \sum_{t \in [T]}\sum_{a \in [K]} \frac{\mathbb{I}\{a_t=a\}}{\sqrt{\max\{1,N_t(a)\}}} &=\sum_{t \in [T]}\frac{1}{\sqrt{\max\{1,N_t(a_t)\}}} \\
        &=\sum_{a \in [K]} \sum_{t : a_t=a} \frac{1}{\sqrt{\max\{1,N_t(a)\}}} \\
        &=\sum_{a \in [K]} \sum_{s=0}^{N_T(a)} \frac{1}{\sqrt{\max\{1,s\}}} \\
        &=\sum_{a \in [K]} \left(1+\sum_{s=1}^{N_T(a)} \frac{1}{\sqrt{s}} \right) \\
        &\le \sum_{a \in [K]} (1+2\sqrt{N_T(a)}-1) \\
        &\le 2\sqrt{KT},
    \end{align*}
    where the first inequality holds applying the bound $\sum_{s=1}^N \nicefrac{1}{\sqrt{s}} \le 2\sqrt{N} -1$, and the second by Cauchy-Schwartz and the fact that $\sum_{a \in [K]}N_T(a)=T-1$.
    Combining this result with Equation~\eqref{eq:proof_concentration_azuma} we have:
    \begin{align*}
        \sum_{t \in [T]} \sum_{a \in [K]} \frac{1}{\sqrt{\max\{1,N_t(a)\}}} \vx_t(a) &\le \sum_{a \in [K]} \frac{\mathbb{I}\{a_t=a\}}{\sqrt{\max\{1,N_t(a)\}}} +\sqrt{2T\log\left(\frac{2}{\delta_{\text{A}}}\right)} \\
        &\le 2 \sqrt{KT} +\sqrt{2T\log\left(\frac{2}{\delta_{\text{A}}}\right)},
    \end{align*}
    with probability at least $1-\delta_{\text{A}}$.
\end{proof}

\MainTheorem*
\begin{proof}
    According to Lemma~\ref{lem:regret_clean_event} and Lemma~\ref{lem:violations_clean_event}, with probability $1-\mathbb{P}(\Bar{\mathcal{E}})$ we have:
    \begin{equation*}
        R_T \le 2\sqrt{2}\sum_{t \in [K]} \sum_{a \in [K]} \sqrt{\frac{2\log(\nicefrac{2}{\delta'})}{\max\{1,N_t(a)\}(\rho'-2\delta')^2}} \vx_t(a).
    \end{equation*}
    and:
    \begin{equation*}
        V_T \le 2\sqrt{2}\sum_{t \in [T]} \sum_{a \in [K]} \sqrt{\frac{2\log(\nicefrac{2}{\delta'})}{\max\{1,N_t(a)\}(\rho'-2\delta')^2}} \vx_t(a).
    \end{equation*}
    Thanks to Lemma~\ref{lem:concentration}, we can bound the double summation as:
    \begin{equation*}
        \sum_{t \in [T]} \sum_{a \in [K]} \sqrt{\frac{2\log(\nicefrac{2}{\delta'})}{\max\{1,N_t(a)\}(\rho'-2\delta')^2}} \vx_t(a) \le \sqrt{\frac{ 2\log\left(\frac{2}{\delta'}\right) }{(\rho'-2\delta')^2} }\left(2\sqrt{KT} +\sqrt{2T\log\left(\frac{4}{\delta}\right)} \right)
    \end{equation*}
    with probability at least $1-\delta/2$.

    As a result, with probability at least $1-\mathbb{P}(\Bar{\mathcal{E}}) -\delta/2$:
    \begin{align*}
        R_T &\le 2\sqrt{2}\sqrt{\frac{2 \log\left(\frac{2}{\delta'} \right)}{(\rho'-2\delta')^2}} \left(2 \sqrt{KT} +\sqrt{2T\log\left(\frac{4}{\delta}\right)} \right) \\
        &\le \mathcal{O}\left( \sqrt{\frac{\log\left(\frac{1}{\delta'} \right)}{(\rho'-2\delta')^2} KT\log\left( \frac{1}{\delta} \right) } \right) \\
        &= \mathcal{O}\left( (m+1)K^2\log(T) \sqrt{\frac{\log\left(\frac{mK^2\log(T)}{\delta} \right)}{(\rho-2\delta)^2} KT\log\left( \frac{1}{\delta} \right) } \right) \\
        &= \widetilde{\mathcal{O}}\left(\frac{1}{\rho-2\delta} \log\left(\frac{1}{\delta}\right) mK^2\sqrt{KT}  \right),
    \end{align*}
    where the first equality holds as $\delta' = \Theta\left(\frac{\delta}{mK^2\log(T)}\right)$ and $\rho' = \Theta\left(\frac{\rho}{mK^2\log(T)}\right)$.
    A similar argument shows that, conditioned on the same event, the cumulative positive violation is at most:
    \begin{equation*}
        V_T \le \widetilde{\mathcal{O}}\left(\frac{1}{\rho-2\delta} \log\left(\frac{1}{\delta}\right) mK^2\sqrt{KT}  \right).
    \end{equation*}
    Therefor,e the inequalities in the statement hold with probability at least $1-\mathbb{P}(\Bar{\mathcal{E}}) -\delta/2$.
    
    In order to conclude the proof, we show that $\mathbb{P}(\Bar{\mathcal{E}}) \le \delta/2$.
    By Lemma~\ref{lem:replicable_estiamtor}, there are at most $K\lceil\log(T)\rceil$ epochs (excluding epoch $h=0$ where no estimator is computed).
    Furthermore, by construction, Algorithm~\ref{alg:main} computes $K(m+1)$ estimators for epoch, each one being within the appropriate interval with probability at least $1-\delta'.$ (Lemma~\ref{lem:replicable_estiamtor}).
    By a union bound, we have that the clean event $\mathcal{E}$ holds with probability at least $1-K^2(m+1)\lceil \log_2(T) \rceil \delta'$.
    Since $\delta' \le \frac{\delta}{2K^2(m+1)\lceil \log_2(T) \rceil}$, we have that $\mathbb{P}(\Bar{\mathcal{E}}) \le \delta/2$, concluding the proof.
\end{proof}

\section{Proofs from Section~\ref{sec:hard_constraints}}
\label{App:hard}

\ReplicableHardLemma*
\begin{proof}
    The statement can be proven by the same argument employed for Lemma~\ref{lem:replicability_and_epochs_unconstrained}.
\end{proof}

\begin{restatable}{lemma}{SigmaBounds}
    \label{lem:sigma_bounds}
    During the execution of Algorithm~\ref{alg:main_hard_constr}, it holds that $0 \le \sigma_h \le \frac{1}{1+\lambdaMin} \le 1$ for every epoch $h \in [0,H]$.
\end{restatable}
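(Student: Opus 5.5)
The plan is to analyze the function $f_i(y) \coloneqq \frac{y-\alpha_i}{y-\alpha_i+\lambda_i}$ over the range of arguments that the algorithm can feed it, and to show that it is monotone there. We distinguish three cases: epoch $h=0$, epochs with $\mathcal{V}_h=\varnothing$, and epochs with $\mathcal{V}_h\neq\varnothing$.

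In the last two cases the definition of $\sigma_h$ in Equation~\eqref{eq:sigma_h_def} applies. When $\mathcal{V}_h=\varnothing$ we have $\sigma_h=0$, so both bounds hold trivially. When $\mathcal{V}_h\neq\varnothing$, each $i\in\mathcal{V}_h$ has argument $y_i \coloneqq \min\{(\widehat{\vg}_{h,i}+\zetavec_h)^\top\widetilde{\vx}_h,1\}$. This argument satisfies $y_i>\alpha_i$ by the definition of $\mathcal{V}_h$ in Equation~\eqref{eq:pessimistic_violations_def}, using also $\alpha_i<1$. It also satisfies $y_i\le 1$ because of the cap.

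Here $\lambda_i=\vg_i^\top\vx^\diamond\ge 0$, since costs lie in $[0,1]$. Hence the numerator $y_i-\alpha_i$ is positive and the denominator $y_i-\alpha_i+\lambda_i$ is at least as large, which gives $f_i(y_i)\in(0,1]$. This settles nonnegativity.

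For the upper bound, note that $f_i$ is nondecreasing in $y$ on $(\alpha_i,\infty)$: we can write $f_i(y)=1-\frac{\lambda_i}{y-\alpha_i+\lambda_i}$ with $\lambda_i\ge0$. Therefore $f_i(y_i)\le f_i(1)=\frac{1-\alpha_i}{1-\alpha_i+\lambda_i}$. For $h=0$ this is exactly the expression in Line~\ref{line:init_hard_a}, so both the initial and the general case reduce to bounding $\frac{1-\alpha_i}{1-\alpha_i+\lambda_i}$ uniformly over $i$.

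The main obstacle is relating this quantity to $\frac{1}{1+\lambdaMin}$. The intended step is
\begin{equation*}
\frac{1-\alpha_i}{1-\alpha_i+\lambda_i}=\frac{1}{1+\lambda_i/(1-\alpha_i)},
\end{equation*}
and one then needs $\lambda_i/(1-\alpha_i)\ge\lambdaMin$.

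With the paper's notation, where $\lambda_i$ denotes the cost of $\vx^\diamond$ and $\lambdaMin=\min_i(\alpha_i-\lambda_i)$, this inequality is not automatic. I would therefore first check whether the intended reading of $\lambda_i$ in Line~\ref{line:hard_constraints_2b} is the margin $\alpha_i-\vg_i^\top\vx^\diamond$. That reading is consistent with safety of the mixture, since the $\sigma$ formula is exactly the value making $\sigma(\alpha_i-\text{margin})+(1-\sigma)y=\alpha_i$.

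Under the margin reading, $\lambda_i\ge\lambdaMin$ and $1-\alpha_i\le1$. Hence $\lambda_i/(1-\alpha_i)\ge\lambdaMin$, which yields $\sigma_h\le\frac{1}{1+\lambdaMin}\le1$ after taking the maximum over $i$. I would state this interpretation explicitly and use $\lambda_i>0$ from Assumption~\ref{ass:Slater} to guarantee that the denominators are positive.
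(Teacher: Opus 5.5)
Your proof is correct and is essentially the paper's argument: monotonicity of $y\mapsto \frac{y}{y+\lambda_j}$ in the capped excess cost, followed by $\lambda_j\ge\lambdaMin$. The paper bounds the excess by $1$ while you bound it by $1-\alpha_i$, and you also handle $\sigma_0$ explicitly. Your explicit margin reading of $\lambda_i$ is exactly what the paper's step $\lambda_j\ge\lambdaMin>0$ (and its safety lemma) silently relies on, and it is a necessary correction rather than a precaution: under the literal cost definition the bound fails, e.g.\ $m=1$, $\alpha_1=\tfrac12$, $\vg_1^\top\vx^\diamond=\tfrac1{10}$ gives $\sigma_0=\tfrac56>\tfrac57=\tfrac{1}{1+\lambdaMin}$.
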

\begin{proof}
    Fix an epoch $h$.
    If $\mathcal{V}_h = \emptyset$, then $\sigma_h=0$ (Equation~\eqref{eq:sigma_h_def}) and the statement holds.
    Suppose instead that $\mathcal{V}_h \neq \emptyset$.
    There exists a constraint $j \in \mathcal{V}_h$ (\emph{i.e.}, $\min\{(\widehat{\vg}_{h,j} +\zetavec_h)^\top \widetilde{\vx}_h,1\} \ge \alpha_j)$ such that:
    \begin{align*}
        \sigma_h = \frac{y}{y +\lambda_j} \le \frac{1}{1+\lambda_j} \le \frac{1}{1+\lambdaMin} \le 1,
    \end{align*}
    where $y \coloneqq \min\{(\widehat{\vg}_{h,j} +\zetavec_h)^\top \widetilde{\vx}_h,1\} -\alpha_j \in [0,1]$.
    The chain of inequalities is verified as $y>0$ and $\lambda_j \ge \lambdaMin > 0$.
\end{proof}

\HardSafety*
\begin{proof}
    We first show that the statement holds with probability one conditioned on the event $\mathcal{E}$.
    We will then show that this event has probability at least $1-\delta$.
    
    Fix a round $t \in [T]$ and let $h=h(t)$ be its epoch.
    Algorithm~\ref{alg:main_hard_constr} plays $\vx_t = \vx^\circ_h$, where $\vx^\circ_h$ has been computed at the beginning of epoch $h$.
    Therefore, we have to show that $\vx^\circ_h$ is a safe strategy.
    We first observe that every strategy $\vx \in \XPessSafe_h$ is safe, conditioned on the event $\mathcal{E}$.
    Indeed, we have that for every constraint $i \in [m]$:
    \begin{align*}
        \vg_i^\top \vx \le (\widehat{\vg}_{h,i} +\zetavec_h)^\top \vx \le \alpha_i, 
    \end{align*}
    where the first inequality true under $\mathcal{E}$, and the second holds because $\vx \in \overline{\mathcal{X}}_h$.
    
    Notably, when $\mathcal{V}_h = \emptyset$, by definition $\widetilde{\vx}_h \in \XPessSafe_h$ (see Equation~\eqref{eq:pessimistic_safe_set_def}) and $\sigma_h = 0$ (Line~\ref{line:hard_constraints_2d}).
    Therefore, in this case Algorithm~\ref{alg:main_hard_constr} computes the safe strategy $\vx^\circ_h = \widetilde{\vx}_h \in \overline{\mathcal{X}}_h$, which contains only safe strategies under $\mathcal{E}$.

    Consider now the case $\mathcal{V}_h \neq \emptyset$.
    There exists a constraint $j \in \mathcal{V}_h$ (\emph{i.e.}, $\min\{(\widehat{\vg}_{h,j} +\zetavec_h)^\top \widetilde{\vx}_h,1\} \ge \alpha_j)$ such that:
    \begin{align*}
        \sigma_h = \frac{\min\{(\widehat{\vg}_{h,j} +\zetavec_h)^\top \widetilde{\vx}_h,1\} -\alpha_j}{\min\{(\widehat{\vg}_{h,j} +\zetavec_h)^\top \widetilde{\vx}_h,1\} -\alpha_j +\lambda_j}.
    \end{align*}

    Consider any constraint $i \in [m] \setminus \mathcal{V}_h$.
    Notice that $\widetilde{\vx}_h \in \XOptSafe_h$ satisfies the constraint $i$, as:
    \begin{align*}
        \vg_j^\top \widetilde{\vx}_h \le (\widehat{\vg}_{h,j} +\zetavec_h)^\top \widetilde{\vx}_h \le \alpha_i,
    \end{align*}
    where the first inequality holds under $\mathcal{E}$ and the second by Equation~\eqref{eq:optimistic_safe_set_def}.
    Since $\vx^\diamond$ also satisfies the constraint, by linearity the linear combination $\vx_h$ of $\vx^\diamond$ and $\widetilde{\vx}_h$ with coefficient $\sigma_h \in [0,1]$ (Lemma~\ref{lem:sigma_bounds}) satisfies it too.
    For any other constraint $i \in \mathcal{V}_h$, we observe that:
    \begin{align*}
        \vg_i^\top \vx_h &= \sigma_h\vg_i^\top \vx^\diamond +(1-\sigma_h)\vg_i^\top \widetilde{\vx}_h \\
        &\le \sigma_h(\alpha_i -\lambda_i) +(1-\sigma_h)\vg_i^\top \widetilde{\vx}_h \\
        &\le \sigma_h(\alpha_i -\lambda_i) +(1-\sigma_h)\min\{(\vg_{h,i}+\zetavec_h)^\top \widetilde{\vx}_h,1\} \\
        &=\min\{(\vg_{h,i}+\zetavec_h)^\top \widetilde{\vx}_h,1\} -\sigma_h(\min\{(\vg_{h,i}+\zetavec_h)^\top \widetilde{\vx}_h,1\} -\alpha_i +\lambda_i) \\
        &\le \min\{(\vg_{h,i}+\zetavec_h)^\top \widetilde{\vx}_h,1\} -\frac{\min\{(\vg_{h,i}+\zetavec_h)^\top \widetilde{\vx}_h,1\} -\alpha_i}{\min\{(\vg_{h,i}+\zetavec_h)^\top \widetilde{\vx}_h,1\} -\alpha_i +\lambda_i}(\min\{(\vg_{h,i}+\zetavec_h)^\top \widetilde{\vx}_h,1\} -\alpha_i +\lambda_i) \\
        &= \alpha_i,
    \end{align*}
    where the first inequality holds due to the Slater's condition, and the second holds under the event $\mathcal{E}$.
    For the last one, we use Equation~\eqref{eq:sigma_h_def} to upper bound $\sigma_h$ after observing that $\min\{(\vg_{h,i}+\zetavec_h)^\top \widetilde{\vx}_h,1\} -\alpha_i +\lambda_i \ge 0$ as $i \in \mathcal{V}_h$.
    As a result, $\vx^\circ_h$ is safe under event $\mathcal{E}$.

    Finally, we recall that there are at most $K\log_2(T)$ epochs (Lemma~\ref{lem:replicability_and_epochs_hard}).
    By Lemma~\ref{lem:replicable_estiamtor} and an union bound over the epochs and the $(m+1)K$ estimators computed every epoch, we have that $\mathcal{P}(\mathcal{E}) \ge 1-(m+1)K^2\log_2(T)\delta'$.
    Since Algorithm~\ref{alg:main_hard_constr} initializes $\delta' \le \frac{\delta}{(m+1)K^2\log_2(T)}$, the probability of event $\mathcal{E}$ is at least $1-\delta$.
\end{proof}

\begin{restatable}{lemma}{RegretZetaWidetildeHardLemma}
    \label{lem:regret_hard_one_sigma_zeta_x_concentration}
    Algorithm~\ref{alg:main_hard_constr} computes a sequence of strategies such that:
    \begin{equation*}
        \sum_{t \in [T]} (1-\sigma_{h(t)}) \zetavec_{h(t)}^\top \widetilde{\vx}_{h(t)} \le 2\sqrt{\frac{\log\left(\frac{2}{\delta'}\right)}{(\rho'-2\delta')^2}} \left( 2\sqrt{KT} +\sqrt{2T\log\left(\frac{2}{\delta'}\right)} \right)
    \end{equation*}
    with probability at least $1-\delta'$.
\end{restatable}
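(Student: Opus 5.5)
We bound each summand of $(1-\sigma_{h(t)})\zetavec_{h(t)}^\top \widetilde{\vx}_{h(t)}$ by a quantity involving the strategy actually played, $\vx_t = \vx^\circ_{h(t)} = \sigma_{h(t)}\vx^\diamond + (1-\sigma_{h(t)})\widetilde{\vx}_{h(t)}$, and then apply Lemma~\ref{lem:concentration}. The crucial point is the coordinatewise inequality $(1-\sigma_{h(t)})\widetilde{\vx}_{h(t)}(a) \le \vx_t(a)$ for every $a \in [K]$. It holds because $\sigma_{h(t)} \ge 0$ (Lemma~\ref{lem:sigma_bounds}) and $\vx^\diamond \ge 0$ componentwise. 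Since every entry of $\zetavec_{h(t)}$ is nonnegative, it follows that
\begin{equation*}
(1-\sigma_{h(t)})\zetavec_{h(t)}^\top \widetilde{\vx}_{h(t)} \le \zetavec_{h(t)}^\top \vx_t = \sum_{a \in [K]} \sqrt{\frac{2\log(\nicefrac{2}{\delta'})}{\max\{\widetilde{N}_{h(t)}(a),1\}(\rho'-2\delta')^2}}\,\vx_t(a).
\end{equation*}
This is the only place where the structure of \algnamehard{} is used, so no probabilistic argument is needed in this step.

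Next we replace $\widetilde{N}_{h(t)}(a)$ by $N_t(a)$. The epoch-termination rule (Line~\ref{line:soft_constrained_confront}, shared by \algnamehard) guarantees $N_t(a) < \max\{2\widetilde{N}_{h(t)}(a),1\}$ inside epoch $h(t)$. Hence $\max\{\widetilde{N}_{h(t)}(a),1\} \ge \tfrac12 \max\{N_t(a),1\}$, which costs a factor $\sqrt 2$, exactly as in the last step of the proof of Lemma~\ref{lem:regret_clean_event}. Combining the $\sqrt 2$ inside $\zetavec$ with this $\sqrt 2$ gives the constant $2\sqrt{\log(2/\delta')/(\rho'-2\delta')^2}$ in front of
\begin{equation*}
\sum_{t\in[T]}\sum_{a\in[K]} \frac{\vx_t(a)}{\sqrt{\max\{1,N_t(a)\}}}.
\end{equation*}

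Finally, we apply Lemma~\ref{lem:concentration} to the sequence $\vx_1,\dots,\vx_T$ played by \algnamehard{} with $\delta_{\textnormal{A}} = \delta'$. This bounds the double sum by $2\sqrt{KT}+\sqrt{2T\log(2/\delta')}$ with probability at least $1-\delta'$, and yields the stated inequality. That lemma applies to any algorithm because the actions $a_t \sim \vx_t$ are drawn from the played strategies, which is why we pass to $\vx_t$ rather than work with $\widetilde{\vx}_{h(t)}$ directly.

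The main obstacle is the first step. The sampling distribution is $\vx_t$, not $\widetilde{\vx}_{h(t)}$, so the concentration argument cannot be applied to $\widetilde{\vx}_{h(t)}$ itself. The factor $(1-\sigma_{h(t)})$ is exactly what permits domination by $\vx_t$, and we must check the sign conditions $\sigma_h \in [0,1]$ and $\vx^\diamond \ge 0$ to justify it. The remaining care is bookkeeping on the constants, namely the two $\sqrt2$ factors.
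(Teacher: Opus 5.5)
Your proposal is correct and follows the paper's proof. You replace $\widetilde N_{h(t)}(a)$ by $N_t(a)$ at the cost of a factor $\sqrt2$, dominate $(1-\sigma_{h(t)})\widetilde{\vx}_{h(t)}$ coordinatewise by the played strategy $\vx_t$, and invoke Lemma~\ref{lem:concentration} with $\delta_{\textnormal{A}}=\delta'$; the only cosmetic difference is that you justify the domination directly from $\sigma_{h(t)}\vx^\diamond\ge 0$, whereas the paper inverts the convex combination (dividing by $1-\sigma_h$) and drops the negative term, which gives the same inequality.
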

\begin{proof}
    Let us observe that:
    \begin{align*}
        \sum_{t=1}^T (1-\sigma_{h(t)})\zetavec_{h(t)}^\top \widetilde{\vx}_{h(t)} &= \sum_{t=1}^T (1-\sigma_{h(t)}) \sum_{a\in [K]} \sqrt{\frac{2\log\left(\frac{2}{\delta'}\right)}{\min\{\widetilde{N}_{h(t)}(a),1\}(\rho'-2\delta')^2}} \widetilde{\vx}_{h(t)}(a) \\
        &\le \sqrt{2} \sum_{t=1}^T (1-\sigma_{h(t)}) \sum_{a\in [K]} \sqrt{\frac{2\log\left(\frac{2}{\delta'}\right)}{\min\{N_t(a),1\}(\rho'-2\delta')^2}} \widetilde{\vx}_{h(t)}(a),
    \end{align*}
    where we applied Equation~\eqref{eq:zetavec_def} and the fact that $N_t(a) \le 2\widetilde{N}_{h(t)}(a)$ by construction.

    We now employ the definition of the per-round convex combination employed by Algorithm~\ref{alg:main_hard_constr} to write $\widetilde{\vx}_{h}$ as:
    \begin{equation*}
        \widetilde{\vx}_h = \frac{1}{1-\sigma_h} \vx_h -\frac{\sigma_h}{1-\sigma_h} \vx^\diamond.
    \end{equation*}
    By employing the identity above and discarding the negative terms, we get:
    \begin{align*}
        \sum_{t=1}^T (1-\sigma_{h(t)})\zetavec_{h(t)}^\top \widetilde{\vx}_{h(t)} &\le \sqrt{2} \sum_{t=1}^T \sum_{a\in [K]} \frac{(1-\sigma_{h(t)})}{(1-\sigma_{h(t)})} \sqrt{\frac{2\log\left(\frac{2}{\delta'}\right)}{\min\{N_t(a),1\}(\rho'-2\delta')^2}} \vx_{h(t)}(a) \\
        &= \sqrt{2} \sum_{t=1}^T \sum_{a\in [K]} \sqrt{\frac{2\log\left(\frac{2}{\delta'}\right)}{\min\{N_t(a),1\}(\rho'-2\delta')^2}} \vx_{h(t)}(a) \\
        &= \sqrt{2} \sum_{t=1}^T \sum_{a\in [K]} \sqrt{\frac{2\log\left(\frac{2}{\delta'}\right)}{\min\{N_t(a),1\}(\rho'-2\delta')^2}} \vx_t(a) \\
        &= 2\sqrt{\frac{\log\left(\frac{2}{\delta'}\right)}{(\rho'-2\delta')^2}} \sum_{t=1}^T \sum_{a \in [K]} \sqrt{\frac{1}{\min\{N_t(a),1\}}} \vx_t(a)
    \end{align*}
    The statement follows by employing Lemma~\ref{lem:concentration}.
\end{proof}

\begin{restatable}{lemma}{RegretOneHardLemma}
    \label{lem:regret_hard_xstar_xtilde}
    Conditioned on the event $\mathcal{E}$, Algorithm~\ref{alg:main_hard_constr} computes a sequence of strategies and coefficients such that;
    \begin{equation*}
        \sum_{t=1}^T (1-\sigma_{h(t)}) \vr^\top(\vx^\star -\widetilde{\vx}_{h(t)}) \le 
        4\sqrt{\frac{\log\left(\frac{2}{\delta'}\right)}{(\rho'-2\delta')^2}} \left( 2\sqrt{KT} +\sqrt{2T\log\left(\frac{2}{\delta'}\right)} \right),
    \end{equation*}
    with probability at least $1-\delta'$.
\end{restatable}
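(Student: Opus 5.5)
The plan mirrors the regret analysis of Lemma~\ref{lem:regret_clean_event}, applied to the auxiliary strategies $\widetilde{\vx}_h$ rather than the played ones, and then converts the resulting sum of confidence widths into the quantity controlled by Lemma~\ref{lem:regret_hard_one_sigma_zeta_x_concentration}.

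\textbf{Step 1 (optimism for $\widetilde{\vx}_h$).} Fix an epoch $h\ge 1$. Under $\mathcal{E}$, Lemma~\ref{lem:safe_set} (whose argument relies only on Equation~\eqref{eq:clean_g}, so it transfers verbatim to Algorithm~\ref{alg:main_hard_constr}) gives $\vx^\star\in\XOptSafe_h$. Since $\widetilde{\vx}_h$ maximizes $(\widehat{\vr}_h+\zetavec_h)^\top\vx$ over $\XOptSafe_h$ (Line~\ref{line:soft_hard}), and Equation~\eqref{eq:clean_r} holds, we obtain
\[
\vr^\top\vx^\star\le(\widehat{\vr}_h+\zetavec_h)^\top\vx^\star\le(\widehat{\vr}_h+\zetavec_h)^\top\widetilde{\vx}_h\le \vr^\top\widetilde{\vx}_h+2\zetavec_h^\top\widetilde{\vx}_h .
\]
The last inequality again uses Equation~\eqref{eq:clean_r}. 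Hence $\vr^\top(\vx^\star-\widetilde{\vx}_h)\le 2\zetavec_h^\top\widetilde{\vx}_h$ for every epoch.

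\textbf{Epoch $h=0$.} Here $\widetilde{\vx}_0$ is computed without data, so I would set $\widehat{\vr}_0+\zetavec_0$ so that the optimistic value is at least $1$. Indeed, $\zetavec_0(a)\ge 1$ because $\widetilde{N}_0(a)=0$, which makes the chain trivially valid. Alternatively, since epoch $0$ lasts one round, its contribution is at most $1$, which can be absorbed into the bound as a lower-order term.

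\textbf{Step 2 (summing).} Multiply the per-epoch bound by $1-\sigma_{h(t)}\ge 0$; nonnegativity follows from Lemma~\ref{lem:sigma_bounds}, so the inequality direction is preserved. Summing over $t\in[T]$ gives
\[
\sum_{t}(1-\sigma_{h(t)})\vr^\top(\vx^\star-\widetilde{\vx}_{h(t)})\le 2\sum_t(1-\sigma_{h(t)})\zetavec_{h(t)}^\top\widetilde{\vx}_{h(t)}.
\]
Lemma~\ref{lem:regret_hard_one_sigma_zeta_x_concentration} then bounds the right-hand side with probability at least $1-\delta'$. Doubling its constant $2$ yields exactly the $4\sqrt{\log(2/\delta')/(\rho'-2\delta')^2}\,(\cdots)$ in the statement.

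\textbf{Main obstacle.} The delicate point is the probabilistic bookkeeping. The statement is conditioned on $\mathcal{E}$, while Lemma~\ref{lem:regret_hard_one_sigma_zeta_x_concentration} is an Azuma-type event that holds for any algorithm. Since Step~1 is deterministic given $\mathcal{E}$, the two combine: on $\mathcal{E}$ intersected with the concentration event, the bound holds, and this event has probability at least $1-\delta'$ under the stated conditioning (or via a union bound for the unconditional version).

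A secondary subtlety is that Lemma~\ref{lem:regret_hard_one_sigma_zeta_x_concentration} rewrites $\widetilde{\vx}_h$ through $\vx_h$ and $\vx^\diamond$ when $\sigma_h<1$. Lemma~\ref{lem:sigma_bounds} guarantees $\sigma_h\le 1/(1+\lambdaMin)<1$, so this division by $1-\sigma_h$ is legitimate.
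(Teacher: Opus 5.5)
Your proposal is correct and follows the paper's proof essentially step for step. Under $\mathcal{E}$ you have $\vx^\star \in \XOptSafe_h$, so optimism of $\widetilde{\vx}_h$ gives $\vr^\top(\vx^\star-\widetilde{\vx}_h)\le 2\zetavec_h^\top\widetilde{\vx}_h$; weighting by $1-\sigma_{h(t)}\ge 0$, summing, and applying Lemma~\ref{lem:regret_hard_one_sigma_zeta_x_concentration} then doubles its constant to $4$. Your extra remarks on epoch $h=0$ (where $\zetavec_0(a)>1$ makes the per-epoch inequality trivial) and on combining $\mathcal{E}$ with the concentration event via a union bound cover points the paper glosses over, and you handle both correctly.
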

\begin{proof}
    The proof is similar to the one provided for Lemma~\ref{lem:regret_clean_event}.
    
    As a first step, we prove that, under the event $\mathcal{E}$, we have $(\widehat{\vr}_h +\zetavec_h)^\top \vx_t \ge \vr^\top \widetilde{\vx}^\star$ for every epoch $h \ge 0$.
    By construction, we have:
    \begin{equation*}
        \widetilde{\vx}_h \in \max_{\vx \in \XOptSafe_h} (\widehat{\vr}_h +\zetavec_h)^\top \vx,
    \end{equation*}
    where $\vx^\star \in \mathcal{X}_{h(t)}$ according to Lemma~\ref{lem:safe_set}.
    Consequently:
    \begin{equation}
        \label{eq:proof_regret_ucb_hard}
        (\widehat{\vr}_h +\zetavec_h)^\top \widetilde{\vx}_h \ge (\widehat{\vr}_h +\zetavec_h)^\top \vx^\star \ge \vr^\top \vx^\star,
    \end{equation}
    where the first inequality holds due to the optimality of $\widetilde{\vx}_h$ and the second is verified under the event $\mathcal{E}$.
    
    We can now prove the statement as follows:
    \begin{align*}
        \sum_{t=1}^T (1-\sigma_{h(t)}) \left(\vr^\top\vx^\star -\vr^\top\widetilde{\vx}_{h(t)} \right) &\le \sum_{t=1}^T (1-\sigma_{h(t)}) \left( (\widehat{\vr}_{h(t)} +\zetavec_{h(t)})^\top \widetilde{\vx}_{h(t)} -\widetilde{\vx}_{h(t)} \right) \\
        &= \sum_{t=1}^T (1-\sigma_{h(t)}) \left( (\widehat{\vr}_{h(t)} +\zetavec_{h(t)} -\vr)^\top \widetilde{\vx}_{h(t)} \right) \\
        &\le \sum_{t=1}^T (1-\sigma_{h(t)}) \left( (\widehat{\vr}_{h(t)} +\zetavec_{h(t)} -\widehat{\vr}_{h(t)} +\zetavec_{h(t)})^\top \widetilde{\vx}_{h(t)} \right) \\
        &=2\sum_{t=1}^T (1-\sigma_{h(t)}) \zetavec_{h(t)}^\top \widetilde{\vx}_{h(t)},
    \end{align*}
    where the inequalities hold due to Equation~\eqref{eq:proof_regret_ucb_hard} and the the event $\mathcal{E}$.
    Employing Lemma~\ref{lem:regret_hard_one_sigma_zeta_x_concentration}, the following result holds with probability at least $1-\delta'$:
    \begin{equation*}
        \sum_{t=1}^T (1-\sigma_{h(t)}) \le 4\sqrt{\frac{\log\left(\frac{2}{\delta'}\right)}{(\rho'-2\delta')^2}} \left( 2\sqrt{KT} +\sqrt{2T\log\left(\frac{2}{\delta'}\right)} \right),
    \end{equation*}
    concluding the proof.
\end{proof}

\begin{restatable}{lemma}{InitalRoundsHardLemma}
    \label{lem:initial_rounds_hard}
    Let:
    \begin{equation*}
        T_1 \coloneqq \frac{8K\ln\left(\frac{1}{\delta'}\right)}{\lambdaMin^2(\rho'-2\delta')^2} +\frac{8}{\lambdaMin^2}\ln\left(\frac{1}{\delta'}\right) \le \frac{9K\ln\left(\frac{1}{\delta'}\right)}{\lambdaMin^2(\rho'-2\delta')^2}.
    \end{equation*}
    With probability at least $1-\delta'$, Algorithm~\ref{alg:main_hard_constr} computes $\zetavec_{h(t)}(a) \le 1$ for every $T_1 \le t \le T$ and $a \in [k]$.
\end{restatable}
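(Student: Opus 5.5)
The plan is to show that, as long as some action still has a confidence width larger than $1$, the strategy $\vx^\circ_h$ played by Algorithm~\ref{alg:main_hard_constr} puts a constant fraction (of order $\lambdaMin$) of its mass on such under-explored actions. Then a counting plus martingale argument shows that after $T_1$ rounds every action has been sampled enough. Let $n_0 \coloneqq \frac{2\ln(2/\delta')}{(\rho'-2\delta')^2}$, so that by Equation~\eqref{eq:zetavec_def} we have $\zetavec_h(a)\le 1$ if and only if $\widetilde{N}_h(a)\ge n_0$. Let $U_h\coloneqq\{a\in[K]\mid \zetavec_h(a)>1\}$ be the set of under-explored actions at epoch $h$.

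\textbf{Structural step.} Suppose $U_h\neq\varnothing$ and fix $a\in U_h$.
\begin{itemize}
\item Since $\widehat{\vg}_{h,i}(a)\le 1$, we get $\widehat{\vg}_{h,i}(a)-\zetavec_h(a)<0\le\alpha_i$ for every $i$, so the vertex $\boldsymbol{e}_a$ belongs to $\XOptSafe_h$.
\item Its optimistic reward satisfies $\widehat{\vr}_h(a)+\zetavec_h(a)>1$.
\item Every action $b\notin U_h$ has optimistic reward at most $\widehat{\vr}_h(b)+1$.
\end{itemize}
I would compare $\widetilde{\vx}_h$ with $\boldsymbol{e}_{a'}$ for $a'\in\argmax_{a\in U_h}(\widehat{\vr}_h+\zetavec_h)(a)$. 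The aim is to argue that an optimal $\widetilde{\vx}_h$ can be taken supported on $U_h$, or at least has $\sum_{a\in U_h}\widetilde{\vx}_h(a)$ bounded below by a constant. This is the step I expect to be the main obstacle. The comparison $\widehat{\vr}_h(b)+1$ versus $\widehat{\vr}_h(a)+\zetavec_h(a)$ is not automatically favorable, so the argument may need a tie-breaking convention or a slightly larger threshold on $\zetavec$ (the slack is absorbed in the constants of $T_1$). If that fails, my fallback is to use only the fact that the argmax is attained at a vertex of $\XOptSafe_h$, and to show that any vertex carrying mass on a well-explored action forces a constraint to be in $\mathcal{V}_h$ with large pessimistic value. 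This would make $\sigma_h$ large, and since $\vx^\diamond$ maximizes the minimum margin it can be shown to give mass to $U_h$.

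\textbf{Mass lower bound.} By Lemma~\ref{lem:sigma_bounds}, $1-\sigma_h\ge\frac{\lambdaMin}{1+\lambdaMin}\ge\lambdaMin/2$. Combined with the structural step, this gives $\sum_{a\in U_{h(t)}}\vx_t(a)\ge c\,\lambdaMin$ for every round $t$ with $U_{h(t)}\neq\varnothing$.

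\textbf{Counting step.} Consider the indicator $Y_t\coloneqq\mathbbm{1}\{a_t\in U_{h(t)}\}$, whose conditional mean is at least $c\lambdaMin$ whenever $U_{h(t)}\neq\varnothing$.
\begin{itemize}
\item Each action stays in $U$ only until it has accumulated about $2n_0$ samples. The factor $2$ accounts for the lag between $N_t$ and $\widetilde{N}_h$: by the epoch rule, once $N_t(a)\ge 2\widetilde{N}_h(a)$ a new epoch starts with $\widetilde{N}_{h+1}(a)=N_t(a)$. Hence $\sum_t Y_t\le 2Kn_0+K$.
\item Apply Azuma--Hoeffding (as in Lemma~\ref{lem:concentration}) to $Y_t-\mathbb{E}[Y_t\mid\mathcal{F}_{t-1}]$. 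If $U$ were still nonempty at round $T_1$, then with probability at least $1-\delta'$
\[
\sum_{t<T_1}Y_t\ge c\lambdaMin T_1-\sqrt{2T_1\ln(1/\delta')}.
\]
\item Choosing $T_1$ as in the statement makes the right-hand side exceed $2Kn_0+K$. The first term of $T_1$ handles $Kn_0/\lambdaMin$, and the second term handles the deviation, since $\sqrt{2T_1\ln(1/\delta')}\le c\lambdaMin T_1/2$ once $T_1\gtrsim\ln(1/\delta')/\lambdaMin^2$.
\end{itemize}
This is a contradiction, so $U_{h(t)}=\varnothing$ for all $t\ge T_1$, i.e., $\zetavec_{h(t)}(a)\le 1$ for every $a\in[K]$.
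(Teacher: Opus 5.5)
Your plan follows the paper's proof step for step. The paper likewise (i) claims that $\widetilde{\vx}_{h(t)}$ concentrates on an action whose width exceeds $1$ whenever such an action exists, (ii) converts this into a per-round sampling probability via $1-\sigma_h\ge\lambdaMin/(1+\lambdaMin)$ from Lemma~\ref{lem:sigma_bounds}, and (iii) closes with Azuma--Hoeffding against the deterministic cap of roughly $2n_0$ such rounds per action.

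\textbf{The gap is step (i).} You leave it open, and none of your proposed repairs closes it. The paper does not close it either: it simply asserts that $\widetilde{\vx}_{h(t)}$ is a pure strategy on an action of width $>1$. Your suspicion is correct, because that assertion is false. Take two actions:
\begin{itemize}
\item $a$, under-explored, with $\zetavec_h(a)=1.05$ and $\widehat{\vr}_h(a)\approx 0$;
\item $b$, well explored, with $\zetavec_h(b)=0.4$, $\widehat{\vr}_h(b)\approx 1$ and cost estimates near $0$;
\item a threshold $\alpha_i=1/2$.
\end{itemize}
Then $b$ has the strictly larger index and $\boldsymbol{e}_b\in\XOptSafe_h$, so $\widetilde{\vx}_h=\boldsymbol{e}_b$. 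Moreover $(\widehat{\vg}_{h,i}+\zetavec_h)^\top\boldsymbol{e}_b\le\alpha_i$, so $\mathcal{V}_h=\varnothing$, $\sigma_h=0$, and $\vx^\circ_h=\boldsymbol{e}_b$ puts zero mass on $U_h$.

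Your repairs each fail here:
\begin{itemize}
\item Tie-breaking is irrelevant, since the inequality is strict.
\item Raising the threshold to $\tau$ reproduces the same picture: a width just below $\tau$ with reward $\approx 1$ gives index $\approx 1+\tau$, beating a width just above $\tau$ with reward $\approx 0$.
\item The fallback fails because the offending vertex can be pessimistically safe, giving $\sigma_h=0$. Even when $\sigma_h>0$, nothing forces $\vx^\diamond$ to put mass on $U_h$; for example $\vx^\diamond=\boldsymbol{e}_b$ when $b$ is the unique max-margin action.
\end{itemize}

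\textbf{What your argument does prove.} Decouple the trigger from the counting threshold and trigger on ``some width exceeds $2$''. Then the top index $M=\max_a(\widehat{\vr}_h+\zetavec_h)(a)$ exceeds $2$, while every action outside $U_h$ has index at most $2$. The top action $a^\star$ therefore has width $>1$, so $\boldsymbol{e}_{a^\star}\in\XOptSafe_h$. Hence the optimum of the linear program equals $M$, and every maximizer is supported on top-index actions, all of which lie in $U_h$. Your mass bound and counting/Azuma step then give the following: with probability at least $1-\delta'$, all widths are at most $2$ after $\mathcal{O}\big(Kn_0/\lambdaMin+\ln(1/\delta')/\lambdaMin^2\big)$ rounds.

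That is weaker than the stated lemma, and reaching threshold $1$ would need a genuinely new idea. It is doubtful the threshold-$1$ conclusion holds at all. An action with width $1+\epsilon$ and near-zero estimated reward is not reselected until a high-reward competitor's width drops below roughly $\epsilon$. That takes on the order of $n_0/\epsilon^2$ rounds, which exceeds $T_1$ once $\epsilon$ is small. With $\vx^\diamond=\boldsymbol{e}_b$, the action receives no samples in the meantime.
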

\begin{proof}
    According to Equation~\eqref{eq:zetavec_def}, we have $\zetavec_{h(t)}(a) \le 1$ when:
    \begin{equation*}
        \widetilde{N}_{h(t)}(a) \ge \frac{2\ln\left(\frac{1}{\delta'}\right)}{(\rho'-2\delta')^2},
    \end{equation*}
    which by construction ($\widetilde{N}_{h(t)}(a) \le N_t(a) \le 2\widetilde{N}_{h(t)}(a)$) is verified when $N_t(a) \ge B$, where $B$ is a quantity defined as:
    \begin{equation*}
        B \coloneqq \frac{4\ln\left(\frac{1}{\delta'}\right)}{(\rho'-2\delta')^2}.
    \end{equation*}
    Let us define $T_1$ as:
    \begin{equation*}
        T_1 \coloneqq \frac{2KB}{\lambdaMin^2} +\frac{8}{\lambdaMin^2}\ln\left(\frac{1}{\delta'}\right) = \frac{8K\ln\left(\frac{1}{\delta'}\right)}{\lambdaMin^2(\rho'-2\delta')^2} +\frac{8}{\lambdaMin^2}\ln\left(\frac{1}{\delta'}\right),
    \end{equation*}
    which we assume to be strictly less than $T$ (otherwise the statement is trivially satisfied).
    In the following, we show that for every $t \in [T_1,T]$ every action has been played at least $B$ times with probability at least $1-\delta'$, therefore proving the statement.

    
    Let $Y_t \coloneqq \mathbb{I}\{\zetavec_{h(t)}(a_t) > 1 \;\vee\; \zetavec_{h(t)}(a) \le 1 \;\forall a \in [K]\}$ and $S_{T_1} \coloneqq \sum_{t\in [T_1]} Y_t$.
    $S_{T_1}$ counts the number of rounds an action with confidence interval larger than one has been played, plus some extra rounds where every confidence interval was at most one.
    Since every action can be played at most $B$ times before its confidence interval goes below one, the statement is verified if:
    \begin{equation*}
        S_{T_1} \ge KB
    \end{equation*}
    with probability at least $1-\delta$.
    In order to do so, we decompose $S_{T_1}$ as:
    \begin{equation*}
        S_{T_1} = \sum_{t \in [T_1]} Y_t = \sum_{t \in [T_1]}\mathbb{E}[Y_t | \mathcal{F}_t] +\sum_{t \in [T_1]}Z_t,
    \end{equation*}
    where $\mathcal{F}_t$ represents the filtration up to the decision point of round $t$ and  $Z_t \coloneqq Y_t - \mathbb{E}[Y_t | \mathcal{F}_t]$ for every $t \in [T_1]$.
    We will lower bound the last two summations separately.

    To lower bound $\sum_{t \in [T_1]} \mathbb{E}[Y_t | \mathcal{F}_t]$, we observe that whenever at least an action $a$ has confidence interval $\zetavec_{h(t)}(a)$ larger than one, $\widetilde{\vx}_{h(t)}$ is a pure strategy, \emph{i.e.}, there exists an action $\widetilde{a}_{h(t)}$ such that $\widetilde{\vx}_{h(t)}(\widetilde{a}_{h(t)})=1$ and $\zetavec_{h(t)}(\widetilde{a}_{h(t)}) > 1$.
    In this case we have:
    \begin{equation*}
        \mathbb{E}[Y_t | \mathcal{F}_t] \ge \mathbb{P}(a_t = \widetilde{a}_{h(t)}) = \vx_{h(t)}(\widetilde{a}_{h(t)}) \ge (1-\sigma_{h(t)}) \widetilde{\vx}_{h(t)}(\widetilde{a}_{h(t)}) \ge \lambdaMin,
    \end{equation*}
    where the first inequality holds because $\zetavec_{h(t)}(\widetilde{a}_{h(t)}) > 1$, the second by the definition of the per-round convex combination employed by Algorithm~\ref{alg:main_hard_constr}, and the last one by Lemma~\ref{lem:sigma_bounds}.
    Instead, when every action has confidence interval at most one, we have $\mathbb{E}[Y_t | \mathcal{F}_t] = 1$ by definition of $Y_t$.
    Therefore:
    \begin{equation}
        \label{eq:proof_initial_rounds_hard_expected_yt_lb}
        \mathbb{E}[Y_t | \mathcal{F}_t] \ge \lambdaMin
    \end{equation}
    for every $t \in [T_1]$.
    
    We further observe that $Z_t$ is a martingale difference over the filtration $\mathcal{F}_t$. 
    Hence, we have that:
    \begin{equation*}
        \mathbb{P}\left(\sum_{t \in [T_1]}Z_t \ge KB -\lambdaMin T_1 \right) = \mathbb{P}\left(\sum_{t \in [T_1]}Z_t \ge -(\lambdaMin T_1 -KB) \right) \ge 1-\exp\left(-\frac{(\lambdaMin T_1 -KB)^2}{2T_1} \right) \ge 1-\delta',
    \end{equation*}
    where the first inequality is due to the Azuma–Hoeffding inequality (observe that $\lambdaMin T_1 -KB >0$), and the last one due to the definitions of $T_1$ and $B$.
    Combining this lower bound with Equation~\eqref{eq:proof_initial_rounds_hard_expected_yt_lb}, we have that:
    \begin{equation*}
        \sum_{t \in [T_1]} Y_t = \sum_{t \in [T_1]}\mathbb{E}[Y_t | \mathcal{F}_t] +\sum_{t \in [T_1]}Z_t \ge KB
    \end{equation*}
    with probability at least $1-\delta'$.
    This implies that every confidence interval is at most one after $T_1$ rounds with probability at least $1-\delta'$.
    finally, we observe that
    \begin{equation*}
        T_1 = \frac{8}{\lambdaMin^2}\ln\left(\frac{1}{\delta'}\right)\left(\frac{K}{(\rho'-2\delta')^2}+1\right) \le \frac{9K\ln\left(\frac{1}{\delta'}\right)}{\lambdaMin^2(\rho'-2\delta')^2},
    \end{equation*}
    concluding the proof.
\end{proof}

\begin{restatable}{lemma}{SigmaConcentrationAuxiliaryTwo}
    \label{lem:regret_hard_sigma_concentration_auxiliary_2}
    For every epoch $h$ such that Algorithm~\ref{alg:main_hard_constr} computes $\sigma_h \ge 1/2$, it holds that:
    \begin{equation*}
        \zetavec_h^\top \vx^\circ_h \ge \frac{1}{4}\lambdaMin^2.
    \end{equation*}
\end{restatable}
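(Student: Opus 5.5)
The plan is to chain three facts: from $\sigma_h \ge 1/2$ I extract a lower bound on how much $\widetilde{\vx}_h$ looks pessimistically unsafe for some constraint; I convert that gap into a lower bound on $\zetavec_h^\top \widetilde{\vx}_h$ using the optimistic feasibility of $\widetilde{\vx}_h$; and I transfer the bound from $\widetilde{\vx}_h$ to $\vx^\circ_h$ through the convex combination. Throughout I treat epochs $h\ge1$; at $h=0$ all widths satisfy $\zetavec_0(a)\ge 1$, so the claim is immediate there.

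\emph{Step 1.} Since $\sigma_h \ge 1/2>0$, the set $\mathcal{V}_h$ is nonempty. Pick $j\in\mathcal{V}_h$ attaining the maximum in Equation~\eqref{eq:sigma_h_def}, and let $y \coloneqq \min\{(\widehat{\vg}_{h,j}+\zetavec_h)^\top\widetilde{\vx}_h,1\}-\alpha_j \ge 0$. Then $\sigma_h = y/(y+\lambda_j)$. Here $\lambda_j$ denotes the Slater margin entering the denominator, with $\lambda_j \ge \lambdaMin$, as used in Lemma~\ref{lem:sigma_bounds}. The condition $y/(y+\lambda_j)\ge 1/2$ gives $y \ge \lambda_j \ge \lambdaMin$. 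Therefore $(\widehat{\vg}_{h,j}+\zetavec_h)^\top\widetilde{\vx}_h \ge \alpha_j + \lambdaMin$.

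\emph{Step 2.} By construction $\widetilde{\vx}_h \in \XOptSafe_h$, so $(\widehat{\vg}_{h,j}-\zetavec_h)^\top\widetilde{\vx}_h \le \alpha_j$. Subtracting this from the inequality of Step 1 yields $2\zetavec_h^\top\widetilde{\vx}_h \ge \lambdaMin$, that is, $\zetavec_h^\top\widetilde{\vx}_h \ge \lambdaMin/2$. This step is deterministic and does not need the clean event.

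\emph{Step 3.} Since $\vx^\circ_h = \sigma_h\vx^\diamond + (1-\sigma_h)\widetilde{\vx}_h$ and $\zetavec_h \ge 0$, we get $\zetavec_h^\top \vx^\circ_h \ge (1-\sigma_h)\zetavec_h^\top\widetilde{\vx}_h$. Lemma~\ref{lem:sigma_bounds} gives $1-\sigma_h \ge 1 - \frac{1}{1+\lambdaMin} = \frac{\lambdaMin}{1+\lambdaMin} \ge \frac{\lambdaMin}{2}$, using $\lambdaMin\le 1$. Combining, $\zetavec_h^\top\vx^\circ_h \ge \frac{\lambdaMin}{2}\cdot\frac{\lambdaMin}{2} = \frac14\lambdaMin^2$.

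\emph{Main obstacle.} The delicate point is Step 1, specifically the truncation at $1$ inside the $\min$. The lower bound $y\ge\lambdaMin$ only controls the truncated quantity, but that suffices: $\min\{u,1\}\ge \alpha_j+\lambdaMin$ implies $u \ge \alpha_j+\lambdaMin$. I also need to be careful that the denominator of $\sigma_h$ uses a margin that is at least $\lambdaMin$. Lemma~\ref{lem:sigma_bounds} already relies on this reading, so I will adopt the same convention.
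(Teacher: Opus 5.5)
Your proof is correct and is essentially the paper's argument. Both extract $\min\{(\widehat{\vg}_{h,j}+\zetavec_h)^\top\widetilde{\vx}_h,1\}\ge\alpha_j+\lambda_j$ from $\sigma_h\ge 1/2$, combine it with $\widetilde{\vx}_h\in\XOptSafe_h$ to get $\zetavec_h^\top\widetilde{\vx}_h\ge\lambda_j/2$, and conclude via $1-\sigma_h\ge\lambdaMin/(1+\lambdaMin)$. The one streamlining is that you replace the paper's case split on whether the truncation at $1$ is active with the observation $\min\{u,1\}\le u$, and you also handle $h=0$ separately, which the paper leaves implicit.
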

\begin{proof}
    Since $\sigma_h \neq 0$, the set $\mathcal{V}_h$ is not empty.
    Let $j \in \mathcal{V}_h$ be the constraint such that:
    \begin{equation}
        \sigma_h = \frac{\min\{ (\widehat{\vg}_{h,j} +\zetavec_h)^\top \widetilde{\vx}_h,1\}-\alpha_j}{\min\{(\widehat{\vg}_{h,j} +\zetavec_h)^\top \widetilde{\vx}_h,1\}-\alpha_j +\lambda_j} \ge \frac{1}{2}. \label{eq:proof_sigma_conc_aux_2_sigmaj}
    \end{equation}
    We show that $\zetavec_h^\top \widetilde{\vx}_h \ge \frac{\lambda_j}{2}$.
    We consider two cases.
    First, when $(\widehat{\vg}_{h,j} +\zetavec_h)^\top \widetilde{\vx}_h \le 1$, we get:
    \begin{align*}
        \alpha_j +\lambda_j &\le (\widehat{\vg}_{h,j} +\zetavec_h)^\top \widetilde{\vx}_h \\
        &=(\widehat{\vg}_{h,j} -\zetavec_h)^\top \widetilde{\vx}_h +2\zetavec_h^\top \widetilde{\vx}_h \\
        &\le \alpha_j +2\zetavec_h^\top \widetilde{\vx}_h,
    \end{align*}
    where the first inequality is obtained rearranging Equation~\eqref{eq:proof_sigma_conc_aux_2_sigmaj}, and the second inequality holds as, by construction, $\widetilde{\vx}_h \in \XOptSafe_h$ and it satisfies Equation~\ref{eq:optimistic_safe_set_def}.
    As a result, $\zetavec_h^\top \widetilde{\vx}_h \ge \frac{\lambda_j}{2}$ in this case.
    The other case is that $(\widehat{\vg}_{h,j} +\zetavec_h)^\top \widetilde{\vx}_h \ge 1$.
    This implies that:
    \begin{align*}
        1 &\le (\widehat{\vg}_{h,j} +\zetavec_h)^\top \widetilde{\vx}_h \\
        &=(\widehat{\vg}_{h,j} -\zetavec_h)^\top \widetilde{\vx}_h +2\zetavec^\top \widetilde{\vx}_h \\
        &\le \alpha_j +2\zetavec^\top \widetilde{\vx}_h \\
        &\le 1-\lambda_j +2\zetavec^\top \widetilde{\vx}_h,
    \end{align*}
    where the second inequality holds holds because $\widetilde{\vx}_h \in \XOptSafe_h$, and the second as rearranging Equation~\eqref{eq:proof_sigma_conc_aux_2_sigmaj} we get $\alpha_j \le 1-\lambda_j$.
    As a result, $\zetavec_h^\top \widetilde{\vx}_h \ge \frac{\lambda_j}{2}$ in both cases.

    To conclude the proof, we lower bound $\zetavec_h^\top \vx^\circ_h$ as follows:
    \begin{align*}
        \zetavec_h^\top \vx^\circ_h &= \sigma_h \zetavec^\top \vx^\diamond +(1-\sigma_h)\zetavec^\top \widetilde{\vx}_h \\
        &\ge (1-\sigma_h)\zetavec^\top \widetilde{\vx}_h \\
        &\ge \frac{\lambdaMin}{1+\lambdaMin} \zetavec^\top \widetilde{\vx}_h \\
        &\ge \frac{\lambdaMin^2}{2(1+\lambdaMin)} \\
        &\ge \frac{1}{4}\lambdaMin^2,
    \end{align*}
    where the second inequality holds as $\sigma_h \ge 1/(1+\lambdaMin)$ by Lemma~\ref{lem:sigma_bounds}, the third as $\zetavec_h^\top \widetilde{\vx}_h \ge \frac{\lambda_j}{2}$ and $\lambda_j \ge \lambdaMin$, and the last one as $\lambdaMin \le 1$.
\end{proof}

\begin{restatable}{lemma}{SigmaConcentrationAuxiliaryOne}
    \label{lem:regret_hard_sigma_concentration_auxiliary_1}
    Consider any execution of of Algorithm~\ref{alg:main_hard_constr} and let:
    \begin{equation*}
        \mathcal{T}_C \coloneqq \left\{t \in [T] \mid  \frac{\lambdaMin^2}{C} \le \zetavec_{h(t)}(a_t) \le 1  \right\}
    \end{equation*}
    for some $C > 1$. The cardinality of $\mathcal{T}_C$ is bounded by:
    \begin{equation*}
            |\mathcal{T}_C| \le \frac{16C^2}{\lambdaMin^4}\frac{\log\left(\frac{2}{\delta'}\right)}{(\rho'-2\delta')^2}.
    \end{equation*}
\end{restatable}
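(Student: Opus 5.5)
The plan is to turn the condition defining $\mathcal{T}_C$ into an upper bound on the number of samples of the action being played, and then count pulls using the doubling structure of the epochs. Write $L \coloneqq \log(\nicefrac{2}{\delta'})/(\rho'-2\delta')^2$.

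\textbf{Step 1 (from $\zetavec$ to a sample count).} By Equation~\eqref{eq:zetavec_def}, $\zetavec_{h(t)}(a_t) \ge \lambdaMin^2/C$ is equivalent to
\begin{equation*}
\max\{\widetilde{N}_{h(t)}(a_t),1\} \le \frac{2C^2}{\lambdaMin^4}\, L \eqqcolon M .
\end{equation*}
The upper condition $\zetavec_{h(t)}(a_t) \le 1$ gives $\widetilde{N}_{h(t)}(a_t) \ge 2L \ge 1$. So on $\mathcal{T}_C$ the $\max$ is inactive and $1 \le \widetilde{N}_{h(t)}(a_t) \le M$.

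\textbf{Step 2 (from epoch counts to round counts).} By the epoch-termination rule (Line~\ref{line:soft_constrained_confront}, shared by Algorithm~\ref{alg:main_hard_constr}), every round satisfies $N_t(a) \le 2\widetilde{N}_{h(t)}(a)$; this was already used in Lemma~\ref{lem:regret_hard_one_sigma_zeta_x_concentration}. Hence every $t\in\mathcal{T}_C$ has $N_t(a_t) \le 2M$. Since $N_t(a_t)$ increases by one each time $a_t$ is played, a fixed action $a$ can be the played action in rounds of $\mathcal{T}_C$ at most $2M+1 \le 4M$ times. Here $2M \ge 1$ because $C>1$, $\lambdaMin\le 1$ and $L\ge 1$, which absorbs the $+1$.

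\textbf{Step 3 (aggregation over actions) — the main obstacle.} Step~2 gives at most $4M = \frac{8C^2}{\lambdaMin^4}L$ rounds of $\mathcal{T}_C$ per action. The stated bound is $\frac{16C^2}{\lambdaMin^4}L$, i.e.\ $8M$ with no factor $K$. Summing Step~2 over actions directly gives $4KM$, which is the weaker $K$-dependent bound, so a different aggregation is required.

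What I would try first is a charging argument that uses the rule that an epoch ends as soon as \emph{any} action doubles its count. The aim is to charge all rounds of $\mathcal{T}_C$ to a single doubling chain whose length is at most about $2M$, giving at most $2\cdot 4M = 8M$ rounds in total.

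I expect this step to fail. If, after $T_1$, several actions $a$ each sit in the window $2L \le \widetilde{N}_h(a) \le M$ and are played in turn, each of them independently contributes about $M$ rounds to $\mathcal{T}_C$. The total then naturally scales like $KM$, and I see no mechanism in Algorithm~\ref{alg:main_hard_constr} that prevents this.

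So I would try the single-chain charging first. If it fails, I would conclude that the statement can only be obtained with a factor $K$, namely $|\mathcal{T}_C| \le \frac{8KC^2}{\lambdaMin^4}L$. I would then check that carrying this extra $K$ into the subsequent regret lemmas only changes the $T$-independent term $\gamma$.
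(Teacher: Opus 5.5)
Your Steps 1--2 are correct, and so is your diagnosis in Step 3. It also applies to the paper's own proof, which does not establish the $K$-free bound either.

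\textbf{The paper's route and where the $K$ is lost.} The paper argues by self-bounding rather than by counting. On $\mathcal{T}_C$ one has $\zetavec_{h(t)}(a_t)\ge \lambdaMin^2/C$, so $|\mathcal{T}_C|\le \frac{C}{\lambdaMin^2}\sum_{t\in\mathcal{T}_C}\zetavec_{h(t)}(a_t)$. It then uses $N_t(a)\le 2\widetilde{N}_{h(t)}(a)$ and the per-action estimate $\sum_{t\in\mathcal{T}_C}1/\sqrt{\max\{1,N_t(a_t)\}}\le 2\sqrt{K|\mathcal{T}_C|}$, as in Lemma~\ref{lem:concentration}. With your $L$, this gives
\[
|\mathcal{T}_C|\le \frac{4C}{\lambdaMin^2}\sqrt{L\,K\,|\mathcal{T}_C|}
\quad\Longrightarrow\quad
|\mathcal{T}_C|\le \frac{16C^2}{\lambdaMin^4}\,K\,L .
\]
The last line of the paper's proof silently drops this $K$, and that slip is the only source of the $K$-free statement. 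So the paper's argument, done correctly, proves exactly your fallback bound, with constant $16$ where you get $8$.

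\textbf{Comparing the two routes.} Both lose the factor $K$ at the same place: aggregation over actions. The paper does it through Cauchy--Schwarz, you through a per-action pigeonhole on the distinct values of $N_t(a)$. Your argument is more direct and exploits the window $2L\le\widetilde{N}_{h(t)}(a_t)\le M$ explicitly. The paper's argument uses only the lower threshold $\lambdaMin^2/C$, never the condition $\zetavec\le 1$, and recycles the $\sum_s s^{-1/2}$ estimate used elsewhere.

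\textbf{Drop the single-chain charging attempt: the $K$-free bound is false.} Take any execution in which every arm is pulled more than $M$ times, for example $K$ statistically identical arms and a long horizon. Since $N_t(a)/2\le\widetilde{N}_{h(t)}(a)\le N_t(a)$, every round with $a_t=a$ and $4L\le N_t(a)\le M$ lies in $\mathcal{T}_C$. That gives roughly $M-4L$ rounds per arm, so $|\mathcal{T}_C|\gtrsim K(M-4L)$. At $C=5$, the value used in Lemma~\ref{lem:regret_hard_sigma_concentration}, one has $M\ge 50L$. Hence $K(M-4L)$ exceeds the claimed $8M$ as soon as $K\ge 9$.

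\textbf{Downstream effect.} As you predicted, carrying the factor $K$ forward is harmless. The corrected bound yields $|\mathcal{T}_{\text{L}}|\le 1600KL/\lambdaMin^6$ in Lemma~\ref{lem:regret_hard_sigma_concentration}. Since $9K/\lambdaMin^2+1600K/\lambdaMin^6\le 1610K/\lambdaMin^6$, that lemma's displayed conclusion, which already carries a factor $K$, stays valid verbatim. The $T$-independent term $\gamma$ is therefore unaffected.
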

\begin{proof}
    We first observe that due to the definition of $\mathcal{T}_C$:
    \begin{equation*}
        \sum_{t \in \mathcal{T}_C} \zetavec_{h(t)}(a_t) \ge |\mathcal{T}_C| \frac{\lambdaMin^2}{C}
    \end{equation*}
    Therefore:
    \begin{align*}
        |\mathcal{T}_C| &\le \frac{C}{\lambdaMin^2} \sum_{t \in \mathcal{T}_C} \zetavec_{h(t)}(a_t) \\
        &=\frac{C}{\lambdaMin^2} \sum_{t \in \mathcal{T}_C} \sqrt{\frac{2\log\left(\frac{2}{\delta'}\right)}{\min\{\widetilde{N}_{h(t)}(a_t),1\}(\rho'-2\delta')^2}} \\
        &\le \frac{C}{\lambdaMin^2} \sqrt{2} \sum_{t \in \mathcal{T}_C} \sqrt{\frac{2\log\left(\frac{2}{\delta'}\right)}{\min\{N_t(a_t),1\}(\rho'-2\delta')^2}} \\
        &= \frac{C}{\lambdaMin^2} 2 \sum_{t \in \mathcal{T}_C} \sqrt{\frac{\log\left(\frac{2}{\delta'}\right)}{\min\{N_t(a_t),1\}(\rho'-2\delta')^2}} \\
        &\le 4\frac{C}{\lambdaMin^2} \sqrt{\frac{\log\left(\frac{2}{\delta'}\right)}{(\rho'-2\delta')^2} K|\mathcal{T}_C|},
    \end{align*}
    where the second inequality holds since $\widetilde{N}_t(a_t) \le 2N_t(a)$, and the last one due to the fact that $\sum_{t \in [\mathcal{T}_C]} \frac{1}{\min\{N_t(a_t),1\}} \le 2\sqrt{K|\mathcal{T}_C|}$ (the argument is analogous to the one provided in the proof of Lemma~\ref{lem:concentration}).
    Solving the inequality with respect to $|\mathcal{T}_C|$ yields:
    \begin{equation*}
        |\mathcal{T}_C| \le \frac{16C^2}{\lambdaMin^4}\frac{\log\left(\frac{2}{\delta'}\right)}{(\rho'-2\delta')^2},
    \end{equation*}
    concluding the proof.
\end{proof}

\begin{restatable}{lemma}{RegretTwoHardLemma}
    \label{lem:regret_hard_sigma_concentration}
    The sequence of coefficients $\{\sigma_{h(0)},\sigma_{h(1)},\dots,\sigma_{h(T)}\}$ computed by Algorithm~\ref{alg:main_hard_constr} satisfies:
    \begin{equation*}
        \sum_{t \in [T]} \sigma_{h(t)} \le \frac{8}{\lambdaMin} \sqrt{\frac{\log\left(\frac{2}{\delta'}\right)}{(\rho'-2\delta')^2}} \left( 2\sqrt{KT} +\sqrt{2T\log\left(\frac{2}{\delta'}\right)} \right) + \frac{1610K\log\left(\frac{2}{\delta'}\right)}{\lambdaMin^6(\rho'-2\delta')^2}
    \end{equation*}
    with probability at least $1-2\delta'$. 
\end{restatable}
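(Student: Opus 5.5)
We split the rounds according to the size of $\sigma_{h(t)}$ and of the confidence widths, using Lemma~\ref{lem:initial_rounds_hard}, Lemma~\ref{lem:regret_hard_sigma_concentration_auxiliary_2} and Lemma~\ref{lem:regret_hard_sigma_concentration_auxiliary_1} for the lower-order term and Lemma~\ref{lem:concentration} for the $\sqrt{T}$ term. Concretely, we write
\[
\sum_{t\in[T]}\sigma_{h(t)} \le T_1 + \sum_{t> T_1:\,\sigma_{h(t)}<1/2}\sigma_{h(t)} + \bigl|\{t> T_1:\sigma_{h(t)}\ge 1/2\}\bigr|,
\]
where $T_1$ is as in Lemma~\ref{lem:initial_rounds_hard}. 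On its event, which has probability at least $1-\delta'$, every width satisfies $\zetavec_{h(t)}(a)\le 1$ for $t\ge T_1$. The first term is $\mathcal{O}(K\log(2/\delta')/(\lambdaMin^2(\rho'-2\delta')^2))$ and is absorbed into the additive constant.

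\textbf{Rounds with small $\sigma_{h(t)}$.} The key step is an upper bound on $\sigma_h$ that is linear in the widths. For the constraint $j$ attaining the maximum in \eqref{eq:sigma_h_def}, the numerator satisfies
\[
\min\{(\widehat{\vg}_{h,j}+\zetavec_h)^\top\widetilde{\vx}_h,1\}-\alpha_j \le 2\zetavec_h^\top\widetilde{\vx}_h,
\]
because $\widetilde{\vx}_h\in\XOptSafe_h$, as in the proof of Lemma~\ref{lem:regret_hard_sigma_concentration_auxiliary_2}. The denominator is at least $\lambda_j\ge\lambdaMin$, so $\sigma_h\le 2\zetavec_h^\top\widetilde{\vx}_h/\lambdaMin$. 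If $\sigma_h<1/2$, then $\widetilde{\vx}_h\le 2(1-\sigma_h)\widetilde{\vx}_h\le 2\vx^\circ_h$ componentwise, hence $\sigma_h\le 4\zetavec_h^\top\vx^\circ_h/\lambdaMin$. Summing over these rounds, replacing $\widetilde{N}_{h(t)}$ by $N_t/2$, and applying Lemma~\ref{lem:concentration} with failure probability $\delta'$ gives the stated $\frac{8}{\lambdaMin}\sqrt{\cdot}\,(2\sqrt{KT}+\sqrt{2T\log(2/\delta')})$ term. This is the same computation as in Lemma~\ref{lem:regret_hard_one_sigma_zeta_x_concentration}.

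\textbf{Rounds with $\sigma_{h(t)}\ge 1/2$ (main obstacle).} By Lemma~\ref{lem:regret_hard_sigma_concentration_auxiliary_2}, every such epoch satisfies $\zetavec_h^\top\vx^\circ_h\ge\lambdaMin^2/4$. This is a statement about an expectation over $a_t\sim\vx^\circ_h$, whereas Lemma~\ref{lem:regret_hard_sigma_concentration_auxiliary_1} counts rounds in which the played action itself satisfies $\zetavec_{h(t)}(a_t)\ge\lambdaMin^2/C$. Bridging this gap is the hard step.

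I would first try Markov/averaging: since $\zetavec\le 1$ after $T_1$, with probability at least $\lambdaMin^2/8$ the sampled action has width at least $\lambdaMin^2/8$. Thus the indicator $Y_t=\mathbb{I}\{\zetavec_{h(t)}(a_t)\ge\lambdaMin^2/8\}$ has conditional mean at least $\lambdaMin^2/8$ on these rounds. Lemma~\ref{lem:regret_hard_sigma_concentration_auxiliary_1} with $C=8$ bounds $\sum_t Y_t$ by $1024\log(2/\delta')/(\lambdaMin^4(\rho'-2\delta')^2)$.

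A Freedman/Azuma argument on $\sum_t(\mathbb{E}[Y_t\mid\mathcal{F}_t]-Y_t)$ over these rounds, with failure probability $\delta'$, then shows that their number is at most $\frac{8}{\lambdaMin^2}$ times that bound plus a deviation term. That deviation term is absorbed similarly to the proof of Lemma~\ref{lem:initial_rounds_hard}. The result is an additive $\mathcal{O}(K\log(2/\delta')/(\lambdaMin^6(\rho'-2\delta')^2))$, which matches the $1610$ constant up to bookkeeping.

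A union bound over the two concentration events then gives probability at least $1-2\delta'$, possibly after merging the event of Lemma~\ref{lem:initial_rounds_hard} into one of these; the bound must be arranged so that the total failure probability stays at $2\delta'$.
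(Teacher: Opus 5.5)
Your decomposition matches the paper's. The paper splits $[T]$ into three sets: $\mathcal{T}_{\text{I}}$ (some width exceeds $1$, controlled by Lemma~\ref{lem:initial_rounds_hard}), $\mathcal{T}_{\text{S}}$ ($\sigma_{h(t)}\le 1/2$), and $\mathcal{T}_{\text{L}}$. It proves the same inequality $\sigma_h\le\frac{2}{\lambdaMin}\zetavec_h^\top\widetilde{\vx}_h$. It also treats the small-$\sigma$ rounds as you do, via Lemma~\ref{lem:regret_hard_one_sigma_zeta_x_concentration}, which is exactly your $(1-\sigma_h)\widetilde{\vx}_h\le\vx^\circ_h$ step followed by Lemma~\ref{lem:concentration}.

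The genuine difference is in the large-$\sigma$ rounds. The paper also combines Lemma~\ref{lem:regret_hard_sigma_concentration_auxiliary_2} with inverse Markov. It then argues in expectation, writing $\mathbb{E}[\sum_t\mathbb{I}\{t\in\mathcal{T}_C\}\mid\mathcal{T}_{\text{L}}]\ge|\mathcal{T}_{\text{L}}|\lambdaMin^2/4$ and comparing with the pathwise bound of Lemma~\ref{lem:regret_hard_sigma_concentration_auxiliary_1}. This is why it spends no extra failure probability on $\mathcal{T}_{\text{L}}$. That step uses the tower property with the $\sigma$-field of the random set $\mathcal{T}_{\text{L}}$, which is not contained in $\mathcal{F}_t$, since membership of later rounds depends on $a_t$ itself. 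So the claimed lower bound on the conditional expectation is not justified. Your martingale argument over the predictable set of large-$\sigma$ rounds is the rigorous form of the same idea. It costs one more concentration event and a larger constant.

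To make your version airtight, watch three things.

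\textbf{The deviation must scale with the number of large-$\sigma$ rounds, not with $T$.} Any of the following works:
Freedman, since the conditional variance is at most $\sum_t p_t$ over those rounds, with $p_t\coloneqq\mathbb{E}[Y_t\mid\mathcal{F}_t]$;
the multiplicative bound $\sum_t p_t\le\frac{e}{e-1}\bigl(\sum_t Y_t+\log(1/\delta)\bigr)$;
Azuma for the martingale stopped at the $n_0$-th such round.
Plain Azuma over $[T]$ leaves a $\frac{8}{\lambdaMin^2}\sqrt{2T\log(1/\delta')}$ term that is not in the statement. Also select those rounds by the $\mathcal{F}_t$-measurable condition $\sigma_{h(t)}\ge 1/2$ and $\max_a\zetavec_{h(t)}(a)\le 1$, rather than $t>T_1$. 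Then the Markov lower bound $p_t\ge\lambdaMin^2/8$ holds without conditioning on the event of Lemma~\ref{lem:initial_rounds_hard}.

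\textbf{You can keep the failure probability at $2\delta'$.} The one-sided Azuma step in Lemma~\ref{lem:concentration} is stated with $\log(2/\delta_{\textnormal{A}})$, so it actually fails with probability at most $\delta_{\textnormal{A}}/2$. The small-$\sigma$ event therefore costs only $\delta'/2$. Run your large-$\sigma$ concentration at level $\delta'/2$. The total is then $\delta'+\delta'/2+\delta'/2=2\delta'$, and the $\sqrt{T}$ term is unchanged.

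\textbf{Your route does not give $1610$.} With $C=8$ you get roughly $\frac{8e}{e-1}\cdot 1024\approx 1.3\cdot 10^4$ in front of $K\log(2/\delta')/(\lambdaMin^6(\rho'-2\delta')^2)$. You should not try to force $1610$, because the paper's $1600$ is not supported either:
it rests on $\frac{(C-4)\lambdaMin^2}{4(C-\lambdaMin^2)}\ge\frac{\lambdaMin^2}{4}$, which fails at $C=5$ (the left side is at most $\lambdaMin^2/16$);
it uses Lemma~\ref{lem:regret_hard_sigma_concentration_auxiliary_1} as stated, whose proof actually yields an extra factor $K$.
Either argument establishes the stated bound only with a larger absolute constant in the lower-order term.
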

\begin{proof}
    First, we argue that $\sigma_h$ satisfies:
    \begin{equation}
        \label{eq:proof_sigma_h_concnentration_upper}
        \sigma_h \le \frac{2}{\lambdaMin} \zetavec_h^\top \widetilde{\vx}_h
    \end{equation}
    for every epoch $h$.
    Equation~\eqref{eq:proof_sigma_h_concnentration_upper} is trivially satisfied when $\mathcal{V}_h = \emptyset$.
    Instead, when $\mathcal{V}_h \neq \emptyset$, there exists some $j \in \mathcal{V}_h$ such that:
    \begin{subequations}
    \begin{align}
        \sigma_h &= \frac{\min\{ (\widehat{\vg}_{h,j} +\zetavec_h)^\top \widetilde{\vx}_h,1\}-\alpha_j}{\min\{(\widehat{\vg}_{h,j} +\zetavec_h)^\top \widetilde{\vx}_h,1\}-\alpha_j +\lambda_j} \nonumber\\
        &\le \frac{(\widehat{\vg}_{h,j} +\zetavec_h)^\top \widetilde{\vx}_h-\alpha_j}{(\widehat{\vg}_{h,j} +\zetavec_h)^\top \widetilde{\vx}_h-\alpha_j +\lambda_j} \nonumber\\
        &\le \frac{(\widehat{\vg}_{h,j} +\zetavec_h)^\top \widetilde{\vx}_h-\alpha_j}{\lambda_j} \label{seq:sigma_ineq_1} \\
        &= \frac{(\widehat{\vg}_{h,j} -\zetavec_h)^\top \widetilde{\vx}_h +2\zetavec_h^\top \widetilde{\vx}_h -\alpha_j}{\lambda_j} \nonumber\\
        &\le \frac{\alpha_j +2\zetavec_h^\top \widetilde{\vx}_h -\alpha_j}{\lambda_j} \label{seq:sigma_ineq_2} \\
        &\le \frac{2}{\lambdaMin} \zetavec_h^\top \widetilde{\vx}_h \nonumber
    \end{align}
    \end{subequations}
    where Equation~\eqref{seq:sigma_ineq_1} holds since $(\widehat{\vg}_{h,j} +\zetavec_h)^\top \widetilde{\vx}_h-\alpha_j > 0$ for $j \in \mathcal{V}_h$, and Equation~\eqref{seq:sigma_ineq_2} as $\widetilde{\vx}_h \in \XOptSafe_h$ defined by Equation~\eqref{eq:optimistic_safe_set_def} by construction.
    Consequently, Equation~\eqref{eq:proof_sigma_h_concnentration_upper} is verified.

    We now partition the time horizon into three disjoint sets $\mathcal{T}_{\text{I}}$, $\mathcal{T}_{\text{S}}$ and $\mathcal{T}_{\text{L}}$.
    Specifically, we let $\mathcal{T}_{I} \coloneqq \{t \in [T] \mid \exists a \in [K] : \zetavec_{h(t)}(a) > 1\}$ be the set of initial rounds where the confidence intervals are large.
    Then, we let $\mathcal{T}_{\text{S}} \coloneqq \{t \in [T] \setminus \mathcal{T}_{\text{I}} \mid \sigma_{h(t)} \le 1/2\}$ and $\mathcal{T}_{\text{L}} \coloneqq [T] \setminus (\mathcal{T}_{\text{I}} \cup \mathcal{T}_{\text{S}})$.
    We will bound the sum of $\sigma_{h(t)}$ over these three set separately.

    For the first set $\mathcal{T}_{\text{I}}$, the following holds with probability at least $1-\delta'$:
    \begin{equation}
        \label{eq:proof_sum_ti_bound}
        \sum_{t \in \mathcal{T}_{\text{I}}}\sigma_{h(t)} \le |\mathcal{T}_{\text{I}}| \le \frac{9K\ln\left(\frac{1}{\delta'}\right)}{\lambdaMin^2(\rho'-2\delta')^2},
    \end{equation}
    where we exploited the fact that $\sigma_{h(t)} \le 1$ by Lemma~\ref{lem:sigma_bounds} and the upper bound on $|\mathcal{T}_{\text{I}}|$ provided by Lemma~\ref{lem:initial_rounds_hard}.

    For the second set $\mathcal{T}_{\text{S}}$, we observe the following:
    \begin{align*}
        \sum_{t \in \mathcal{T}_{\text{S}}} \sigma_{h(t)} &\le \frac{2}{\lambdaMin} \sum_{t \in \mathcal{T}_{\text{S}}} \zetavec_{h(t)}^\top \widetilde{\vx}_{h(t)} \\
        &\le \frac{4}{\lambdaMin} \sum_{t \in \mathcal{T}_{\text{S}}} (1-\sigma_{h(t)}) \zetavec_{h(t)}^\top \widetilde{\vx}_{h(t)}, \\
    \end{align*}
    where we applied Equation~\eqref{eq:proof_sigma_h_concnentration_upper} and the fact that $1-\sigma_{h(t)} \ge \frac{1}{2}$ for $t \in \mathcal{T}_{\text{S}}$. 
    As a result, by Lemma~\ref{lem:regret_hard_one_sigma_zeta_x_concentration} we can prove that:
    \begin{equation}
        \label{eq:proof_sum_ts_bound}
        \sum_{t \in \mathcal{T}_{\text{S}}} \sigma_{h(t)} \le \frac{8}{\lambdaMin} \sqrt{\frac{\log\left(\frac{2}{\delta'}\right)}{(\rho'-2\delta')^2}} \left( 2\sqrt{KT} +\sqrt{2T\log\left(\frac{2}{\delta'}\right)} \right)
    \end{equation}
    with probability at least $1 -\delta'$.

    Finally, we consider the set of rounds $\mathcal{T}_{\text{L}} = [T] \setminus (\mathcal{T}_{\text{I}} \cup \mathcal{T}_{\text{S}})$.
    As $\sigma_{h(t)} \in [0,1]$ by Lemma~\ref{lem:sigma_bounds}, we can upper bound $\sum_{t \in \mathcal{T}_{\text{L}}} \sigma_h(t)$ with $|\mathcal{T}_{\text{L}}|$.
    We thus provide an upper bound on $|\mathcal{T}_{\text{L}}|$.
    In order to do so, consider the auxiliary set:
    \begin{equation*}
        T_C \coloneqq \left\{t \in [T] \mid \frac{\lambdaMin^2}{C} \le \zetavec_{h(t)}(a_t) \le 1 \right\}
    \end{equation*}
    for some constant $C \ge 5$ to be defined later.
    Let $\mathcal{F}_t$ be the filtration generated by the information up to the decision of $\vx_t$.
    For any $t \in \mathcal{T}_{\text{L}}$, by the inverse Markov inequality we have:
    \begin{align*}
        \mathbb{P}\left(t \in \mathcal{T}_C | \; t \in \mathcal{T}_{\text{L}}, \mathcal{F}_t \right) &= \mathbb{P}\left( \zetavec_{h(t)}(a_t) \ge \frac{\lambdaMin^2}{C} | \; t \in \mathcal{T}_{\text{L}}, \mathcal{F}_t \right) \\
        &\ge \frac{\mathbb{E}[\zetavec_{h(t)}(a_t)|\mathcal{F}_t] -\frac{\lambdaMin^2}{C}}{1-\frac{\lambdaMin^2}{C}} \\
        &\ge \frac{ \frac{\lambdaMin^2}{4} -\frac{\lambdaMin^2}{C}}{1-\frac{\lambdaMin^2}{C}} \\
        &=\frac{ (C-4)\lambdaMin^2}{4(C-\lambdaMin^2)} \ge \frac{\lambdaMin^2}{4}
    \end{align*}
    where the first inequality holds by the inverse Markov inequality, the second by Lemma~\ref{lem:regret_hard_sigma_concentration_auxiliary_2}, and the third as $C \ge 5$ and $\lambdaMin \le 1$.
    Therefore, we have:
    \begin{align*}
        \mathbb{E} \left[\sum_{t \in [T]} \mathbb{I}\{t \in \mathcal{T}_C\} \mid \mathcal{T}_{\text{L}}\right] &\ge \mathbb{E} \left[\sum_{t \in \mathcal{T}_{\text{L}}} \mathbb{I}\{t \in \mathcal{T}_C\} \mid \mathcal{T}_{\text{L}}\right] \\
        &=\sum_{t \in \mathcal{T}_{\text{L}}} \mathbb{E} \left[ \mathbb{I}\{t \in \mathcal{T}_C\} \mid \mathcal{T}_{\text{L}}\right] \\
        &=\sum_{t \in \mathcal{T}_{\text{L}}} \mathbb{E} \left[\mathbb{E} \left[ \mathbb{I}\{t \in \mathcal{T}_C\} \mid \mathcal{F}_t\right] \mid \mathcal{T}_{\text{L}}\right] \\
        &=\sum_{t \in \mathcal{T}_{\text{L}}} \mathbb{E} \left[ \mathbb{P}(t \in \mathcal{T}_C | \mathcal{F}_t) \mid \mathcal{T}_{\text{L}}\right] \\
        &=\sum_{t \in \mathcal{T}_{\text{L}}} \mathbb{P}(t \in \mathcal{T}_C | \mathcal{F}_t, t \in \mathcal{T}_{\text{L}})  \\
        &\ge \sum_{t \in \mathcal{T}_{\text{L}}} \frac{\lambdaMin^2}{4} \\
        &= |\mathcal{T}_{\text{L}}|\frac{\lambdaMin^2}{4}.
    \end{align*}
    At the same time, thanks to Lemma~\ref{lem:regret_hard_sigma_concentration_auxiliary_1}, we have:
    \begin{equation*}
        \mathbb{E} \left[\sum_{t \in [T]} \mathbb{I}\{t \in \mathcal{T}_C\} \mid \mathcal{T}_{\text{L}}\right] \le \frac{16C^2}{\lambdaMin^4}\frac{\log\left(\frac{2}{\delta'}\right)}{(\rho'-2\delta')^2}. 
    \end{equation*}
    As a result:
    \begin{equation*}
        \frac{16C^2}{\lambdaMin^4}\frac{\log\left(\frac{2}{\delta'}\right)}{(\rho'-2\delta')^2} \ge |\mathcal{T}_{\text{L}}|\frac{\lambdaMin^2}{4}
    \end{equation*}
    Taking $C=5$ we get:
    \begin{equation*}
        |\mathcal{T}_{\text{L}}| \le \frac{1600}{\lambdaMin^6}\frac{\log\left(\frac{2}{\delta'}\right)}{(\rho'-2\delta')^2}.
    \end{equation*}
    Consequently:
    \begin{equation}
        \label{eq:proof_sum_tl_bound}
        \sum_{t \in \mathcal{T}_{\text{L}}} \sigma_{h(t)} \le |\mathcal{T}_{\text{L}}| \le \frac{1600}{\lambdaMin^6}\frac{\log\left(\frac{2}{\delta'}\right)}{(\rho'-2\delta')^2}.
    \end{equation}

    Finally, if we put together Equation~\ref{eq:proof_sum_ti_bound} and Equation~\eqref{eq:proof_sum_ts_bound}, which independently hold with probability at least $1-\delta'$, and Equation~\eqref{eq:proof_sum_tl_bound}, we have:
    \begin{align*}
        \sum_{t \in [T]} \sigma_{h(t)} &= \sum_{t \in \mathcal{T}_{\text{I}}} \sigma_{h(t)} + \sum_{t \in \mathcal{T}_{\text{S}}} \sigma_{h(t)} + \sum_{t \in \mathcal{T}_{\text{L}}} \sigma_{h(t)} \\
        &\le \frac{9K\ln\left(\frac{1}{\delta'}\right)}{\lambdaMin^2(\rho'-2\delta')^2} + \frac{8}{\lambdaMin} \sqrt{\frac{\log\left(\frac{2}{\delta'}\right)}{(\rho'-2\delta')^2}} \left( 2 \sqrt{KT} +\sqrt{2T\log\left(\frac{2}{\delta'}\right)} \right) + \frac{1600}{\lambdaMin^6}\frac{\log\left(\frac{2}{\delta'}\right)}{(\rho'-2\delta')^2} \\
        &\le \frac{8}{\lambdaMin} \sqrt{\frac{\log\left(\frac{2}{\delta'}\right)}{(\rho'-2\delta')^2}} \left( 2\sqrt{KT} +\sqrt{2T\log\left(\frac{2}{\delta'}\right)} \right) + \frac{1610K\log\left(\frac{2}{\delta'}\right)}{\lambdaMin^6(\rho'-2\delta')^2}
    \end{align*}
    with probability at least $1-2\delta'$.
\end{proof}

\begin{restatable}{lemma}{HardConstraintRegret}
    \label{lem:regret_hard_under_good_events}
    Conditioned on the event $\mathcal{E}$, the regret of Algorithm~\ref{alg:main_hard_constr} satisfies:
    \begin{equation*}
        R_T \le \frac{12}{\lambdaMin} \sqrt{\frac{\log\left(\frac{2}{\delta'}\right)}{(\rho'-\delta')^2}} \left( 2\sqrt{KT} +\sqrt{2T\log\left(\frac{2}{\delta_{\textnormal{A}}}\right)} \right) + \frac{1610K\log\left(\frac{2}{\delta'}\right)}{\lambdaMin^6(\rho'-2\delta')^2}
    \end{equation*}
    with probability at least $1-3\delta'$.
\end{restatable}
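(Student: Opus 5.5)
Under $\mathcal{E}$, the played strategy in round $t$ is $\vx_t=\vx^\circ_{h(t)}=\sigma_{h(t)}\vx^\diamond+(1-\sigma_{h(t)})\widetilde{\vx}_{h(t)}$, with $\sigma_{h(t)}\in[0,1]$ by Lemma~\ref{lem:sigma_bounds}. Using this convex combination, I would decompose the per-round regret as
\begin{equation*}
\vr^\top\vx^\star-\vr^\top\vx_t=(1-\sigma_{h(t)})\,\vr^\top(\vx^\star-\widetilde{\vx}_{h(t)})+\sigma_{h(t)}\,\vr^\top(\vx^\star-\vx^\diamond).
\end{equation*}
The factor $\vr^\top(\vx^\star-\vx^\diamond)$ lies in $[-1,1]$ because rewards are in $[0,1]$. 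Summing over $t$ therefore gives
\begin{equation*}
R_T\le\sum_{t\in[T]}(1-\sigma_{h(t)})\,\vr^\top(\vx^\star-\widetilde{\vx}_{h(t)})+\sum_{t\in[T]}\sigma_{h(t)}.
\end{equation*}

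I would bound the first sum with Lemma~\ref{lem:regret_hard_xstar_xtilde}. Under $\mathcal{E}$ and with probability at least $1-\delta'$, it is at most $4\sqrt{\log(2/\delta')/(\rho'-2\delta')^2}\,\bigl(2\sqrt{KT}+\sqrt{2T\log(2/\delta')}\bigr)$. That lemma relies on optimism: $\vx^\star\in\XOptSafe_h$ by the argument of Lemma~\ref{lem:safe_set}, so $(\widehat{\vr}_h+\zetavec_h)^\top\widetilde{\vx}_h\ge\vr^\top\vx^\star$. Since $\lambdaMin\le1$, this bound is at most $4/\lambdaMin$ times the same square-root factor.

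I would bound the second sum with Lemma~\ref{lem:regret_hard_sigma_concentration}. With probability at least $1-2\delta'$, it is at most $\frac{8}{\lambdaMin}$ times the same square-root factor, plus the lower-order term $\frac{1610K\log(2/\delta')}{\lambdaMin^6(\rho'-2\delta')^2}$.

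Adding the two bounds yields the leading constant $4+8=12$ in front of $\frac{1}{\lambdaMin}$, together with the additive $\lambdaMin^{-6}$ term, which is exactly the form in the statement. A union bound over the events of the two lemmas gives total failure probability at most $\delta'+2\delta'=3\delta'$. I would take $\delta_{\textnormal{A}}=\delta'$ so the $\sqrt{2T\log(2/\delta_{\textnormal{A}})}$ term matches the one appearing in the lemmas.

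The main obstacle is bookkeeping: the probabilities and the square-root factors must line up across the lemmas. Lemma~\ref{lem:regret_hard_sigma_concentration} is stated without conditioning on $\mathcal{E}$, but internally it uses Lemma~\ref{lem:initial_rounds_hard} and Lemma~\ref{lem:regret_hard_one_sigma_zeta_x_concentration}. I would check that its high-probability events are the same Azuma-type events (each with failure probability $\delta'$), so the union bound really costs only $3\delta'$. Finally, the statement writes $(\rho'-\delta')^2$ in the denominator where the lemmas have $(\rho'-2\delta')^2$; I would treat this as a typo, since $(\rho'-\delta')^2\ge(\rho'-2\delta')^2$ would make the claimed bound strictly smaller than what the lemmas deliver.
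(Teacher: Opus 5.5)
Your proposal is correct and follows the paper's proof essentially verbatim. It uses the same convex-combination split into $\sum_t \sigma_{h(t)}\vr^\top(\vx^\star-\vx^\diamond)\le\sum_t\sigma_{h(t)}$ and $\sum_t(1-\sigma_{h(t)})\vr^\top(\vx^\star-\widetilde{\vx}_{h(t)})$, bounds them via Lemma~\ref{lem:regret_hard_sigma_concentration} and Lemma~\ref{lem:regret_hard_xstar_xtilde}, and unions to failure $2\delta'+\delta'=3\delta'$, which holds whether or not the underlying Azuma events coincide, so that extra check is unnecessary. Your reading of $(\rho'-\delta')^2$ and $\delta_{\textnormal{A}}$ in the statement as typos for $(\rho'-2\delta')^2$ and $\delta'$ matches what the cited lemmas actually give.
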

\begin{proof}
    We decompose the regret as follows:
    \begin{align*}
        R_T = \sum_{t=1}^T \left( \vr^\top (\vx^\star - \vx_t)\right) &= \sum_{t=1}^T \left( \vr^\top (\vx^\star - \vx^\circ_{h(t)}\right) \\
        &= \sum_{t=1}^T \left( \vr^\top (\vx^\star - \sigma_{h(t)} \vx^\diamond +(1-\sigma_{h(t)})\widetilde{\vx}_{h(t)} \right) \\
        &=\sum_{t=1}^T \left( \vr^\top (\sigma_{h(t)}\vx^\star +(1-\sigma_{h(t)})\vx^\star -\sigma_{h(t)} \vx^\diamond +(1-\sigma_{h(t)})\widetilde{\vx}_{h(t)} \right) \\
        &=\underbrace{\sum_{t=1}^T \sigma_{h(t)}\vr^\top (\vx^\star -\vx^\diamond)}_{\circled{1}} +\underbrace{\sum_{t=1}^T(1-\sigma_{h(t)})\vr^\top(\vx^\star -\widetilde{\vx}_{h(t)})}_{\circled{2}}.
    \end{align*}

    To bound $\circled{1}$, we employ Lemma~\ref{lem:regret_hard_sigma_concentration} as follows:
    \begin{align*}
        \sum_{t=1}^T \sigma_{h(t)}\vr^\top (\vx^\star -\vx^\diamond) &= \sum_{t=1}^T \sigma_{h(t)}(\vr^\top\vx^\star -\vr^\top\vx^\diamond) \\
        &\le \sum_{t=1}^T \sigma_{h(t)} \\
        &\le \frac{8}{\lambdaMin} \sqrt{\frac{\log\left(\frac{2}{\delta'}\right)}{(\rho'-2\delta')^2}} \left( 2\sqrt{KT} +\sqrt{2T\log\left(\frac{2}{\delta'}\right)} \right) + \frac{1610K\log\left(\frac{2}{\delta'}\right)}{\lambdaMin^6(\rho'-2\delta')^2},
    \end{align*}
    where the first inequality follows from the fact that $0 \le \vr^\top\vx^\diamond \le \vr^\top\vx^\star \le 1$, and the second one holds with probability $1-2\delta'$.

    To bound $\circled{2}$, we employ Lemma~\ref{lem:regret_hard_xstar_xtilde}, which states that
    \begin{align*}
        \sum_{t=1}^T(1-\sigma_{h(t)})\vr^\top(\vx^\star -\widetilde{\vx}_{h(t)}) &\le 4\sqrt{\frac{\log\left(\frac{2}{\delta'}\right)}{(\rho'-\delta')^2}} \left( 2\sqrt{KT} +\sqrt{2T\log\left(\frac{2}{\delta'}\right)} \right)
    \end{align*}
    with probability at least $1-\delta'$.

    Putting all together with an union bound:
    \begin{align*}
        R_T\le \frac{12}{\lambdaMin} \sqrt{\frac{\log\left(\frac{2}{\delta'}\right)}{(\rho'-\delta')^2}} \left( 2\sqrt{KT} +\sqrt{2T\log\left(\frac{2}{\delta'}\right)} \right) + \frac{1610K\log\left(\frac{2}{\delta'}\right)}{\lambdaMin^6(\rho'-2\delta')^2},
    \end{align*}
    with probability at least $1-3\delta'$.
\end{proof}

\RegretHardHighProb*
\begin{proof}
    By Lemma~\ref{lem:replicability_and_epochs_hard}, Algorithm~\ref{alg:main_hard_constr} has at most $H \le K \lceil \log_2(T) \rceil$ epochs (excluding epoch $h=0$, where no estimator is computed from stochastic samples).
    At every epoch $h = 1,2,\dots,H$, the Algorithm employs $\ReprMean$ to compute $(m+1)$ estimators for each action $a \in [K]$.
    According to Lemma~\ref{lem:replicable_estiamtor}, each estimator are within distance $\zetavec_h(a)$ from the estimated quantity with probability at least $1-\delta'$.
    By a union bound over the at most $K\lceil \log_2(T) \rceil$ epochs and $K(m+1)$ estimators for epoch, the probability of event $\mathcal{E}$ is at least $1-(m+1)K^2\lceil \log_2(T) \rceil\delta'$.
    As a result, by Lemma~\ref{lem:regret_hard_under_good_events}, the regret of Algorithm~\ref{alg:main_hard_constr} is:
    \begin{equation*}
        R_T \le \frac{12}{\lambdaMin} \sqrt{\frac{\log\left(\frac{2}{\delta'}\right)}{(\rho'-\delta')^2}} \left( 2\sqrt{KT} +\sqrt{2T\log\left(\frac{2}{\delta'}\right)} \right) + \frac{1610K\log\left(\frac{2}{\delta'}\right)}{\lambdaMin^6(\rho'-2\delta')^2} 
    \end{equation*}
    with probability at least $1-3\delta'-(m+1)K^2\lceil \log_2(T) \rceil\delta'$.
    We notice that: 
    \begin{equation*}
        3\delta'+(m+1)K^2\lceil \log_2(T) \rceil\delta' \le 2(m+1)K^2\lceil \log_2(T) \rceil\delta',
    \end{equation*}
    as $K^2 \ge 4$ and $\log_2(T) \ge 1$.
    Since Algorithm~\ref{alg:main_hard_constr} sets $\delta' \coloneqq \frac{\delta'}{2(m+1)K^2\lceil \log_2(T) \rceil}$ and $\rho' \coloneqq \frac{\rho'}{2(m+1)K^2\lceil \log_2(T) \rceil}$, simple computations show that its regret can be upper bounded as:
    \begin{align*}
        R_T &\le  \widetilde{\mathcal{O}}\left(\frac{mK^2}{\lambdaMin} \sqrt{\frac{\log\left(\frac{1}{\delta}\right)}{(\rho-2\delta)^2}} \left( \sqrt{KT} +\sqrt{T\log\left(\frac{1}{\delta}\right)} \right) + \frac{m^2K^5\log\left(\frac{1}{\delta}\right)\log(T)}{\lambdaMin^6(\rho-2\delta)^2} \right) \\
        &=\widetilde{\mathcal{O}}\left(\frac{1}{\lambdaMin(\rho-2\delta)} \log\left(\frac{1}{\delta}\right)mK^2\sqrt{KT} + \frac{m^2K^5}{\lambdaMin^6(\rho-2\delta)^2}\log\left(\frac{1}{\delta}\right)\log(T) \right)
    \end{align*}
    with probability at least $1-\delta$.
\end{proof}
\end{document}